\def\eqref#1{Equation~(\ref{#1})}
\def\eref#1{(\ref{#1})}
\def\1{\bm{1}}
\def\rvp{{\mathbf{p}}}
\def\rvs{{\mathbf{s}}}
\def\rvw{{\mathbf{w}}}
\def\rvx{{\mathbf{x}}}
\def\rvz{{\mathbf{z}}}
\DeclareMathAlphabet{\mathsfit}{\encodingdefault}{\sfdefault}{m}{sl}
\SetMathAlphabet{\mathsfit}{bold}{\encodingdefault}{\sfdefault}{bx}{n}
\DeclareMathOperator{\sign}{sign}
\setlist[itemize]{leftmargin=0.5cm}
\newcommand{\beq}{\begin{equation}}
\newcommand{\eeq}{\end{equation}}
\newcommand{\beqa}{\begin{eqnarray}}
\newcommand{\eeqa}{\end{eqnarray}}
\newcommand{\beqan}{\begin{eqnarray*}}
\newcommand{\eeqan}{\end{eqnarray*}}
\renewcommand{\1}[1]{\mathbb{1}\{#1\}}
\newlength{\minipagewidth}
\newcommand{\dfn}{\stackrel{\triangle}{=}}
\newtheorem{predefinition}{Definition}
\newenvironment{definition}[1]
{
\begin{predefinition}[\emph{#1}]
}{
\end{predefinition}
}
\newtheorem{theorem}{Theorem}
\newtheorem{proposition}{Proposition}
\newtheorem{corollary}{Corollary}
\newenvironment{proof}[1]
 {\noindent%
 {\bf \boldmath Proof #1:}}
 {\hfill $\Box$}
\renewcommand{\hat}{\widehat}
\renewcommand{\phi}{\varphi}
\title{Selective Matching Losses - Not All Scores Are Created Equal}
\author{%
  Gil I.\ Shamir \\
  Google DeepMind\\
  \texttt{gshamir@google.com} \\
  \And
  Manfred K.\ Warmuth \\
  Google Research \\
  \texttt{manfred@google.com} \\
}
\begin{document}

\maketitle

\begin{abstract}
Learning systems match predicted \emph{scores\/} to observations over some domain.  Often, it is critical to produce accurate predictions in some subset (or \emph{region\/}) of the domain, yet less important to accurately predict in other regions.
We construct \emph{selective matching\/} loss functions by design of increasing \emph{link\/} functions over score domains.  A matching loss is an integral over the link.  A link defines loss sensitivity as function of the score, emphasizing high slope high \emph{sensitivity\/} regions over flat ones. Loss asymmetry drives a model and resolves its underspecification to predict better in high sensitivity regions where it is more important, and to distinguish between high and low importance regions.  A large variety of selective scalar losses can be designed with scaled and shifted Sigmoid and hyperbolic sine links.  Their properties, however, do not extend to multi-class.  Applying them per dimension lacks \emph{ranking sensitivity} that assigns importance according to class score ranking.  Utilizing \emph{composite Softmax\/} functions, we develop a framework for multidimensional selective losses. We overcome limitations of the standard Softmax function, that is good for classification, but not for distinction between adjacent scores.  Selective losses have substantial advantage over traditional losses in applications with more important score regions, including dwell-time prediction, retrieval, ranking with either \emph{pointwise\/}, contrastive \emph{pairwise\/}, or \emph{listwise\/} losses, distillation problems, and fine-tuning alignment of Large Language Models (LLMs).
\end{abstract}


\newcommand{\fix}{\marginpar{FIX}}
\newcommand{\new}{\marginpar{NEW}}

\addtocontents{toc}{\protect\setcounter{tocdepth}{0}}

\section{Introduction}
\label{sec:intro}
Traditional losses used for training machine learning models are rarely tuned to minimize cost associated with different types of prediction errors in downstream applications that use these predictions.  Usually, such costs are not uniform over the prediction score domain, making accurate predictions in some score \emph{regions\/} substantially more important than in others.  Standard losses can give good (classification) soft hinge loss distinction between classes and between important and unimportant regions, but not between scores inside an important region (e.g., cross-entropy), or give uniform sensitivity over a score range (e.g., square).  Loss re-weighting can give different per-region sensitivities, but cannot penalize high focus predictions of low focus scores.  Matching losses \citep{auer1995exponentially, helmbold1995worst, kivinen1997relative}, by design, provide flexibility to \emph{selectively} allocate loss sensitivity over the score range, to emphasize focus regions, discount low focus ones, and determine the amount of emphasis and discount. They can give high slope losses between scores in focus regions, between scores in focus and low focus regions, and not inside low focus regions.

A \emph{link function\/} defines a matching loss.  Its slope quantifies loss sensitivity.  The loss gradient is the difference between the link at the predicted and at the observed scores.  Matching losses are \emph{proper\/}, achieving a minimum where the link of the prediction equals the expectation of the link at observed scores.  Matching losses are \emph{Bregman divergences} \citep{bregman1967relaxation,censor1981iterative} between predicted and observed scores with respect to (w.r.t.)\ the \emph{primitive\/} anti-derivative function of the link. The Bregman divergence gives the difference between the primitive function at the predicted score and its first-order Taylor expansion around the observed score.  With non-decreasing link functions, the loss is convex.  It gives the additional increase in area (or in sensitivity) covered by the link function from the observed score to its prediction. A loss can be designed by choosing a valid monotonically non-decreasing link with the desired \emph{region sensitivities\/}, with steeper slopes at higher importance regions.  This powerful recipe allows designing losses with analytical gradients even if there is no analytical expression for the actual loss.

\emph{Scalar\/} selective matching losses are described in Section~\ref{sec:scalar}.  A \emph{Sigmoid\/} link function can be used to define \emph{three\/} different sensitivity types, also mapping scores to probabilities.  The Sigmoid concatenates a concave curve to the right of a convex one.  Shifting it, emphasizing a different part, provides high sensitivity to high or to low scores, in addition to the unshifted middle range (low norm) sensitivity.  A fourth type of sensitivity, emphasizing high norms, is obtained by reversing concatenation order with hyperbolic sine $\sinh(\cdot)$ shaped links (or an inverse of the Sigmoid).  The notion of a link is extended to general monotonic non-decreasing functions that do not necessarily map scores to probabilities.  Arbitrary region sensitivities can be obtained with more general selections of piecewise link functions. 

The \emph{Softmax\/} function is typically applied for multidimensional classification. It gives good distinction between regions.  However, it is insufficient to distinguish between adjacent scores inside the same (important) region.  Unlike Sigmoid, Softmax cannot produce multiple sensitivity types due to its \emph{shift invariance\/}.  Multi-class selective losses can decompose into scalar losses each with required region sensitivity. This, however, does not achieve \emph{ranking sensitivity\/}, which allocates (constellation shift invariant) score importance to each class based on its ranking relatively to scores of other classes.  \emph{Composite Softmax\/}, proposed in Section~\ref{sec:composite_softmax}, provides flexibility for both region and ranking sensitivities. Section~\ref{sec:multi} utilizes composite Softmax to design multi-class selective matching losses. A similar composite Sigmoid can be applied to define scalar selective losses as described in Section~\ref{sec:scalar_amplify}.

Selective losses can be used to highly penalize erring when accurate score predictions are important, discounting errors with little effect.  For example, in real life situations, overestimating an age of an adult may be costly, yet, for a child, underestimation is more risky.  Model predictions may be biased by abundant nuisance training examples with unimportant labels without loss selectivity.  Ranking problems (e.g., for relevance in information retrieval, \citet{manning2008introduction, liu2009learning, liu2011learning}), dwell-time prediction \citep{adwordsbetter, barbieri2016improving, yi2014beyond}, and knowledge distillation \citep{hinton2015distilling} can largely benefit from losses with high sensitivity only on high scores. For example, it is critical to distinguish between high scores distilled in training for items likely to be recommended in recommendation systems, but not as critical for low scores of items unlikely to be included in a recommendation list.  Extending the range of high score sensitivity can combine \emph{mode-seeking\/} \citep{ghosh_kl_divergence} (focusing only on the winner) and \emph{mode-covering\/} (allowing a range of high scores) behaviors, promoting winners, yet, without suppressing runner ups (as with standard losses) in large language models (LLMs) \emph{alignment\/} to application or human feedback \citep{christiano2017deep,ziegler2020finetuning,rafailov2023direct}.  Capping high score sensitivity at some level can be useful in dwell-time prediction to not over-emphasize idled users.  High score \emph{norm\/} sensitivity can be applied to contrastive ranking losses giving order invariant robust preference to high score differences in learning-to-rank applications \citep{burges2005learning,burges2010ranknet,cao2007learning}.

\section{Scalar Selective Matching Losses}
\label{sec:scalar}

A \emph{matching loss\/} matches an estimate $\hat{s}$ to a true (or observed) \emph{score\/} $s \in \mathbb{S}$. It is defined with its gradient in terms of a \emph{link function\/} $h(z)$, the derivative of a \emph{primitive\/} $H(z)$. The slope $h'(z)$ locally quantifies sensitivity.  A specific value in domain $\mathbb{S}$ is denoted by $s$, and $z\in\mathbb{S}$ is a function variable. The score can be a logit, a probability, or any statistics to match, predicted by some model, network, or LLM. The gradient of the matching loss w.r.t.\ $\hat{s}$ is the difference of the link at the estimate and at $s$
\beq
\label{eq:scalar_matching_loss_grad}
g_m \left (\hat{s}, s \right ) \dfn
\frac{\partial \mathcal{L}_m( \hat{s}, s)}{\partial \hat{s}}
\dfn h(\hat{s}) - h(s).
\eeq
The loss is the integral on the difference; additional cumulative sensitivity between $s$ and predicted $\hat{s}$,
\beq
\label{eq:matching_loss_def}
\mathcal{L}_m\left (\hat{s}, s \right ) =
\int_s^{\hat{s}} \left [ h(z) - h(s) \right ] dz =
H\left (\hat{s} \right ) - H(s) - \left (\hat{s}-s \right ) \cdot h(s) 
\dfn D_{H} (\hat{s}, s).
\eeq

Interestingly, \eref{eq:matching_loss_def} is the \emph{Bregman divergence\/} $D_{H}(\hat{s},s)$; the difference between $H(\cdot)$ at $\hat{s}$ and its first-order Taylor expansion around $s$.
A monotonic non-decreasing $h(z)$ gives \emph{convex\/} $H(z)$ and loss
$\mathcal{L}_m(\hat{s},s)$.
The loss is the additional area covered by $h(z)$ from $s$ to $\hat{s}$ (Figure~\ref{fig:matching_loss_demo}). 
\eqref{eq:scalar_matching_loss_grad} gives an easy recipe for designing losses (even not analytically expressible) to sensitivity requirements.  A steep link segment gives high sensitivity with a sharp loss, and a flatter region gives smaller losses.  The gradient can also be a function of the observed score, with a per-label link for discrete labels.

\begin{wrapfigure}{t}{6cm}
\vspace{-6mm}
    \centering
    \includegraphics[width=0.42\textwidth]{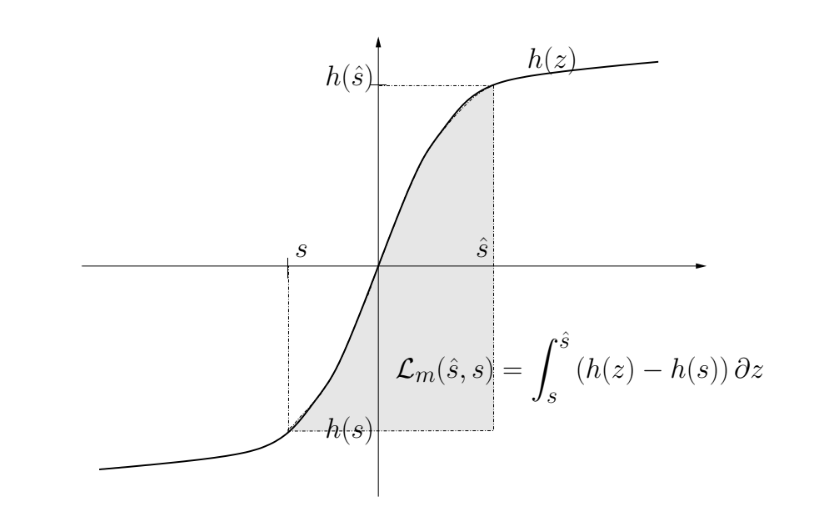}
    \vspace{-6mm}
    \renewcommand{\figurename}{Fig.}
    \caption{\small{Matching loss as area under the link.}}
    \renewcommand{\figurename}{Figure}
    \label{fig:matching_loss_demo}
    \vspace{-1.5cm}
\end{wrapfigure}
A Sigmoid $\sigma(\cdot)$ link (below) with a \emph{SoftPlus\/} primitive can be shifted to vary sensitivity regions.
\begin{align}
 h(z) &= \sigma[\alpha (z - \beta)] \dfn \frac{1}{1 + e^{-\alpha(z-\beta)}}, ~~~~
 \label{eq:sigmoid_link} \\
 H(z) &= \frac{1}{\alpha} \cdot \log \left (1+ e^{\alpha (z - \beta)} \right ).
 \label{eq:sigmoid_softplus}
\end{align}
For brevity, $x\dfn\alpha(z-\beta)$ for a $\beta$ shifted, $\alpha$ scaled function.  A Sigmoid induced scalar matching loss is
\beq
\label{eq:loss_match_bin}
\mathcal{L}_m \left (\hat{s}, s \right ) = 
\frac{1}{\alpha} \cdot \log \left (1+ e^{\alpha (\hat{s} - \beta)} \right ) -
\frac{1}{\alpha} \cdot \log \left (1+ e^{\alpha (s - \beta)} \right ) - (\hat{s} - s) \cdot \sigma \left [ \alpha(s - \beta) \right ].
\eeq
A Cross-Entropy (CE) loss gives equal gradients
(Appendix~\ref{app:CE}),
though, generally, standard losses do not replicate selective losses' sensitivities. An identity link gives a (shift invariant) square loss
\beq
 \label{eq:identity_link}
 h(z) = \alpha (z - \beta), ~~~~
 H(z) = \frac{1}{\alpha} \cdot \frac{[\alpha(z-\beta)]^2}{2}.
\eeq
Other functions can be used as links, such the exponent,
\beq
\label{eq:exp_link}
h(z) = \exp(\alpha (z - \beta)), ~~~~
H(z) = \frac{1}{\alpha} \cdot e^{\alpha (z - \beta)}.
\eeq
In \eref{eq:sigmoid_link}, \eref{eq:identity_link}-\eref{eq:exp_link}, the link can be scaled by $d x/ dz = \alpha$ instead of normalizing $H(z)$ by $\alpha$.  The choice above though allows viewing the link in \eref{eq:sigmoid_link} as a probability.

{\bf Sensitivity:} For $\hat{s} = s + \delta$ with  $\delta\rightarrow 0$,  $\delta^{-2} \cdot \mathcal{L}_m\left (\hat{s}, s \right ) \rightarrow 0.5 h'(s)$. Hence, $h'(z)$ quantifies local loss \emph{sensitivity} (see Appendix~\ref{app:sensitivitiy}); the loss scale for a small movement from the minimum.  Sensitivity can also be interpreted in terms of model \emph{underspecification}, as the movement induced on optimal values of the model parameters due to effects not directly modeled by the model parameters. The loss forces the model to explain high sensitivity regions quantified by larger movements over explaining (and being able to model) low ones quantified with smaller movements.  Different sensitivity models can be used, all functions of the link or its gradient.  Below, we define a prediction bias based one.

{\bf Bias Underspecification SensiTivity (BUST):} Consider a model in which some collective real feature score $w \in \mathbb{S}=[S_m,S_M]$ ($w\sim U(\mathbb{S})$) from all features known to the model gives the observed score $z$ except in a small interval $w \in [w_u-\tau/2, w_u+\tau/2] \dfn \mathbb{S}_u$ of length $\tau \rightarrow 0$ ($\tau>0$) in which $z=w+d$, for some bias $d$.  The model has no features that can distinguish $\mathbb{S}_u$ from $\mathbb{S}$, and can only predict a collective score $\hat{z}=\hat{w}$ from the features it has specified.
\begin{definition}{BUST}
The normalized offset added to the optimal model prediction due to an observed underspecified biased score $z=w+d$ in interval $\mathbb{S}_u$ is
\beq
 \label{eq:bust_def}
 \Delta_{\text{BUST}}(w_u) \dfn
  \Delta_{\text{BUST}}(w_u, \mathbb{S}) \dfn
 \lim_{\tau \rightarrow 0} \left \{
 \frac{1}{\tau} \cdot \left (\hat{z} - w \right ) \right \}.
\eeq
\end{definition}
\begin{definition}{BLUST}
The {\bf local} bias sensitivity is BUST with $d\rightarrow 0$ normalized by $d$,
\beq
 \label{eq:blust_def}
 \Delta_{\text{BLUST}}(w_u) \dfn
 \Delta_{\text{BLUST}}(w_u,\mathbb{S}) \dfn
 \lim_{\substack{\tau \rightarrow 0\\ d \rightarrow 0}} \left \{
 \frac{1}{d \tau} \cdot \left (\hat{z} - w \right ) \right \}.
\eeq
\end{definition}
\begin{theorem}
\label{theorem:bust}
Let a continuously differentiable $h(z)$ have a compact range $[h(S_m), h(S_M)]$.  Then,
\beq
\label{eq:bust_blust}
\Delta_{\text{BUST}}(w_u) = 
\frac{h(w_u + d) - h(w_u)}{h(S_M) - h(S_m)}, ~~~~
\Delta_{\text{BLUST}}(w_u) = 
\frac{h'(w_u)}{h(S_M)-h(S_m)}.
\eeq
\end{theorem}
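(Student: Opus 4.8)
The plan is to make the underspecification concrete. Since the model cannot resolve $\mathbb{S}_u$, its only free degree of freedom is a single \emph{global} additive offset $c$, so I would write the optimal prediction as $\hat{z}=\hat{w}=w+c$ and let the biased region pull the optimal $c$ away from zero. The offset appearing in the BUST/BLUST definitions is then exactly $\hat{z}-w=c$, and the whole task reduces to computing how $c$ depends on $\tau$ and $d$ in the small-$\tau$ limit.

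First I would invoke properness of the matching loss. Minimizing the population loss $\E_{w\sim U(\mathbb{S})}[\mathcal{L}_m(w+c,z)]$ over $c$ and differentiating using \eqref{eq:scalar_matching_loss_grad} gives the stationarity condition $\E_w[h(w+c)]=\E_w[h(z)]$, i.e.\ the link of the prediction matches the expected link of the observation. Writing $z=w$ off $\mathbb{S}_u$ and $z=w+d$ on $\mathbb{S}_u$, the right-hand side equals $\frac{1}{S_M-S_m}\big[\int_{S_m}^{S_M} h(w)\,dw+\int_{\mathbb{S}_u}(h(w+d)-h(w))\,dw\big]$, so the only effect of the bias is the extra term $\Phi(\tau)\dfn\int_{\mathbb{S}_u}(h(w+d)-h(w))\,dw$. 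Because $h$ is continuous, $\Phi(\tau)=\tau\,(h(w_u+d)-h(w_u))+o(\tau)$ as the interval shrinks around $w_u$.

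Next I would linearize the left-hand side. By the fundamental theorem of calculus, $\int_{S_m+c}^{S_M+c} h-\int_{S_m}^{S_M} h=\int_{S_M}^{S_M+c} h-\int_{S_m}^{S_m+c} h$, whose derivative in $c$ at $c=0$ is $h(S_M)-h(S_m)$. The stationarity condition thus becomes, implicitly, $[h(S_M)-h(S_m)]\,c+o(c)=\Phi(\tau)$; since $h(S_M)>h(S_m)$ (compact range, with $h$ non-decreasing and non-trivial) the implicit function theorem gives $c=\Phi(\tau)/(h(S_M)-h(S_m))+o(\Phi(\tau))$, so that $c\to 0$ and the higher-order terms are genuinely negligible. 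Dividing by $\tau$ and letting $\tau\to 0$ yields $\Delta_{\text{BUST}}(w_u)=(h(w_u+d)-h(w_u))/(h(S_M)-h(S_m))$, and then dividing by $d$ and letting $d\to 0$ — using continuous differentiability so that $(h(w_u+d)-h(w_u))/d\to h'(w_u)$ — gives the BLUST formula.

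The main obstacle is not the calculus but pinning down the model: one must argue that ``no features to distinguish $\mathbb{S}_u$'' forces the correction to be a single global shift rather than a region-local one, and must take the two limits $\tau\to 0$ and (for BLUST) $d\to 0$ in the correct order, verifying that the self-consistent solution for $c$ makes every quadratic-and-higher term in the two linearizations $o(\tau)$ so they drop out cleanly. The compact-range hypothesis is exactly what supplies the nonzero, finite normalizer $h(S_M)-h(S_m)$ and validates the boundary linearization.
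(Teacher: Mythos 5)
Your proof is correct and follows essentially the same route as the paper: you write the optimal prediction as a global offset $\hat{z}=w+c$, impose the zero-expected-gradient (properness) condition over $U(\mathbb{S})$, and linearize both sides to obtain $c\,[h(S_M)-h(S_m)]=\tau\,[h(w_u+d)-h(w_u)]+o(\cdot)$ before taking $\tau\to 0$ (and then $d\to 0$ for BLUST), exactly as the paper does in passing from (\ref{eq:bust_grad_int}) to (\ref{eq:bust_result}). Your explicit appeal to the fundamental theorem of calculus and the implicit function theorem only spells out more carefully the paper's terser justification that $\tau\to 0$ forces $\hat{\delta}\to 0$ and that higher-order terms vanish.
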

\begin{proof}{}
Let the optimal prediction of the model be $\hat{z}=w+\hat{\delta}$, with unnormalized bias displacement $\hat{\delta}$ due to underspecification.  The gradient of the matching loss for some $z$ is
\beq
 \label{eq:bust_grad}
 g_m (\hat{z}, z) = h(\hat{z}) - h(z) =
 h(w+\hat{\delta}) - h(w) + h(w) - h \left [ w + I_u(w) \cdot d \right ] \dfn g(\hat{\delta}, w)
\eeq
where $I_u(w)=1$ if $w\in \mathbb{S}_u$ and $0$, otherwise.  Integrating over the gradient with the given uniform distribution $U(\mathbb{S})$ should equal $0$ for the optimal $\hat{\delta}$, yielding
\beqa
 \nonumber
 \int_{S_m}^{S_M} \left [h(w+\hat{\delta}) - h(w) \right ] dw &=&
 \int_{S_m}^{S_M} \left \{h\left [w+I_u(w)\cdot d \right ] - h(w) \right \} dw   \Rightarrow \\
 H(S_M + \hat{\delta}) - H(S_M) -
 \left [H(S_m + \hat{\delta}) - H(S_m) \right ] &=&
 \nonumber
 H \left (w_u + \frac{\tau}{2} + d \right ) - 
 H \left (w_u - \frac{\tau}{2} + d \right )  \\
 & & -
 \left [
  H \left (w_u + \frac{\tau}{2}\right )
  - H \left (w_u - \frac{\tau}{2}\right )
 \right ].
 \label{eq:bust_grad_int}
\eeqa
Applying $\tau \rightarrow 0$, due to the compact range of $h(z)$, gives also $\hat{\delta} \rightarrow 0$, giving by definition from \eref{eq:bust_grad_int},
\beq
 \label{eq:bust_result}
 \hat{\delta} \cdot \left [h(S_M) - h(S_m) \right ] =
 \tau \cdot \left [h(w_u+d) - h(w_u) \right ].
\eeq
Dividing both sides by the norm of the link span, normalizing by $\tau$ gives BUST in \eqref{eq:bust_blust}. Because the numerator of BUST is always bounded by the denominator, if $\tau\rightarrow 0$, it guarantees that $\hat{\delta}\rightarrow 0$, justifying the earlier step.  If $d\rightarrow 0$, applying the definition of a derivative to the right hand side of \eqref{eq:bust_result} and normalizing by $d$ gives BLUST in \eref{eq:bust_blust}.
\end{proof}

Theorem~\ref{theorem:bust} establishes connection between the link and prediction sensitivity.  Local model sensitivity to underspecification is proportional to $h'(z)$ (normalized by the link span).  More global sensitivity with wider underspecification is similarly proportional to link differences.  Underspecification in large regions can be quantified by the primitive $H(\cdot)$ as in 
\eqref{eq:bust_grad_int}.  Other sensitivity models can be applied, such as \emph{Linear Local Underspecification SensiTivity (LLUST)\/} with $\hat{z}=w\cdot (1 + \hat{\delta})$, where unspecified weights apply a dot-product to the feature values.  Observed score distributions, different from uniform, can also be assumed on $\mathbb{S}$.  In all, higher sensitivity is achieved with higher link slopes or differences.  We continue showing how to design selective sensitivity to specific score regions.

{\bf Selective sensitivity:} The link choice determines sensitivity regions, with the following properties:
\begin{itemize}
\vspace{-0.2cm}
 \item good distinction (with large loss and high slope) among scores in high sensitivity regions,
\vspace{-0.2cm}
 \item discounted distinction (with flat loss) among scores in low sensitivity regions,
\vspace{-0.2cm}
 \item strong distinction (with high loss) between scores in the high and the low sensitivity regions.
\end{itemize}
Region loss re-weighting can further enhance selectivity.  Re-weighting by itself though is insufficient to achieve all three properties (Appendix~\ref{app:reweight}).  Figure~\ref{fig:selective_bin_loss_regions} shows links, their gradients $h'(z)$, and selective losses with four sensitivity types formed by convex and concave shapes; the first three shifts of $\sigma(\cdot)$ as first reported in \citep{hristakeva2010nonlinear} (in the context of computational chemistry).  An unshifted Sigmoid gives \emph{low score norm\/} sensitivity, with sharply convex loss at $s=0$.  Overestimating $s=3$ and underestimating $s=-3$ give flat losses.  Estimating high scores as low or vice versa gives high losses.  A right-shifted Sigmoid gives a competitive convex exponential \emph{high score\/} sensitivity with steep losses on both sides of high observed scores, penalizing overestimation higher than underestimation.  Overestimating lower observed scores ($s\in\{-3,0\}$) gives high losses, but underestimation, low flat ones. The loss attempts to accurately predict important high scores, discounting low ones that may be due to nuisance irrelevant training examples, yet strongly distinguishing between the two populations, disallowing high predictions of low scores and low predictions of high.  A left-shifted Sigmoid gives a concave anti-exponential link with \emph{low score\/} sensitivity, with losses that mirror the right shifted Sigmoid, exchanging roles of high and low scores.  The concave link segment penalizes underestimation higher than overestimation and its steep region can be applied to push towards higher scores. A Sigmoid concatenates a concave curve to a convex one. Inverting it reverses order with a hyperbolic sine; $\sinh(\cdot)$, shape (also obtained by $\text{arcsin}(\cdot)$, or $\sigma^{-1}(\cdot)$) giving \emph{high norm\/} sensitivity.

\begin{figure}[t]
    \centering
    \makebox[\textwidth][c]{
    \includegraphics[width=1.25\textwidth,height=0.2\textheight]{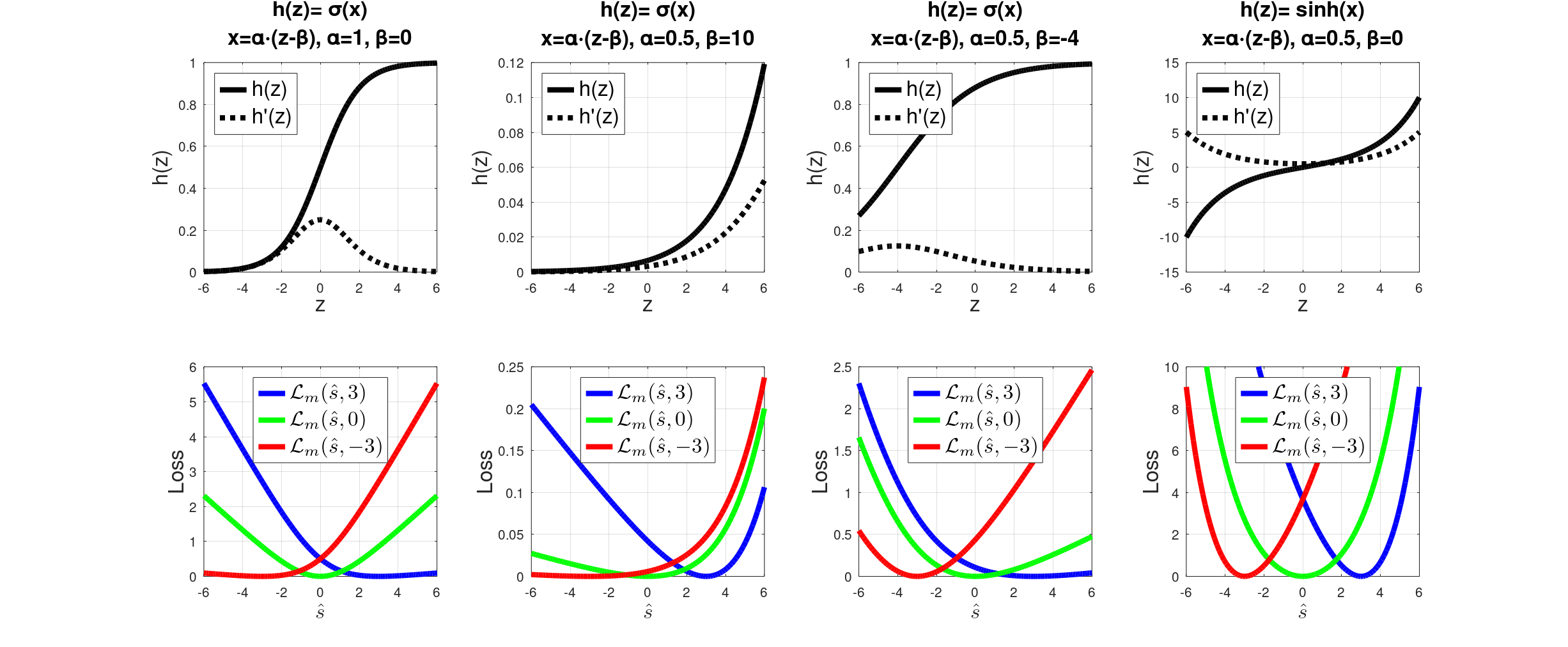}}
    \caption{\small{Selective loss sensitivity types. Top: links $h(z)$ and $h'(z)$ (shifted/scaled Sigmoid - first three, $\sinh(z)$ - last) vs. $z$. Bottom: Selective losses vs.\ $\hat{s}$, $s\in\{-3,0,3\}$ ($-3$ red, $0$ green, $3$ blue).}}
    \label{fig:selective_bin_loss_regions}
    \vspace{-.4cm}
\end{figure}

Bijection links $h(z)\in[0,1]$ like $\sigma(z)$
map scores to probabilities, with equal gradients to CE losses.  Gradients, however, are capped, limiting sensitivity.  Links as $e^x$, $-e^{-x}$ and $\sinh(x)$; $x \dfn \alpha(z-\beta)$, for the latter sensitivity profiles in Figure~\ref{fig:selective_bin_loss_regions} depart from probabilistic mapping enhancing sensitivity variation.  CE losses no longer replicate their gradients, but design must preserve numerical stability. 

{\bf Norm sensitivity:}
High norm selectivity is useful for contrastive pairwise losses in ranking and LLM alignment applications.  It enhances large pairwise differences, suppressing low ones, potentially driven by label noise.  Links, asymmetric around the origin, are robust to pair ordering.  High norm sensitivity can demote unfavorable low scores in addition to promoting high ones.  Selectivity of one side can be enhanced over the other by shifting a $\sinh(\cdot)$ link, yet keeping lowest sensitivity for low norms. Unless scores are already in a compact domain (e.g., probabilities), a $\sinh(\cdot)$ link can be capped in a finite range to avoid instabilities.

{\bf Piecewise Sensitivity:}
\emph{Piecewise\/} monotonically non-decreasing $h(z)$ with constant, linear, polynomial, or other segments directly control sensitivity at any point $z\in \mathbb{S}$. 
Step or sign functions, with $H(z)$ the \emph{Rectified Linear Unit\/} (ReLU) \citep{nair10rectified} or $|z|$, respectively, give a \emph{Hinge\/} \citep{cortes1995support} shaped loss, that distinguishes between but not inside two regions.  Primitives like the \emph{Smooth ReLU (SmeLU)} \citep{shamir2020smooth,shamir2022real}, defined as
\beq
 \label{eq:smelu}
 H_{\text{SmeLU}}(z, c)= \left \{
  \begin{array}{ll}
   0, & z \leq -c \\
   \frac{(z + c)^2}{4c}, & |z| \leq c \\
    z, & z \geq c
 \end{array}
 \right .; ~~~~
 h_{\text{SmeLU}}(z, c) = \left \{
 \begin{array}{ll}
 0, & z \leq -c \\
 \frac{z+c}{2c}, & |z| \leq c \\
 1, & z \geq c
 \end{array}
 \right .
\eeq
with some constant $c>0$,
have a Sigmoid shape, and
can be shifted to give high score sensitivity, capping scores that are too high (e.g., idled users in dwell time prediction), and clipping low scores.  A primitive \emph{Huber\/} function \citep{huber1992robust} is asymmetric with a similar link.  A \emph{staircase\/} distinguishes among multiple regions.  Constant pieces can also discount potential labeling noise or ambiguity in human ratings among adjacent labels, and can particularly be mixed with other pieces. 

{\bf Proper Scoring:} Bregman divergences are proper such that the link of the predicted score equals the expected link over observed scores, even though sensitivity emphasizes some scores over others.  (Sensitivity guides training to resolve model underspecification in favor of features that dominate the high sensitivity regions.  Under or mis-specifications are always present in real world models (because engineered features never completely match ``real'' phenomena)).  Properness
is summarized below:
\begin{proposition}
\label{prop:proper}
Let $S\sim P$ be observed.  Then, 
the loss in \eref{eq:matching_loss_def} is minimized at
$
\hat{s} = h^{-1} \left \{\mathbb{E}_P h(S) \right \}
$.
\end{proposition}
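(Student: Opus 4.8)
The plan is to recognize that properness is a one-variable optimization: with $S\sim P$ fixed in distribution, minimize the expected loss $\mathbb{E}_P \mathcal{L}_m(\hat{s}, S)$ over the single scalar $\hat{s}$, exploiting the already-established fact (eq.~\eqref{eq:matching_loss_def}) that $\mathcal{L}_m(\cdot,s)=D_H(\cdot,s)$ is a Bregman divergence with convex primitive $H$. First I would differentiate the objective in $\hat{s}$. Since the per-sample gradient is exactly $h(\hat{s})-h(S)$ by \eqref{eq:scalar_matching_loss_grad}, interchanging differentiation and expectation gives
\beq
\frac{\partial}{\partial \hat{s}} \mathbb{E}_P \mathcal{L}_m(\hat{s}, S) = \mathbb{E}_P\left[h(\hat{s}) - h(S)\right] = h(\hat{s}) - \mathbb{E}_P h(S),
\eeq
where $h(\hat{s})$ is deterministic and pulls out of the expectation.

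Setting this stationarity condition to zero yields $h(\hat{s})=\mathbb{E}_P h(S)$. Because the links of interest are monotonic non-decreasing and (where stated) strictly increasing, $h$ is invertible on its range, so I can solve $\hat{s}=h^{-1}\{\mathbb{E}_P h(S)\}$, which is the claim. I would also note that $\mathbb{E}_P h(S)$ lies in the convex hull of the range of $h$, i.e.\ in $[h(S_m),h(S_M)]$, so that the inverse is well defined and single-valued.

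To confirm that this stationary point is the global minimizer rather than a saddle or maximizer, I would invoke convexity. Since $h$ is non-decreasing, $H$ is convex, so $\mathcal{L}_m(\cdot,s)=D_H(\cdot,s)$ is convex in its first argument, and expectations of convex functions are convex; a convex function whose derivative vanishes at an interior point attains its global minimum there. Equivalently, the second derivative $h'(\hat{s})\ge 0$ certifies the minimum. The main obstacle here is only technical and mild: justifying the exchange of derivative and expectation and ensuring $h^{-1}$ is single-valued, both of which follow from the standing assumptions that $h$ is continuously differentiable with compact range (as in Theorem~\ref{theorem:bust}) together with strict monotonicity of the chosen links.
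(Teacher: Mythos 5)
Your proposal is correct and follows essentially the same route as the paper, which simply notes that by \eqref{eq:scalar_matching_loss_grad} the expected gradient vanishes exactly when $h(\hat{s})=\mathbb{E}_P h(S)$. You add the standard supporting details (exchange of derivative and expectation, convexity of the Bregman divergence in $\hat{s}$, invertibility of a strictly increasing $h$), which are consistent with the paper's standing assumptions and only make the one-line argument explicit.
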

\begin{proof}{}
Straightforward from \eqref{eq:scalar_matching_loss_grad}, the expected gradient is $0$ at $h(\hat{s})= \mathbb{E}h(S)$.
\end{proof}

\section{Composite Softmax and Sigmoid}
\label{sec:composite_softmax}
The \emph{standard Softmax\/} function
\beq
\label{eq:standard_softmax}
 p_k(\rvs) = \frac{e^{s_k}}{\sum_{j=1}^K e^{s_j}}
\eeq
is typically used to convert a vector of $K$ scores to probabilities. However, unlike the Sigmoid function, it is shift invariant, and lacks the ability to give different local region sensitivity behaviors.  Convex combinations of Softmax and \emph{Softmin\/} (defined similarly, but with anti-exponents) cannot overcome these limitations.  Instead, we constrain a general form of a primitive multi-class log partition function $H(\mathbf{z})$, and find the conditions that allow score region link \emph{amplification\/} flexibility.

Let $f(z)$ be a \emph{unary score-transform function\/} for all $z \in \mathbb{S}$. The probability assigned to class $k$ is
\beq
 \label{eq:f_generalized_softmax}
 p_k (\mathbf{z}) \dfn \frac{f(z_k)}{\sum_{\ell=1}^K  f(z_\ell)}.
\eeq
Let $H(\mathbf{z})$ be the log partition function
\beq
 \label{eq:log_partition_generalized}
 H(\mathbf{z}) \dfn \log \left (\sum_{k=1}^K f(z_k) \right ).
\eeq
Now, define a unary \emph{log score-transform function\/} $Q(z)$, such that $f(z)\dfn e^{Q(z)}$, and a unary \emph{scaling function\/} $q(z) = dQ(z)/dz$.  The following holds.
\begin{theorem}{}
\label{theorem:scaling}
The $k$-th link component for the log partition function in \eref{eq:log_partition_generalized} scales $p_k(\rvz)$ in \eref{eq:f_generalized_softmax} by $q(z_k)$
\beq
\label{eq:composite_softmax_link}
 h_k \left (\mathbf{z} \right ) = 
 \frac{\partial H \left (\mathbf{z} \right )}{\partial z_k} = q(z_k) \cdot p_k(\mathbf{z}).
\eeq
\end{theorem}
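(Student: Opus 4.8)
The plan is to compute the partial derivative $\partial H / \partial z_k$ directly from the definition of the log partition function in \eref{eq:log_partition_generalized} and then recognize the resulting expression as $q(z_k) \cdot p_k(\mathbf{z})$. This is essentially a one-line chain-rule calculation, so I do not expect a genuine obstacle; the only thing to track carefully is which summands depend on $z_k$ and how the definition $f = e^Q$ surfaces the factor $q(z_k)$.

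First I would differentiate the outer logarithm. Since $H(\mathbf{z}) = \log\left(\sum_{\ell=1}^K f(z_\ell)\right)$, the outer derivative contributes the factor $1 / \sum_{\ell=1}^K f(z_\ell)$, which is exactly the normalizer appearing in the denominator of $p_k$ in \eref{eq:f_generalized_softmax}. This already signals that $p_k(\mathbf{z})$ will factor out of the final answer, so the remaining task is to show that the numerator produced by differentiating the inner sum equals $q(z_k) f(z_k)$.

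Next I would differentiate the inner sum $\sum_{\ell=1}^K f(z_\ell)$ with respect to $z_k$. The key observation is that each $f(z_\ell)$ depends only on the single coordinate $z_\ell$, so every term with $\ell \neq k$ vanishes and the sum's derivative collapses to $f'(z_k)$. Unpacking $f(z) = e^{Q(z)}$ and applying the chain rule once more gives $f'(z_k) = Q'(z_k)\, e^{Q(z_k)} = q(z_k)\, f(z_k)$, using $q = dQ/dz$. Smoothness of $Q$ (hence of $f$) guarantees these derivatives exist, which is all the regularity the statement requires.

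Finally I would assemble the pieces: $\partial H / \partial z_k = f'(z_k) / \sum_{\ell=1}^K f(z_\ell) = q(z_k) \cdot f(z_k)/\sum_{\ell=1}^K f(z_\ell) = q(z_k) \cdot p_k(\mathbf{z})$, which is precisely the claimed identity \eref{eq:composite_softmax_link}. The only substantive point is the ``only the $k$-th term survives'' step, and that is immediate from the coordinatewise structure of $f$, so the proof is a direct computation rather than an argument with a hard core.
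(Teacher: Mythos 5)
Your proof is correct and follows essentially the same route as the paper: apply the chain rule to the log partition function, recognize the normalizer as the Softmax denominator, and observe that $f'(z_k)/f(z_k) = q(z_k)$ since $f = e^{Q}$. The paper merely phrases this last step in reverse, as the differential equation $f'/f = q$ solved by $f = e^{Q+c}$ (useful for designing $f$ from a desired scaling), but the computation is identical to yours.
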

\begin{proof}{}
The derivative of $H(\mathbf{z})$ w.r.t.\ $z_k$ is given by the chain rule by
\beq
 \label{eq:log_part_derivative}
 h_k \left (\mathbf{z} \right ) = 
 \frac{\partial H \left (\mathbf{z} \right )}{\partial z_k} =
 \frac{d f(z_k)/ d z_k }{\sum_\ell  f(z_\ell)} =
 \frac{f'(z_k)}{f(z_k)} \cdot \frac{f(z_k)}{\sum_\ell f(z_\ell)} 
 \dfn \tilde{q}(z_k) \cdot p_k(\mathbf{z})
\eeq
where $f'(\cdot)$ is the derivative of $f(\cdot)$ (w.r.t.\ its unary variable $z$), and
\beq
 \label{eq:amplification}
 \tilde{q}(z) \dfn \frac{f'(z)}{f(z)}.
\eeq
\eqref{eq:amplification} gives a differential equation that ties between $\tilde{q}(\cdot)$ and $f(\cdot)$, that can be solved for $f(\cdot)$ for a desired scaling $\tilde{q}(\cdot)$.  Let $\tilde{Q}(z)=\int \tilde{q}(z)dz$.  Then, solving \eqref{eq:amplification} gives
\beq
 \label{eq:scoring_scaling_solution}
 \frac{df(z)}{f(z)} = \tilde{q}(z) dz ~~\Rightarrow~~ 
 \log f(z) = \int \tilde{q}(z)dz = \tilde{Q}(z) + c 
~~ \Rightarrow ~~ f(z) = e^{\tilde{Q}(z) + c}
\eeq
where $c$ is any constant.  Specifically, $\tilde{q}(z) = q(z)$, and $\tilde{Q}(z) = Q(z)$ with $c=0$ satisfy \eref{eq:scoring_scaling_solution}.
\end{proof}

Theorem~\ref{theorem:scaling} defines the class probabilities in \eref{eq:f_generalized_softmax} as \emph{composite Softmax\/} probabilities over $Q(z)$ (replacing $z$ in standard Softmax).  Through the chain rule, the composition is directly connected to \emph{amplification\/} $q(z)$, the derivative of $Q(z)$. The scaling function $q(z)$ can determine loss sensitivity, uniquely defining a selective loss (and $f(z)$).  Clearly, constant $q(z)$ is insufficient for region selectivity, suggesting nonlinear choices of $Q(z)$. The constant $c$ in \eref{eq:scoring_scaling_solution} gives flexibility to normalize $f(\cdot)$ for stability with no effect on the link.  Appendix~\ref{app:composite_softmax} gives an alternative direct derivation of composite Softmax, also allowing additional parameterized regularization applied to $Q(z)$.

Probability, the Softplus primitive, its gradient link, and the gradient of the link; for a composite Sigmoid, are defined similarly.
\beq
 \label{eq:composite_sigmoid}
 p(z) = \sigma \left [Q(z)\right ] = \frac{1}{1+e^{-Q(z)}}, 
 ~~~~
 H(z) = \log \left ( 1 + e^{Q(z)} \right ),
 ~~~~
 h(z) =
 q(z) \cdot p(z),
\eeq
\beq
\label{eq:sigmoid_sensitivity}
h'(z) =
 p(z) \cdot \left \{
 q'(z) + \left [1 - p(z) \right ] \cdot q^2(z)
 \right \}.
\eeq
The range of $Q(z)$ is typically not $\mathbb{R}$. Therefore, $p(z) \subseteq [0, 1]$.

\section{Amplified Scalar Selective Matching Losses}
\label{sec:scalar_amplify}
Composition addresses the limitations in defining selective multi-class losses. With a choice of any of the functions $q(z)$, $Q(z)$, or $f(z)$, scalar matching losses
are defined following  \eref{eq:composite_sigmoid},
\beq
 \mathcal{L}_m \left (\hat{s}, s \right ) 
 \label{eq:matching_loss_sig_func}
 =
 \log \left (1 + e^{Q(\hat{s})} \right ) - 
 \log \left (1 + e^{Q(s)} \right ) -
 \left (\hat{s} - s \right ) \cdot q(s) \cdot p(s).
\eeq
Differentiating \eqref{eq:matching_loss_sig_func} w.r.t.\ $\hat{s}$ gives a gradient of
\beq
 \label{eq:matching_loss_sig_grad}
 g_m \left (\hat{s}, s \right ) = 
 \frac{\partial \mathcal{L}_m \left (\hat{s}, s \right )}{\partial \hat{s}} =
 h(\hat{s}) - h(s) = q(\hat{s}) \cdot p(\hat{s}) - q(s) \cdot p(s).
\eeq
The link \eref{eq:composite_sigmoid} \emph{scales\/} the probability by $q(\cdot)$, transferring sensitivity control from $h(\cdot)$ in losses defined in Section~\ref{sec:scalar} to $q(\cdot)$. For sufficiently large $p(z)>\delta>0$ with high $q(z)$, the effect of $p(z)$ is reduced. CE losses with composite Sigmoid (and nonlinear $Q(z)$) are no longer selective 
(Appendix~\ref{app:CE}).

\begin{theorem}
\label{theorem:scalar_matching_loss}
Let $Q(z)$ be a continuous twice differentiable function over $\mathbb{S}$ (except at most a discrete set of non-continuous derivative singular points).  Let $q(z) = Q'(z) = dQ(z)/dz$ be its derivative.  Then, if $Q(z)$ is convex over $\mathbb{S}$ (or if $q(z)$ is monotonically non-decreasing over $\mathbb{S}$) then there exists a matching loss as defined in \eqref{eq:matching_loss_sig_func} which is convex over $\mathbb{S}$ with scaling $q(z)$ for $z\in\mathbb{S}$.
\end{theorem}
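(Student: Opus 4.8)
The plan is to reduce the claim to convexity of the primitive $H(z)=\log(1+e^{Q(z)})$ from \eqref{eq:composite_sigmoid}, and then read off the sign of $H''$ directly from the already-derived sensitivity expression \eqref{eq:sigmoid_sensitivity}. By construction the loss \eqref{eq:matching_loss_sig_func} is the Bregman divergence $D_H(\hat{s},s)=H(\hat{s})-H(s)-(\hat{s}-s)\,h(s)$ of \eqref{eq:matching_loss_def}, where $h=H'$ is the composite-Sigmoid link and the observed $s$ is held fixed. Differentiating twice in $\hat{s}$, the terms $-H(s)$ and $(\hat{s}-s)\,h(s)$ are constant or affine in $\hat{s}$ and contribute nothing, so $\partial^2\mathcal{L}_m/\partial\hat{s}^2=H''(\hat{s})=h'(\hat{s})$. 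Hence the loss is convex over $\mathbb{S}$ precisely when the link $h$ is monotonically non-decreasing, i.e.\ when $h'(z)\ge 0$ for all $z\in\mathbb{S}$.

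Next I would verify $h'\ge 0$ by a term-by-term sign analysis of \eqref{eq:sigmoid_sensitivity}, $h'(z)=p(z)\{q'(z)+[1-p(z)]\,q^2(z)\}$. Since $p(z)=\sigma[Q(z)]\in[0,1]$, both $p(z)\ge 0$ and $1-p(z)\ge 0$; moreover $q^2(z)\ge 0$; and convexity of $Q$ gives $q'(z)=Q''(z)\ge 0$. The stated alternative hypothesis that $q=Q'$ is monotonically non-decreasing is exactly the first-order characterization of convexity of $Q$, so the two hypotheses coincide and either one suffices. The bracket is then a sum of two non-negative terms, whence $h'(z)\ge 0$. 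A key structural point is that the $q^2(z)$ factor absorbs the sign of $q$: neither $Q$ nor $p(z)$ need be monotonic, and only $Q''\ge 0$ together with $p\in[0,1]$ is used, so the argument is insensitive to where $q$ is positive or negative.

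The remaining care is at the non-smooth points. Because $Q$ is only assumed twice differentiable away from a discrete set, $q'=Q''$ may fail to exist or be discontinuous at isolated points, and \eqref{eq:sigmoid_sensitivity} then applies on each side of such a point. However $q=Q'$ is continuous, so $h=q\,p$ is continuous on $\mathbb{S}$ and piecewise $C^1$, with $h'\ge 0$ on every subinterval between consecutive singular points. A continuous, piecewise-$C^1$ function with non-negative derivative on each piece is non-decreasing on all of $\mathbb{S}$ (the one-sided non-negativity plus continuity rules out any downward jump at the junctions); hence $h$ is globally non-decreasing, $H$ is convex, and the loss \eqref{eq:matching_loss_sig_func} is convex in $\hat{s}$. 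Finally, the additive constant $c$ permitted in $f(z)=e^{Q(z)+c}$ by Theorem~\ref{theorem:scaling} only shifts $H$ by a constant and leaves $h$ and $h'$ unchanged, which accounts for the existential phrasing: every admissible choice of primitive yields the same convex loss with scaling $q$.

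I expect the main obstacle to be not the sign computation, which is immediate once \eqref{eq:sigmoid_sensitivity} is invoked, but the bookkeeping at the singular points: one must confirm that piecewise non-negativity of $h'$ together with continuity of $h$ genuinely upgrades to \emph{global} monotonicity of $h$ (and hence global convexity of $H$), rather than merely convexity on each smooth piece separately.
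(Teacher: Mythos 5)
Your proposal follows essentially the same route as the paper's proof: reduce convexity of the loss to monotonicity of the link $h=q\cdot p$, verify $h'(z)\ge 0$ term-by-term from \eqref{eq:sigmoid_sensitivity} using $p(z)\in[0,1]$ and $q'(z)\ge 0$, and then patch up the isolated singular points. One small inaccuracy: at those singular points the hypothesis explicitly allows $q=Q'$ to be \emph{discontinuous} (upward jumps at cusps of a convex $Q$), so your claim that $q$, and hence $h$, is continuous on all of $\mathbb{S}$ is not justified; the correct repair, which is exactly what the paper does, is to note that $q(z_0+\delta)\ge q(z_0-\delta)$ by monotonicity of $q$ together with continuity and non-negativity of $p$, so any jump of $h$ at a singular point is upward and $h$ remains non-decreasing, which is all the argument needs.
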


\begin{figure}[t]
 \centering
 \makebox[\textwidth][c]{
 \includegraphics[width=1.25\textwidth,height=0.3\textheight]{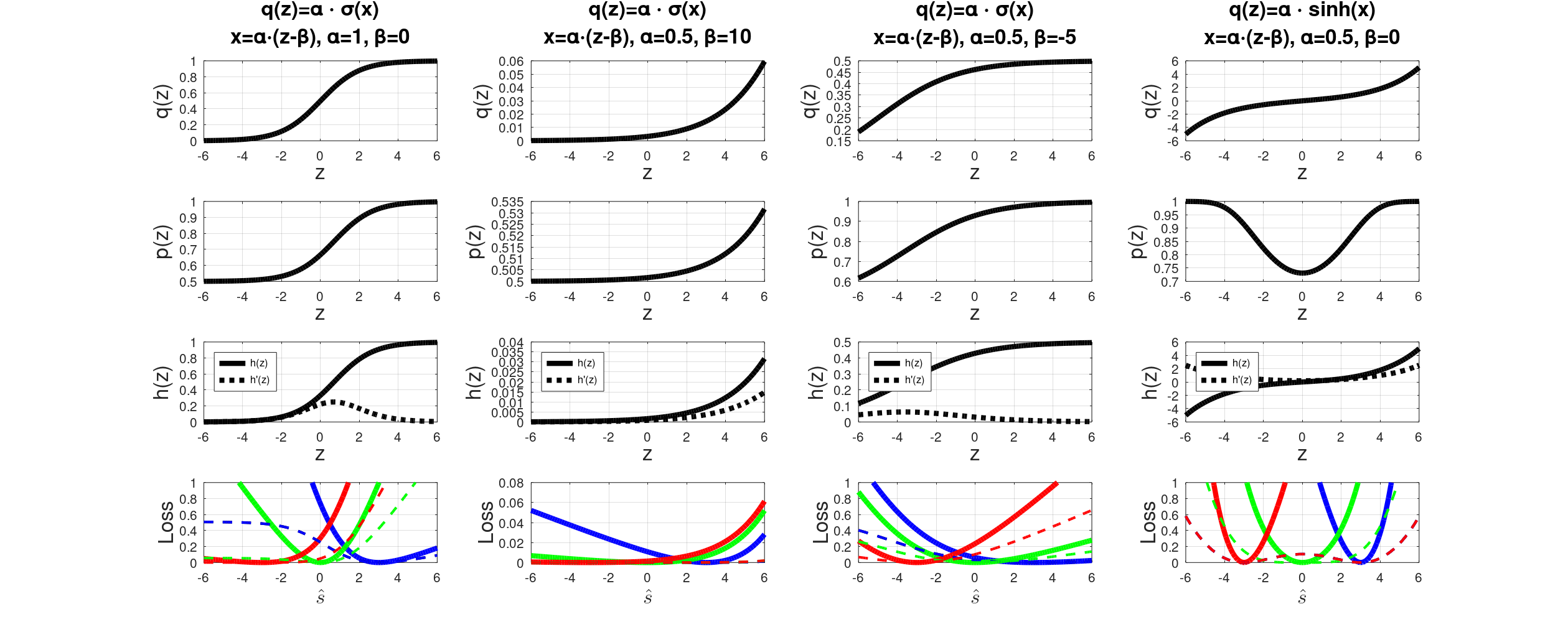}}
 \caption{\small{Scaling, composite Sigmoid, link and sensitivity vs.\ score $z$, and composite Sigmoid selective losses vs.\ $\hat{s}$ for $s\in\{-3,0,3\}$ ($-3$ red, $0$ green, $3$ blue) for scaling functions $q(z)$ with low norm, high and low score, and high norm sensitivities.  Dashed loss curves show CE losses.}}
 \label{fig:selective_scale_sensitivity}
 \vspace{-.4cm}
\end{figure}

Theorem~\ref{theorem:scalar_matching_loss} (proof in Appendix~\ref{app:proofs}) gives a sufficient, not necessary condition. Conditions and examples of valid and invalid $Q(z)$ are in 
appendices. 
Figure~\ref{fig:selective_scale_sensitivity} shows losses of \eref{eq:matching_loss_sig_func} vs.\ $\hat{s}$ for different observed $s$. It partially replicates Figure~\ref{fig:selective_bin_loss_regions} replacing $h(z)$ by $q(z)$.  Unshifted Sigmoid loss symmetry is only in \eref{eq:matching_loss_def}, but achieved in \eref{eq:matching_loss_sig_func} with $q(z)=\tanh(z)$.  CE composite Sigmoid losses are not necessarily convex.  The composite Sigmoid is not necessarily monotonic or injective (last column).

\section{Multi-class Selective Matching Losses}
\label{sec:multi}
A multi-class matching loss is defined by a Bregman divergence ($D_H(\cdot, \cdot)$)
\beq
 \label{eq:matching_loss_def_multi}
 \mathcal{L}_m (\hat{\mathbf{s}}, \mathbf{s}) \dfn
 D_H(\hat{\mathbf{s}}, \mathbf{s}) =
 H(\hat{\mathbf{s}}) - H(\mathbf{s}) - 
 \left (\hat{\mathbf{s}} - \mathbf{s} \right ) \cdot h(\mathbf{s}) =
 H(\hat{\mathbf{s}}) - H(\mathbf{s}) - 
 \sum_{k=1}^K (\hat{s}_k - s_k ) \cdot h_k(\mathbf{s})
\eeq
where $H(\rvs)$ and $h_k(\rvs)$ are defined in \eref{eq:log_partition_generalized} and \eref{eq:composite_softmax_link} (or \eref{eq:log_part_derivative}) respectively, and vector inner (dot) product is denoted by
`$\cdot$'.  The $k$-th component of the gradient is 
\beq
 \label{eq:multi_matching_loss_grad}
 g_{m,k} (\hat{\mathbf{s}}, \mathbf{s}) = \frac{\partial \mathcal{L}_m (\hat{\mathbf{s}}, \mathbf{s})}
 {\partial \hat{s}_k} = h_k(\hat{\mathbf{s}}) - h_k(\mathbf{s}).
\eeq

A standard Softmax uses a linear $Q(z)$ (constant $q(z)$) to define the primitive $H(z)$ in \eref{eq:log_partition_generalized}. Whether applied with a matching loss \eref{eq:matching_loss_def_multi} or an equal gradient CE loss (Appendix~\ref{app:multi_basic}), it gives distinction between score regions, but not inside a region.  {\bf Shift invariance} gives highest sensitivity at low score norms, with no selective flexibility.  Gradients are bounded by probability differences.  Sensitivity, proportional to diagonal $h'(z_k) = p_k(\rvz)(1-p_k(\rvz))$, is bounded, limited at extremes, and maximized at $p_k(\rvz)=0.5$. Hence, {\bf standard Softmax is not a good choice to distinguish among high scores, low scores, or high norms}. A nonlinear $Q(z)$ is required. Reducing Softmax in two dimensions gives asymmetry that allows the scalar flexibility of Sigmoid, but diminishes with larger $K$.

It is natural to decompose a multi-class loss into its dimensions giving a 
\emph{diagonal Hessian\/} loss
\beq
H(\mathbf{z}) = \sum_{k=1}^K H(z_k) 
~~~\Rightarrow~~~
\label{eq:diag_matching_loss}
\mathcal{L}_{m,d} (\hat{\mathbf{s}}, \mathbf{s}) =
\sum_{k=1}^K \left \{
H(\hat{s}_k) - H(s_k) - (\hat{s}_k - s_k) h(s_k)
\right \}
\eeq
with gradients $g_{m,d,k}(\hat{\mathbf{s}}, \mathbf{s}) = h(\hat{s}_k)-h(s_k)$ with per-dimension identical scalar region sensitivities.

\begin{figure}[t]
 \centering
 \makebox[\textwidth][c]{
 \includegraphics[width=1.25\textwidth,height=0.2\textheight]{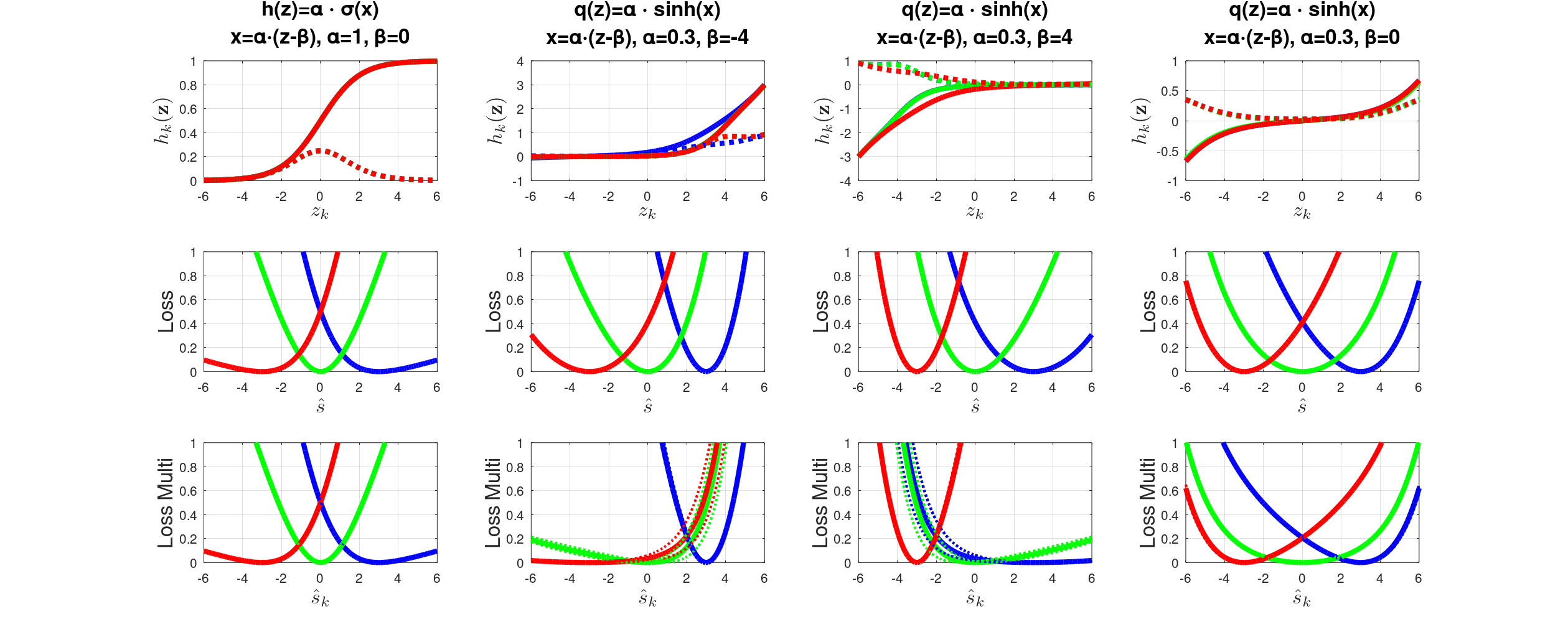}}
 \caption{\small{Low norm, high score, low score, and high norm sensitivities (left-to-right) multi-class losses vs.\ $\hat{s}_k$ $s_k\in\{-3,0,3\}$ ($-3$ red, $0$ green, $3$ blue). Top: $h_k(\rvz)$ and diagonal $h'_k(\rvz)$ (some overlap), middle: scalar losses, bottom: multi-class projection to $k$, $\hat{s}_j = s_j$ (solid) or $\hat{s}_j = s_j \pm \delta$ (dotted shifted to $0$ minimum) for $j\neq k$.}}
 \label{fig:multi_class_regions}
  \vspace{-.4cm}
\end{figure}

Decomposed losses, however, cannot support {\bf ranking sensitivity}, applied by relative score positions instead of actual scores, giving \emph{constellation shift invariance\/}.  Ranking sensitivity is important in ranking or distillation applications, where high sensitivity is necessary for the highest scores, independently of their values.  Both region and ranking sensitivities are achievable with composite Softmax \eref{eq:f_generalized_softmax} with a nonlinear $Q(z)$ and the log partition primitive in \eref{eq:log_partition_generalized}, giving selective loss
\beq
 \label{eq:multi_matching_loss_softmax}
 \mathcal{L}_m \left (\hat{\mathbf{s}}, \mathbf{s} \right ) =
 \log \left \{ \sum_{k=1}^K 
  e^{Q(\hat{s}_k)} \right \} -
 \log \left \{ \sum_{k=1}^K 
  e^{Q(s_k)} \right \}
 - \sum_{k=1}^K \left (\hat{s}_k - s_k \right ) \cdot q(s_k) \cdot p_k(\mathbf{s})
\eeq
whose $k$-th gradient component is given by
\beq
\label{eq:multi_matching_loss_sm_grad}
g_{m,k} \left (\hat{\mathbf{s}}, \mathbf{s} \right ) = q(\hat{s}_k) \cdot p_k(\hat{\mathbf{s}}) - q(s_k)\cdot p_k(\mathbf{s}).
\eeq

\begin{theorem}
\label{theorem:multiclass_matching_loss}
Let $Q(z)$ be a continuous twice differentiable function over $\mathbb{S}$, with $q(z) = Q'(z) = dQ(z)/dz$ its derivative.  Then, if $Q(z)$ is convex over $\mathbb{S}$ (or if $q(z)$ is monotonically non-decreasing over $\mathbb{S}$) then there exists a multi-class matching loss as defined in \eqref{eq:multi_matching_loss_softmax} which is convex over $\mathbb{S}^K$ with scaling $q(z)$ for $z\in\mathbb{S}$, and Softmax function $p_k(\mathbf{z});~\forall k=1,2,\ldots,K;~ \mathbf{z}\in \mathbb{S}^K$.
\end{theorem}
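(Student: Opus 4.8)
The plan is to reduce convexity of the loss to convexity of its primitive. The loss \eref{eq:multi_matching_loss_softmax} is the Bregman divergence $D_H(\hat{\mathbf{s}},\mathbf{s})$ of the log-partition primitive $H(\mathbf{z})=\log\sum_{k}e^{Q(z_k)}$ from \eref{eq:log_partition_generalized}, so as a function of $\hat{\mathbf{s}}$ (holding $\mathbf{s}$ fixed) it equals $H(\hat{\mathbf{s}})$ plus an affine term in $\hat{\mathbf{s}}$ (the constant $-H(\mathbf{s})$ and the linear $-\sum_k(\hat{s}_k-s_k)\,q(s_k)p_k(\mathbf{s})=-\nabla H(\mathbf{s})\cdot(\hat{\mathbf{s}}-\mathbf{s})$). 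Consequently $\nabla^2_{\hat{\mathbf{s}}}D_H=\nabla^2 H(\hat{\mathbf{s}})$, and since $\mathbb{S}^K$ is a convex box, it suffices to prove that $H$ is convex on $\mathbb{S}^K$.

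Next I would compute the Hessian of $H$. Starting from the link $h_k(\mathbf{z})=q(z_k)\,p_k(\mathbf{z})$ in \eref{eq:composite_softmax_link} and the softmax identity $\partial p_k/\partial z_j=q(z_j)\,p_k(\delta_{kj}-p_j)$, a second differentiation gives
\[
[\nabla^2 H]_{kj}=q'(z_k)\,p_k\,\delta_{kj}+q(z_k)\,q(z_j)\,p_k(\delta_{kj}-p_j).
\]
I would then split this as $\nabla^2 H=D+M$, where $D=\mathrm{diag}(q'(z_k)p_k)$ and $M=Q_d\,C\,Q_d$ with $Q_d=\mathrm{diag}(q(z_k))$ and $C=\mathrm{diag}(\mathbf{p})-\mathbf{p}\mathbf{p}^\top$. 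The matrix $D$ is positive semidefinite because $p_k>0$ and, by the hypothesis that $Q$ is convex (equivalently $q$ is monotonically non-decreasing), $q'(z_k)\ge 0$. The matrix $C$ is the categorical-covariance matrix of the softmax distribution and is positive semidefinite, since for any $\mathbf{v}$ one has $\mathbf{v}^\top C\mathbf{v}=\sum_k p_k v_k^2-(\sum_k p_k v_k)^2\ge 0$ by Cauchy--Schwarz. As $Q_d$ is diagonal, $M=Q_d C Q_d$ is a congruence transform of $C$ and hence also positive semidefinite. Thus $\nabla^2 H=D+M\succeq 0$, so $H$ is convex and the loss is convex in $\hat{\mathbf{s}}$.

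The step I expect to be the crux is controlling the off-diagonal coupling introduced by the shared partition function: the amplification $q(\cdot)$ multiplies the covariance on both sides, and the decisive observation is that this enters precisely as the diagonal congruence $Q_d C Q_d$, which preserves positive semidefiniteness for \emph{every} sign of $q$; convexity of $Q$ is then needed only to control the remaining purely diagonal $q'$ contribution (exactly mirroring the scalar sensitivity $h'(z)=p(z)\{q'(z)+[1-p(z)]q^2(z)\}$ of \eref{eq:sigmoid_sensitivity}, whose nonnegativity uses $q'\ge 0$ and $p\le 1$). Finally, to cover the permitted discrete set of points where $q'$ fails to exist, I would either invoke continuity of $H$ together with convexity on each smooth piece, or bypass differentiability altogether via the vector composition theorem: $\log\sum_k e^{u_k}$ is convex and non-decreasing in each $u_k$, and composing it coordinatewise with the convex $Q$ yields convex $H$ directly on $\mathbb{S}^K$.
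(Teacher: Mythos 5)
Your proposal is correct and follows essentially the same route as the paper's proof: both reduce convexity of the loss to positive semidefiniteness of the Hessian of the log-partition primitive $H$, split it into a diagonal term $\mathrm{diag}(q'(z_k)p_k)\succeq 0$ (using $q'\ge 0$) plus a term that the paper writes as the variance $\mathrm{Var}_{p(\mathbf{z})}[q(z)\,x]$ of the scalar $q(z_k)x_k$ under the Softmax distribution --- exactly your congruence $Q_d\,C\,Q_d$ of the categorical covariance matrix --- and both also note the alternative composition argument ($\log\sum_k e^{u_k}$ convex and coordinatewise nondecreasing composed with convex $Q$, cf.\ Example 3.14 of Boyd and Vandenberghe). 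No gaps; your matrix phrasing and the paper's variance phrasing are the same observation in different notation.
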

Theorem~\ref{theorem:multiclass_matching_loss} (proof in Appendix~\ref{app:proofs}) gives a sufficient, not necessary, condition.  Figure~\ref{fig:multi_class_regions} shows multi-class links vs.\ $z_k$ (top), scalar losses \eref{eq:matching_loss_sig_func} (middle), and multi-class loss projections \eref{eq:multi_matching_loss_softmax} (bottom) vs.\ $\hat{s}_k$ for $s \in \{-3, 0, 3\}$ for four sensitivity types.  Decomposed unshifted Sigmoid link (overlapping $h_k(\rvz)$) is used for low norm sensitivity. Composite Softmax with $q(z)=\sinh(x),~x=\alpha(z-\beta)$, varying $\beta$, is used for other sensitivities.  Links for lower sensitivity scores $z_j$ overlap under the curves of higher sensitivity score $s_k$ for $z_j < s_k$ (e.g., $s_j \in \{-3, 0\}$ second column, $s_j \in \{0,3\}$, third). Both $s_k\in\{-3,3\}$ suppress low sensitivities for high norm sensitivity in the last column.  Figure~\ref{fig:relative_sensitivity} shows 2-class loss projections, contours and surfaces with $\alpha e^x$; $h(z)$ or $q(z)$; decomposed diagonal loss - first three columns, composite Softmax - last three, with $s_1=3$ and $s_2 \in \{1,3,5\}$.  In both figures, losses are sharp in high sensitivity dimensions, flat in low, and large between regions.  Ranking sensitivity flattens multi class losses, but not scalar ones, away from high sensitivity in low sensitivity dimensions.  The same score $s_1=3$ has high sensitivity as the higher score, and low one when it is the lower.  Mis-predicting high sensitivity scores (by some $\delta$) shifts loss curves for lower sensitivity observed scores (dotted curves in Figure~\ref{fig:multi_class_regions}). Mis-predicting scores in low sensitivity regions has small to no effect on losses for high sensitivity observed scores.

\begin{figure}[t]
 \centering
 \makebox[\textwidth][c]{
 \includegraphics[width=1.25\textwidth,height=0.25\textheight]{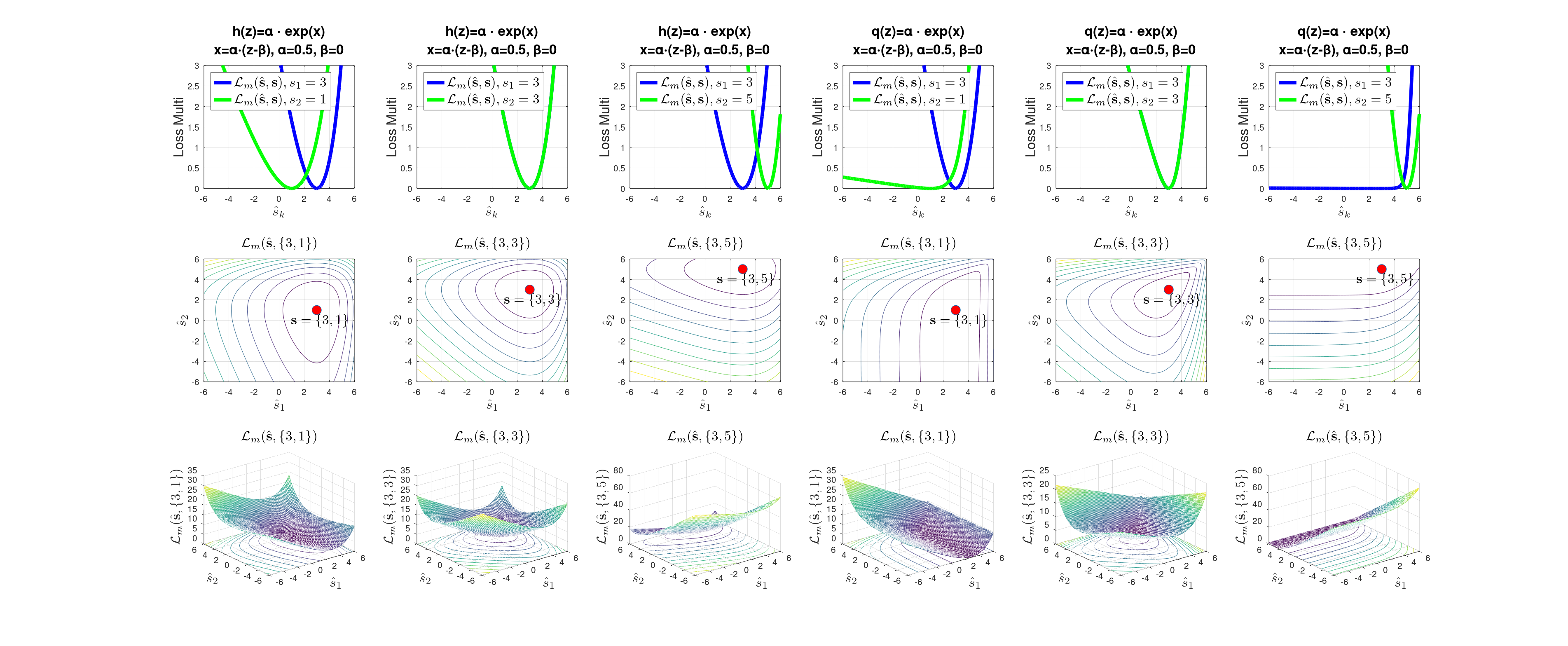}}
 \caption{\small{Per-dimension loss projections (top), and 2-dimensional loss surfaces (contour; center, surface; bottom) for $K=2$ classes vs. $\hat{s}_k$, with $s_1=3$ and $s_2 \in \{1,3,5\}$, for scalar decomposed losses with $h(z) = \alpha e^x$ (left three columns) and composite Softmax losses with $q(z)=\alpha e^x$ (right three).}}
 \label{fig:relative_sensitivity}
  \vspace{-.4cm}
\end{figure}

Two different knobs control sensitivity 
in \eref{eq:multi_matching_loss_softmax}. The {\bf scaling function} $q(z)\in\mathbb{R}$ and its slope control region sensitivity like $h(z)$ in \eref{eq:matching_loss_def}, independently of other classes.  The {\bf composite Softmax} $p_k(\mathbf{z}) \in [0,1]$ weights the link to give ranking sensitivity.
The composite Softmax need not be either injective or monotonically increasing, but must agree with $q(z)$, and be high with large slope in high sensitivity regions.  Disagreement with $q(z)$ negates some choices of $q(z)$.
Table~\ref{tab:scalings} shows sensitivity design choices. Sigmoid, $\tanh(\cdot)$, or other scaling \emph{do not\/} guarantee low norm sensitivity because for any choice, $p_k(\rvz)$ pushes towards other regions.
Low norm sensitivity is achieved with a decomposed scalar loss.  Composite Softmax can be used for other sensitivity types as shown in Table~\ref{tab:scalings}, dropping choices in which $p_k(\rvz)$ offsets $q(z)$ (see Appendix~\ref{app:multi_scaling}).

\begin{table*}[htbp]
 \caption{Multi-class sensitivity design of scaling $q(z)$ and links $h(z)$ (with $\beta > 0$)}
 \label{tab:scalings}
 \centering
 \begin{tabular} {|c|p{5.6cm}|p{5.5cm}|}
  \toprule
  {\bf Sensitivity} & {\bf Loss induced by} & {\bf Other similar losses} \\
  \midrule
  low norms & 
  {\bf diagonal} $h(z_k)=\sigma(z_k),~\forall k$&
  other {\bf diagonal} Sigmoid shape links \\
  \midrule
  high norms &
  composite Softmax $q(z)=\sinh(z)$ &
  other $\sinh(\cdot)$ shape scalings \\
  \midrule
  high scores &
  composite Softmax $q(z)=e^z$ shape &
  \vspace{-0.25cm}
  \begin{itemize}
    \item right shifted $q(z) = \sigma(z-\beta)$
    \item left shifted $q(z)=\sinh(z+\beta)$
  \end{itemize}
  \nointerlineskip
  \\
  \midrule
  low scores &
  composite Softmax $q(z)=-e^{-z}$ shape &
  \vspace{-0.25cm}
  \begin{itemize}
      \item right shifted $q(z)=\sinh(z-\beta)$
  \end{itemize} 
  \nointerlineskip
  \\
  \bottomrule
 \end{tabular}
\end{table*}

{\bf Clipping low scores:} Composite Softmax with SmeLU $Q(z)$ clips low scores to $0$. A shifted SmeLU gives high sensitivity to mid-range scores with strong loss distinction between low and high score regions, and capping scores that are too high.  It is thus suitable for distillation with low-score clipping.

\section{Related Work}
The matching loss, defined as a convex by design integral over an increasing link, was originally discovered as a generalization of logistic loss.  It avoids local minima \citep{auer1995exponentially} with proven regret bounds \citep{helmbold1995worst, kivinen1997relative}.  Similar losses were applied by \citet{buja2005loss, ravikumar2011ndcg, chaudhuri2017online}, studying properties of simple scalar matching losses, such as consistency, and emphasizing some advantages and utilization, for example to retrieval applications.  Matching losses can define localized convex losses for each neuron in multi-layer neural networks \citep{amid2022locoprop}, and have tunable quasi-convex tempered versions with robustness to outliers \citep{amid2019robust}.
\citet{reid2010composite} studied binary composite losses but neither in the context of region sensitivity nor for multidimensional Softmax.  A unified framework that encapsulates loss families, including matching losses, was described by \citet{reid2011information}.
Unlike these prior works, however, the focus of this paper is to design selective losses that
emphasize certain score regions over others.  A very preliminary version for scalar losses based only on shifted Sigmoid links was developed in the context of computational chemistry \citep{hristakeva2010nonlinear}. General asymmetry in losses (but not with matching losses) was argued to be necessary in criminal justice settings \citep{berk2011asymmetric} where different errors should impose different penalties.  In our work, scalar losses are greatly expanded, analytical interpretations of sensitivity and selectivity are proposed, and the focus is on the much more complicated multidimensional case, where simple shifted Sigmoid links are insufficient. 
\citet{nock2020supervised, siahkamari2020learning} proposed methods to learn losses induced by Bregman divergences.  For matching losses, this implies fitting link functions. Unlike this line of work, in our work, sensitivity is \emph{externally\/} imposed, and pre-designed links have to guide the loss to adhere to that sensitivity.

\section{Conclusions}
We formulated selective matching losses for both scalar and multidimensional problems.  Scalar losses are defined with link functions that dictate score region sensitivity of the loss according to target applications.  The effects of the link and its slope on trained models were quantified.  Composite Softmax was proposed for design of multidimensional selective losses that adhere to both regional and ranking sensitivity requirements.  The choices of link or scaling functions that give different sensitivity profiles for scalar and multidimensional problems, respectively, have been studied.

\newpage

\bibliography{aml_bib}
\bibliographystyle{plainnat}

\newpage
\appendix


\renewcommand{\theequation}{\thesection.\arabic{equation}}
\renewcommand{\thefigure}{\thesection.\arabic{figure}}
\renewcommand{\thetable}{\thesection.\arabic{table}}
\renewcommand{\thecorollary}{\thesection.\arabic{corollary}}

\setcounter{equation}{0}
\setcounter{figure}{0}

\renewcommand{\contentsname}{Contents in Appendices}
\tableofcontents 
\addtocontents{toc}{\protect\setcounter{tocdepth}{2}}
\newpage

\section{Important Symbols, Functions and Terms}
\label{sec:glossary}
\subsection{Symbols and Functions} 
\begin{tabular}{ll}
$\mathbb{S}$ & score domain \\
$z$, $\mathbf{z}$ & score variable $z \in \mathbb{S}$, vector $\rvz \in \mathbb{S}^K$ \\
$s$, $\mathbf{s}$ & observed score $s \in \mathbb{S}$, score vector $\rvs \in \mathbb{S}^K$ \\
$\hat{s}$, $\hat{\mathbf{s}}$ & predicted score $\hat{s} \in \mathbb{S}$, score vector $\hat{\rvs} \in \mathbb{S}^K$\\
$K$ & number of multi-class classes (dimension) \\
$k$ & dimension index, vector component \\
$z_k$, $s_k$, $\hat{s}_k$ & score variable, value, prediction for the $k$-th dimension or class \\
$x \dfn \alpha (z-\beta)$ & $\beta$ (right) shifted and $\alpha$ scaled score variable \\
$h(\cdot)$ & {\bf link} function (single dimension) \\
$h_k(\cdot)$ & link function for class $k$ (multi-variate function) \\
$H(\cdot)$ & {\bf primitive} function (Softplus with scalar Sigmoid, log partition for multi-class) \\
$q(\cdot)$ & unary {\bf scaling} function \\
$Q(\cdot)$ & unary {\bf log-score-transform} function \\
$f(\cdot)$ & unary {\bf score-transform} function \\
$f'(\cdot)$ & derivative of score-transform function \\
$p(\cdot)$ & composite Sigmoid mapping from scores to $[0,1]$ \\
$p_k(\mathbf{z})$ & multi-class (composite) Softmax mapping of class $k$ score to $[0,1]$\\
$\rvp(\rvz)$ & multi-class $K$-dimensional (composite) Softmax vector \\
$\Delta(z)$ & {\bf sensitivity} (local one proportional to gradient of $h(z)$) \\
$\Delta_k(\rvz)$ & $k$-th class sensitivity (proportional to gradient of $h_k(\rvz)$) \\
${\cal L}(\hat{s}, s)$ & loss for observed score $s$ with predicted score $\hat{s}$ \\
$g(\hat{s}, s)$ & loss gradient with observed score $s$ with predicted score $\hat{s}$ \\
${\cal L}(\hat{\mathbf{s}}, \mathbf{s})$ & multi-class loss for observed score vector $\rvs$ with predicted vector $\hat{\rvs}$ \\
$g(\hat{\mathbf{s}}, \mathbf{s})$ & multi-class loss gradient with observed score vector $\rvs$ with predicted vector $\hat{\rvs}$ \\
$\mathcal{L}_{m}(\cdot, \cdot)$, $g_m(\cdot, \cdot)$ & matching loss and its gradient \\
$\mathcal{L}_{CE}(\cdot, \cdot)$, $g_{CE}(\cdot, \cdot)$ & Cross Entropy (CE) loss and its gradient \\
$\mathcal{L}_{d}(\hat{\mathbf{s}}, \mathbf{s})$ & decomposed ({\bf diagonal} Hessian) matching loss (componentwise loss)
\end{tabular}

\subsection{Terms}
\begin{tabular}{ll}
{\bf score} & observation or prediction value for a class \\
{\bf link function} & a function of score whose slope expresses sensitivity of a matching loss \\
{\bf sensitivity} & locally - scaling of quadratic loss for an infinitesimal shift from observed score \\
{\bf primitive function} & function used to define Bregman divergence and matching loss\\
{\bf scaling function} & 
gradient amplification of $H(\cdot)$ of Softmax/Sigmoid as function of score\\
{\bf score-transform} &
a unary function applied to class scores to form the composite Softmax \\
{\bf log-score-transform} & logarithm of the score-transform function (integral of scaling function) \\
{\bf region sensitivity} & loss strength determined by region of score in $\mathbb{S}$ \\
{\bf ranking sensitivity} & loss strength determined by relation of score to scores of other classes\\
\end{tabular}

\section{Sensitivity}
\label{app:sensitivitiy}
A matching loss defined in \eqref{eq:matching_loss_def} for the scalar case is the \emph{Bregman divergence\/} $D_{H}(\hat{s},s)$, which is the difference between the primitive $H(\cdot)$ at the prediction $\hat{s}$ and its first-order Taylor expansion around the observed $s$.  Therefore, the loss $\mathcal{L}_m\left (\hat{s}, s \right )$ consists of the higher order terms from degree $2$ and on of the Taylor expansion of $H(s)$, evaluated at $\hat{s}$.  If $\hat{s}= s + \delta$ deviates from $s$ by a small $\delta\rightarrow 0$, terms of order higher than $2$ diminish relative to the degree-$2$ term.  Thus, the normalized loss approaches $\delta^{-2} \cdot \mathcal{L}_m\left (\hat{s}, s \right ) \rightarrow 0.5 h'(s) $, i.e., the loss approaches a square loss scaled by $h'(z)/2$, where $h'(z) \dfn dh(z)/dz$ is the derivative of the link $h(z)$.  Local \emph{sensitivity\/} can be defined as a coefficient of the loss sustained with a slight move away from the observed score $s$.  By definition, this loss is always quadratic in $\delta$, and its scale is proportional to $h'(z)$, thus $\Delta(z) \dfn h'(z)$ can be defined as a quantified measure of \emph{local sensitivity}.  It only captures local region sensitivity, because the asymptotics of a small $\delta$ do not apply to the last property of selective losses, which highly penalizes erring between high and low sensitivity regions.

For the basic scalar selective matching loss, the choice of the link $h(z)$ dictates the sensitivity $\Delta(z)$.  The amplified scalar loss in \eref{eq:matching_loss_sig_func} has two knobs, $q(z)$ and $p(z)$, that emerge from the choice of $q(z)$. Their product is $h(z)$, that controls sensitivity.  Sensitivity is expressed by
\beqa
 \nonumber
 \Delta(z) &=& h'(z) = \frac{d h(z)}{d z} = p(z) \cdot \frac{d q(z)}{dz} + 
 q(z) \cdot \frac{dp(z)}{dz} \\
 \nonumber
 &=&
 p(z) \cdot \left \{
 q'(z) + \left [1 - p(z) \right ] \cdot q^2(z)
 \right \} \\
 &=&
 \sigma \left [Q(z) \right ] \cdot \left \{
 q'(z) + \left [1 - \sigma \left [Q(z)\right ] \right ] \cdot q^2(z)
 \right \}
 \label{eq:scalar_sensitivity}
\eeqa
where $q'(z) = dq(z)/dz$.  

Similarly to the scalar matching loss, a multi-class matching loss in \eref{eq:matching_loss_def_multi} and \eref{eq:multi_matching_loss_softmax} is a Bregman divergence, which is the difference between $H(\hat{s})$, and its multi-dimensional first-order Taylor series expansion around $H(s)$.  Thus, the loss is the residue of the Taylor expansion including all degrees from $2$ and up. Let $\delta_k \rightarrow 0$ be a small deviation of the estimate $\hat{s}_k = s_k + \delta_k$ of the $k$-th class score from the observed score $s_k$ for every dimension $k$.  Then, the matching loss can be approximated by the second order terms of the Taylor series expansion.  Second order derivatives of $H(z)$ w.r.t.\ dimensions $k$ and $j$ are given by
\beq
 \label{eq:log_partition_hessian}
 \frac{\partial^2 H(\mathbf{z})}{\partial z_k \partial z_j} = \frac{\partial h_k(\mathbf{z})}{\partial z_j} = p_k(\mathbf{z}) \cdot 
 \left \{
 \kappa_{kj} \cdot q'(z_k) +
 \left [\kappa_{kj} - p_j (\mathbf{z}) \right ] \cdot q(z_k) \cdot q(z_j)
 \right \}
\eeq
where $\kappa_{kj} = 1$; if $k=j$, and $0$; otherwise, is the Kronecker Delta function.  The matching loss can now be approximated by the quadratic form w.r.t.\ the vector $\bm{\delta} = \{\delta_k\}$
\beqa
\nonumber
\mathcal{L}_m\left (\hat{\rvs}, \rvs \right ) &\rightarrow&
\frac{1}{2} \cdot
\bm{\delta}^T \cdot
\left \{
 \frac{\partial^2 H(\rvs)}{\partial s_k \partial s_j} 
\right \} \cdot \bm{\delta}  \\
\label{eq:multi_taylor_approx}
&=&
\frac{1}{2}\sum_{k=1}^K p_k (\rvs) q'(s_k) \delta_k^2 +
\frac{1}{2} 
\sum_{k=1}^K p_k (\rvs) q^2(s_k) \delta_k^2  -
\frac{1}{2} \left \{\sum_{k=1}^K
p_k(\rvs) q(s_k)  \delta_k \right \}^2 
\eeqa
where $\{\cdot\}$ is the Hessian matrix with its $kj$ element expressed inside the braces.  The sensitivity of dimension $k$ can be obtained by substituting $\delta_k \neq 0$ but $\delta_k \rightarrow 0$, while for all other dimensions $j\neq k$, $\delta_j=0$.  This gives a loss that consists only of the diagonal terms in the gradient $h'_k(\rvz) \dfn \partial h_k(\rvz)/\partial z_k$ in \eref{eq:log_partition_hessian} giving
\beqa
\nonumber
\lim_{\substack{ \delta_k\rightarrow 0 \\ \delta_j=0;j\neq k}}
\mathcal{L}_m\left (\hat{\rvs}, \rvs \right ) &=& \frac{1}{2} \cdot \frac{\partial h_k (\rvs)}{\partial s_k} \cdot \delta_k^2 = 
\frac{1}{2} \cdot h'_k(\rvs) \cdot \delta_k^2 \\
&=& \frac{1}{2} \cdot 
p_k(\rvs) \cdot 
 \left \{
 q'(s_k) +
 \left [1 - p_k (\rvs) \right ] \cdot q(s_k)^2
 \right \} \cdot \delta_k^2.
\eeqa
Similarly to the scalar case, this implies that the $k$-th class sensitivity is given by a similar expression to \eref{eq:scalar_sensitivity}
\beqa
\nonumber
\Delta_k(\rvz) &=& h'_k(\rvz) =
p_k(\rvz) \cdot 
 \left \{
 q'(z_k) +
 \left [1 - p_k (\rvz) \right ] \cdot q(z_k)^2
 \right \} \\
 &=&
 \frac{e^{Q(z_k)}}{\sum_{j=1}^K e^{Q(z_j)}} \cdot
 \left \{ 
 q'(z_k) +
 \left [1 - \frac{e^{Q(z_k)}}{\sum_{\ell=1}^K e^{Q(z_\ell)}} \right ] \cdot q(z_k)^2
 \right \}.
\label{eq:sensitivity_k}
\eeqa

\setcounter{equation}{0}
\setcounter{figure}{0}
\section{Composite Softmax}
\label{app:composite_softmax}

\subsection{Direct Derivation of Composite Softmax}
\label{app:composite_softmax_direct}
The \emph{standard Softmax\/} function is typically used for converting a vector of $K$ scores to probabilities.  Instead of a one-hot vector designating the position of the maximal element, it gives a ``soft'' (or regularized) $\arg\max$ vector.  With a regularization parameter $\gamma>0$ and a Lagrange multiplier $\lambda$ which ensures weights sums to $1$, the Softmax is the weight vector which maximizes a linear combination of the scores, regularizing its weights towards maximum entropy (of a uniform distribution):
\beq
 \label{eq:standard_softmax_maximization}
 \mathbf{p}(\rvz) = \arg \max_{\mathbf{w}} 
 \left \{ \sum_{k=1}^K w_k z_k - \gamma \sum_{i=1}^K w_i \log w_i 
 + \lambda \left [\sum_\ell w_\ell - 1\right ]\right \} \Rightarrow
 p_k(\rvz) = \frac{e^{z_k/\gamma}}{\sum_{j=1}^K e^{z_j/\gamma}},
\eeq
$k=1,\ldots,K$.
The greater is $\gamma$ the ``softer'' the solution, uniform as $\gamma \rightarrow \infty$.  The smaller is $\gamma$ the closer $\rvp(\rvz)$ is to a one-hot $\arg\max$.  Mapping Softmax to scores is over-parameterized.  A reduced version fixes a score at $1$ (Sigmoid for $K=2$).  The normalizer is a \emph{partition function\/}. The primitive \emph{log-partition function\/} $H(\rvz)$ is the maximum of \eref{eq:standard_softmax_maximization}, 
\beq
\label{eq:standard_softmax_log_partition}
 H (\rvz{}) \dfn \gamma \log \sum_{k=1}^K e^{z_k/\gamma}, ~~~~
 h_k (\mathbf{z}) =
 \frac{\partial H (\mathbf{z} )}{\partial z_k} =
 \frac{\partial \left [ \gamma \log \sum_j e^{z_j/\gamma} \right ]}{\partial z_k} =
 \frac{e^{z_k/\gamma}}{\sum_j e^{z_j/\gamma} } \dfn p_k(\mathbf{z}).
\eeq

\eqref{eq:standard_softmax_maximization} can be maximized with the composite vector $\{Q(z_k)\}$ (instead of $\{z_k\}$), where we define a \emph{unary\/} \emph{score-transform function\/} $f(z)\dfn e^{Q(z)}$, $Q(z)$ is the \emph{log-score-transform function\/}, and its derivative $q(z) = dQ(z)/dz$ is the \emph{scaling function\/}.  This optimization gives
\beqa
\label{eq:composite_softmax}
p_k(\mathbf{z}) &=& \frac{e^{Q(z_k)/\gamma}}{\sum_{j=1}^K e^{Q(z_j)/\gamma}} =
\frac{\left [ f(z_k) 
\right ]^{1/\gamma}}{\sum_{j=1}^K \left [ f(z_j) \right ]^{1/\gamma}}, \\
\label{eq:composite_log_partition}
H(\mathbf{z}) &=& \gamma \cdot \log \sum_{k=1}^K e^{Q(z_k)/\gamma} =
\gamma \cdot \log \left \{ \sum_{k=1}^K \left [ f(z_k) \right ]^{1/\gamma} \right \}, \\
\label{eq:composite_derivative}
h_k(\mathbf{z}) &\dfn& \{ \nabla H(\rvz)\}_k \dfn \frac{\partial H(\mathbf{z})}{\partial z_k} =
 \frac{\frac{dQ(z_k)}{d z_k} \cdot e^{Q(z_k)/\gamma}}{\sum_{j=1}^K e^{Q(z_j)/\gamma}} 
 = q(z_k) \cdot p_k (\mathbf{z}), \\
\label{eq:sensitivity_multi_class}
\Delta_k(\rvz) &=& h'_k(\rvz) =
p_k(\rvz) \cdot 
 \left \{
 q'(z_k) +
 \frac{1}{\gamma} \cdot 
 \left [1 - p_k (\rvz) \right ] \cdot q(z_k)^2
 \right \}.
\eeqa
The \emph{composite Softmax\/} in \eref{eq:composite_softmax} gives a gradient of $H(z)$, whose $k$-th component is a scaled composite Softmax, \emph{amplified\/} by $q(z_k)$, or a \emph{weighted\/} scaling link $q(z_k)$ weighted by the Softmax probability.  The gradient in \eref{eq:composite_derivative} is a multi-class \emph{link function\/}. 
All quantities derived in \eref{eq:standard_softmax_log_partition}-\eref{eq:sensitivity_multi_class} include the regularization factor $\gamma$ that was included in the composite Softmax optimization using \eref{eq:standard_softmax_maximization}.  Appendix~\ref{app:link_design} illustrates compositions with various functions.  The $k$-th class sensitivity in \eref{eq:sensitivity_multi_class} is shown in Appendix~\ref{app:sensitivitiy} to be $h'_k(\rvz)$, the derivative of $h_k(\rvz)$ w.r.t.\ $z_k$.

\subsection{Regularized Gradient Amplification}
\label{app:softmax_amp}
Unlike the derivation in Appendix~\ref{app:composite_softmax_direct}, for simplicity, a regularization scale $\gamma$ was not included  in Section~\ref{sec:composite_softmax}.  We show that the method in Section~\ref{sec:composite_softmax} can also be used, as in Appendix~\ref{app:composite_softmax_direct}, with an additional regularization knob that balances between a uniform composite Softmax solution and a maximal one.  We follow definitions of Section~\ref{sec:composite_softmax}, but add another regularization exponent to the initial generalized Softmax definition.  Class $k$ probability is given by
\beq
 \label{eq:app_f_generalized_softmax}
 p_k (\mathbf{z}) \dfn \frac{\left [ f(z_k) \right ]^{1/\gamma}}{\sum_{\ell=1}^K \left [ f(z_\ell) \right ]^{1/\gamma}}.
\eeq
With $\gamma=1$, \eqref{eq:app_f_generalized_softmax} reduces to \eref{eq:f_generalized_softmax}.  The log-partition function is given by
\beq
 \label{eq:app_log_partition_generalized}
 H(\mathbf{z}) \dfn \gamma \cdot \log \left (\sum_{k=1}^K \left [ f(z_k) \right ]^{1/\gamma} \right ).
\eeq
The derivative of $H(\mathbf{z})$ w.r.t.\ $z_k$ is given by
\beq
 \label{eq:app_log_part_derivative}
 h_k \left (\mathbf{z} \right ) = 
 \frac{\partial H \left (\mathbf{z} \right )}{\partial z_k} =
 \frac{\left [d f(z_k)/ d z_k \right ] \cdot \left [ f(z_k) \right ]^{1/\gamma}}{f(z_k) \cdot \sum_\ell \left [ f(z_\ell) \right ]^{1/\gamma}} =
 \frac{f'(z_k)}{f(z_k)} \cdot \frac{\left [f(z_k) \right ]^{1/\gamma}}{\sum_\ell \left [ f(z_\ell) \right ]^{1/\gamma}} 
 \dfn q(z_k) \cdot p_k(\mathbf{z}).
\eeq
The analysis holds for any $\gamma>0$. Regardless of the value of $\gamma$,
\beq
 \label{eq:app_amplification}
 q(z) \dfn \frac{f'(z)}{f(z)}
\eeq
is a unary scaling (amplification) function.  \eqref{eq:app_amplification} gives the same differential equation as in \eqref{eq:amplification} with the solution in \eref{eq:scoring_scaling_solution}, and the factorization of the link as in Theorem~\ref{theorem:scaling}.  This solution is identical to the directly derived composite Softmax in Appendix~\ref{app:composite_softmax_direct}.

\subsection{Interesting Composite Functions}
{\bf Exponential scaling:} An exponential scaling function
\beq
 \label{eq:exponential_q}
 q(z) = \alpha \cdot e^{\alpha (z + \beta)}
\eeq
gives a double exponential composite Softmax
\beq
\label{eq:softmax_double_exp}
p_k (\mathbf{z}) = \frac{e^{e^{\alpha \cdot (z_k + \beta)}}}
{\sum_{j=1}^K e^{ e^{ \alpha \cdot (z_j + \beta)}}}.
\eeq

{\bf Sigmoid scaling:} A Sigmoid scaling function
\beq
\label{eq:sigmoid_amp}
q(z) = \alpha \cdot \sigma[\alpha (z + \beta)] = \frac{\alpha}{1 + e^{-\alpha(z+\beta)}}
\eeq
gives a \emph{SoftmaxPlus\/} (applying Softmax over Softplus), which interestingly reduces to an add-$1$ Softmax
\beq
\label{eq:softmaxplus}
p_k(\mathbf{z}) = 
\frac{1 + \exp[\alpha (z_k + \beta)]}{\sum_{j=1}^K \left \{ 1 + \exp[\alpha (z_j + \beta)] \right \}} =
\frac{1 + \exp[\alpha (z_k + \beta)]}{K + \sum_{j=1}^K  \exp[\alpha (z_j + \beta)]}.
\eeq
A respective composite Sigmoid gives
\beq
 \label{eq:sigmoid_of_softplus1}
 p(z) = \frac{1 + e^{\alpha (z + \beta)}}{2 + e^{\alpha (z + \beta)}}.
\eeq
In a similar manner to \eref{eq:softmaxplus}-\eref{eq:sigmoid_of_softplus1}, a constant $c$ can be added instead of $1$ to each class.

{\bf Hyperbolic tangent scaling:} A $\tanh(\cdot)$ scaling function
\beq
\label{eq:tanh_amp}
q(z) = \alpha \cdot \tanh[\alpha (z+\beta)]
\eeq
gives a \emph{SoftCosh\/} composite Softmax
\beq
\label{eq:softcosh}
p_k (\mathbf{z}) = \frac
{\cosh[\alpha(z_k+\beta)] }
{\sum_{j=1}^K \cosh
[\alpha(z_j+\beta)]} =
\frac
{e^{\alpha(z_k+\beta)} + e^{-\alpha(z_k+\beta)} }
{\sum_{j=1}^K \left \{
e^{\alpha(z_j+\beta)} + e^{-\alpha(z_j+\beta)} \right \} }.
\eeq

{\bf Piecewise scaling:} Similarly, a derivative of a piecewise function can be used for scaling, giving a composite piecewise function, for example, SoftSmeLU.

\setcounter{equation}{0}
\setcounter{figure}{0}
\section{Proofs}
\label{app:proofs}

\subsection{Proof of Theorem~\ref{theorem:scalar_matching_loss}}
\setcounter{theorem}{2} 
\begin{theorem}
\label{theorem:scalar_matching_loss_app}
Let $Q(z)$ be a continuous twice differentiable function over $\mathbb{S}$ (except at most a discrete set of non-continuous derivative singular points).  Let $q(z) = Q'(z) = dQ(z)/dz$ be its derivative.  Then, if $Q(z)$ is convex over $\mathbb{S}$ (or if $q(z)$ is monotonically non-decreasing over $\mathbb{S}$) then there exists a matching loss as defined in \eqref{eq:matching_loss_sig_func} which is convex over $\mathbb{S}$ with scaling $q(z)$ for $z\in\mathbb{S}$.
\end{theorem}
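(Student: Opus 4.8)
The plan is to reduce convexity of the loss in $\hat{s}$ to convexity of its primitive and then to read off the sign of the link's derivative. Recall from \eref{eq:matching_loss_sig_func} that $\mathcal{L}_m(\hat{s},s) = D_H(\hat{s},s)$ is a Bregman divergence whose dependence on the free variable $\hat{s}$ enters only through $H(\hat{s}) = \log(1+e^{Q(\hat{s})})$ and through the linear term $-\hat{s}\cdot q(s)p(s)$. A linear term cannot affect convexity, so $\mathcal{L}_m(\cdot,s)$ is convex over $\mathbb{S}$ if and only if $H$ is convex over $\mathbb{S}$, i.e.\ if and only if the link $h(z)=H'(z)=q(z)\,p(z)$ is monotonically non-decreasing (equivalently $H''(z)=h'(z)\ge 0$). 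This is the same principle invoked for the basic scalar loss, where a non-decreasing link yields a convex primitive and hence a convex loss; the factorization $h(z)=q(z)p(z)$ is exactly Theorem~\ref{theorem:scaling}, so any loss we build this way automatically carries scaling $q(z)$ as claimed.

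First I would invoke the already-derived sensitivity expression \eref{eq:sigmoid_sensitivity},
\beq
 h'(z) = p(z) \left\{ q'(z) + \left[1 - p(z)\right] q^2(z) \right\}.
\eeq
Here $p(z)=\sigma[Q(z)]\in(0,1)$, so $p(z)\ge 0$ and $1-p(z)\ge 0$ hold unconditionally, and $q^2(z)\ge 0$ trivially. Thus the only factor whose sign is not automatic is $q'(z)$. Under the hypothesis that $Q$ is convex --- equivalently that $q=Q'$ is monotonically non-decreasing, so that $q'(z)=Q''(z)\ge 0$ wherever it exists --- every factor inside the braces is non-negative, whence $h'(z)\ge 0$. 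This gives monotonicity of $h$, hence convexity of $H$, hence convexity of the loss.

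The one technical point I would treat carefully is the finite set of singular points at which $q$ (and hence $h$) need not be differentiable; this is why the two stated hypotheses are phrased as alternatives and why the conclusion is existential. There I would argue by continuity: $Q$ is continuous, so $H$ and $h$ are continuous, and a continuous function that is non-decreasing on each subinterval between consecutive singular points is non-decreasing on all of $\mathbb{S}$ --- equivalently, a continuous function whose second derivative is non-negative off a discrete set is convex. Any jumps that $q$ suffers at these points are upward (since $q$ is non-decreasing), so they only reinforce the monotonicity of $h$. I would also note that $p(z)\in[0,1]$ regardless of whether the range of $Q$ is all of $\mathbb{R}$, so no boundary issue arises from the composite Sigmoid. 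Since a prescribed scaling $q$ determines $Q$ only up to an additive constant, each admissible choice of that constant yields a loss of the form \eref{eq:matching_loss_sig_func}, and the argument above shows every one of them is convex --- giving the \emph{existence} in the statement. Finally, the sufficiency-not-necessity remark is immediate from the displayed formula: because $[1-p(z)]\,q^2(z)\ge 0$, the derivative $h'(z)$ can stay non-negative even where $q'(z)$ dips slightly below zero, so some non-convex $Q$ still induce convex losses. I expect the handling of the singular set, rather than the sign computation, to be the only real obstacle.
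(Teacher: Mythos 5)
Your proposal is correct and follows essentially the same route as the paper's proof: reduce convexity of the loss to monotonicity of the link $h(z)=q(z)p(z)$, read the sign of $h'(z)=p(z)\{q'(z)+[1-p(z)]q^2(z)\}$ from $q'(z)\ge 0$ and $p(z)\in[0,1]$, and dispose of the singular points by noting that any jump of $q$ there is upward so $h$ stays non-decreasing (the paper phrases this via continuity of $p$ and one-sided limits of $h$, which is the same argument). The only cosmetic slip is your claim that $h$ is continuous at the singular points --- it need not be --- but your subsequent upward-jump remark is exactly what is needed, so nothing is missing.
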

\begin{proof}{}
The two conditions are identical because if a differentiable $Q(z)$ is convex then its derivative $q(z)$ is monotonically non-decreasing, and vice versa.  This also holds at cusp points in which the derivative may not be continuous but is increasing.  It is sufficient to show that the link $h(z)$ is non-decreasing (or $H(z)$ is convex), because then the loss gradient in \eqref{eq:matching_loss_sig_grad} is a non-decreasing function of $\hat{s}$, which implies that the loss in \eqref{eq:matching_loss_sig_func} is convex in $\hat{s}$.  Scaling by $q(z)$ is obtained by definition of the link $h(z)$. The next step follows Equation~(3.9) for scalar composition in Section~3.2.4 in \citet{boyd2004convex} (with the mapping $Q(\cdot)\rightarrow g(\cdot)$ and the standard Softplus $\rightarrow h(\cdot)$ to the notation in \citet{boyd2004convex}).  Due to the convexity of $Q(z)$, because the standard Softplus is convex and nondecreasing, it follows that $H(z)$ is convex.  In more detail,
differentiating $h(z)$ in \eqref{eq:composite_sigmoid} to obtain the second derivative $H''(z)=h'(z)$ of $H(z)$ gives the derivative in \eqref{eq:sigmoid_sensitivity}, which can be expressed as in \eqref{eq:scalar_sensitivity}.  (The second term in \eqref{eq:scalar_sensitivity} is the first in Equation~(3.9) in \citet{boyd2004convex}, and the first term is the second.)
The conditions assume that $q'(z) \geq 0$.  By definition, $1 \geq p(z) \geq 0$. Therefore, $h'(z) \geq 0$, which implies that $h(z)$ is non-decreasing.

The log score-transform function $Q(z)$ is continuous.  Therefore, $f(z)=e^{Q(z)}$ is continuous, and so is $p(z)$. For singular points $z_0$, by the assumption of the Theorem, $q(z_0 + \delta) \geq q(z_0-\delta)$ for $\delta > 0$, because $q(z)$ is non-decreasing.  Because $p(z)$ is continuous, this implies that  $\lim_{\delta\rightarrow 0}h(z_0+\delta) = \lim_{\delta\rightarrow 0}p(z_0) q(z_0 + \delta) \geq \lim_{\delta\rightarrow 0}p(z_0) q(z_0 - \delta) = \lim_{\delta\rightarrow 0} h(z_0 - \delta)$.  Hence, $h(z)$ is non-decreasing, thus concluding the proof.
\end{proof}

\subsection{Proof of Theorem~\ref{theorem:multiclass_matching_loss}}
\begin{theorem}
\label{theorem:multiclass_matching_loss_app}
Let $Q(z)$ be a continuous twice differentiable function over $\mathbb{S}$, with $q(z) = Q'(z) = dQ(z)/dz$ its derivative.  Then, if $Q(z)$ is convex over $\mathbb{S}$ (or if $q(z)$ is monotonically non-decreasing over $\mathbb{S}$) then there exists a multi-class matching loss as defined in \eqref{eq:multi_matching_loss_softmax} which is convex over $\mathbb{S}^K$ with scaling $q(z)$ for $z\in\mathbb{S}$, and Softmax function $p_k(\mathbf{z});~\forall k=1,2,\ldots,K;~ \mathbf{z}\in \mathbb{S}^K$.
\end{theorem}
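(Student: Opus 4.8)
The plan is to reduce the statement to the convexity of the log-partition primitive $H(\mathbf{z}) = \log \sum_{k=1}^K e^{Q(z_k)}$ of \eref{eq:log_partition_generalized}. By \eref{eq:matching_loss_def_multi}, the multi-class matching loss \eref{eq:multi_matching_loss_softmax} is exactly the Bregman divergence $D_H(\hat{\mathbf{s}},\mathbf{s})$ generated by $H$, and a Bregman divergence is convex in its first argument if and only if its generating potential $H$ is convex. Hence it suffices to establish convexity of $H$ over $\mathbb{S}^K$. The asserted factorization, namely that the link is $h_k(\mathbf{z}) = q(z_k)\, p_k(\mathbf{z})$ with $p_k$ the composite Softmax of \eref{eq:f_generalized_softmax}, is immediate from Theorem~\ref{theorem:scaling}, so only the convexity of $H$ remains to be shown.

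First I would prove that the Hessian of $H$ is positive semidefinite everywhere on $\mathbb{S}^K$. Starting from the second-derivative expression \eref{eq:log_partition_hessian}, I would assemble the quadratic form in an arbitrary direction $\mathbf{x}\in\mathbb{R}^K$ and regroup the diagonal and off-diagonal contributions as
\beq
\sum_{k=1}^K \sum_{j=1}^K x_k\, x_j \frac{\partial^2 H(\mathbf{z})}{\partial z_k \partial z_j}
= \sum_{k=1}^K p_k(\mathbf{z})\, q'(z_k)\, x_k^2
+ \left[ \sum_{k=1}^K p_k(\mathbf{z})\, q(z_k)^2\, x_k^2 - \left( \sum_{k=1}^K p_k(\mathbf{z})\, q(z_k)\, x_k \right)^2 \right].
\eeq
Isolating these two groups is the crux of the proof: the cancellation that produces the $-\big(\sum_k p_k(\mathbf{z})\, q(z_k)\, x_k\big)^2$ term comes precisely from collecting the off-diagonal $-p_k(\mathbf{z})\, p_j(\mathbf{z})\, q(z_k)\, q(z_j)$ entries.

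Next I would argue that each group is non-negative. The first is non-negative because $q'(z) = Q''(z) \geq 0$ by the convexity of $Q$ (equivalently, because $q$ is monotonically non-decreasing), and $p_k(\mathbf{z}) \geq 0$ always. The bracketed second group I would recognize as a variance: since the $p_k(\mathbf{z})$ are non-negative and sum to $1$ by \eref{eq:f_generalized_softmax}, letting $Y$ be the random variable taking value $q(z_k)\, x_k$ with probability $p_k(\mathbf{z})$, the bracket equals $\mathbb{E}[Y^2] - (\mathbb{E}[Y])^2 = \Var(Y) \geq 0$; equivalently it is the Cauchy--Schwarz inequality weighted by $p_k(\mathbf{z})$ together with $\sum_k p_k(\mathbf{z}) = 1$. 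Both groups being non-negative, the Hessian is positive semidefinite, $H$ is convex on $\mathbb{S}^K$, and therefore the matching loss is convex. This parallels the scalar case of Theorem~\ref{theorem:scalar_matching_loss_app}, where the same two-term split corresponds to the scalar composition rule Equation~(3.9) of \citet{boyd2004convex}.

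The main obstacle I anticipate is not any single computation but the bookkeeping of the off-diagonal Hessian terms, so that the ``variance'' structure in the second group becomes visible; once the quadratic form is written in the grouped form above, non-negativity is immediate, since a variance is non-negative and the $q'$-term is non-negative termwise. A minor additional point, handled exactly as in the scalar proof, is that continuity of $p_k(\mathbf{z})$ together with the monotonicity of $q$ extends the conclusion across any isolated points where $Q''$ is not continuous, so that $H$ remains convex throughout $\mathbb{S}^K$.
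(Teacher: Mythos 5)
Your proposal is correct and follows essentially the same route as the paper's proof: both reduce the claim to positive semidefiniteness of the Hessian of the log-partition primitive $H$, expand the quadratic form from \eqref{eq:log_partition_hessian}, and split it into the termwise non-negative $\sum_k p_k(\mathbf{z})\,q'(z_k)\,x_k^2$ part plus a variance $\mathrm{Var}_{p(\mathbf{z})}[q(z)\,x]\ge 0$ over the class distribution $p_k(\mathbf{z})$. The only cosmetic difference is that the paper additionally notes an alternative shortcut via the log-sum-of-convex-functions composition rule (Example~(3.14) of \citet{boyd2004convex}), which your argument does not need.
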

\begin{proof}{}
Under the conditions of Theorem~\ref{theorem:multiclass_matching_loss}, we show that the Hessian matrix of $H(\mathbf{z})$, which equals that of $\mathcal{L}_m(\hat{\mathbf{s}}, \mathbf{s})$ at $\mathbf{z} = \hat{\mathbf{s}}$, is semi-positive definite, which implies \citep{boyd2004convex} that both $H(\mathbf{z})$ and $\mathcal{L}_m(\hat{\mathbf{s}}, \mathbf{s})$ are convex functions of $\mathbf{z}$ and $\hat{\mathbf{s}}$, respectively. Theorem~\ref{theorem:scaling} and also \eqref{eq:composite_derivative} give the $k$-th component $h_k(\mathbf{z})$ of the gradient of $H(\mathbf{z})$.  The second derivative of $H(\mathbf{z})$ w.r.t.\ dimensions $k$ and $j$ is in \eqref{eq:log_partition_hessian}.  Let $\mathbf{x} \in \mathbb{S}^K$ be any $K$-dimensional column vector in $\mathbb{S}^K$.  Then, the quadratic form in \eqref{eq:multi_taylor_approx} with $\delta_k = x_k$ can be expressed as
\beq
\label{eq:quadratic_form}
\mathbf{x}^T \cdot
\left \{
 \frac{\partial^2 H(\mathbf{z})}{\partial z_k \partial z_j} 
\right \} \cdot \mathbf{x} =
\sum_{k=1}^K p_k (\mathbf{z}) q'(z_k) x_k^2 +
\text{Var}_{p(\mathbf{z})} \left [ q(z)\cdot x\right ] \geq 0
\eeq
where $\{\cdot\}$ is the Hessian matrix with $kj$ element expressed. The expectation $\mathbb{E}_{p(\mathbf{z})}(\cdot)$ and variance $\text{Var}_{p(\mathbf{z})}(\cdot)$ are over the $K$ classes w.r.t.\ the  probability  $p_k(\mathbf{z})$ for class $k$.

Because $q(z)$ is non-decreasing, $q'(z_k) \geq 0$, $\forall z_k \in \mathbb{S}$.  Thus the first term in \eref{eq:quadratic_form} is always non-negative.  The remaining two terms are the variance of the scalar random variable $q(z)\cdot x$ computed as expectation over the $K$ classes.

Alternatively, from non-negativity of a quadratic form,  Example~(3.14) in \citet{boyd2004convex} states that $\log \sum_k Q_k(\rvz)$ is convex if all $Q_k(\rvz)$ are convex.  Defining $Q_k(\rvz)=Q(z_k)$ can use this result to prove Theorem~\ref{theorem:multiclass_matching_loss}.
\end{proof}

\setcounter{equation}{0}
\setcounter{figure}{0}
\section{Non Selective Losses}
\subsection{Re-Weighting}
\label{app:reweight}
Re-weighting of standard losses can strongly differentiate between scores in some region, but not necessarily between regions.  With square loss (Figure~\ref{fig:reweighting}), re-weighting by the observed scores may not sufficiently penalize overestimation of low scores, as shown in the left graph of Figure~\ref{fig:reweighting}.  The graph shows a weighted square loss relative to observed $s\in\{-3, 0, 3\}$ (red, green, and blue curves; respectively).  The loss is weighted linearly by $s-b$, where $b < -3$ is a bias.  For the high score $s=3$, the loss is scaled the most, providing high region sensitivity in the high score region.  Underestimation of a high score is also highly penalized.  For the lowest score $s=-3$, the loss is scaled much less than for the other observed scores.  This discounts low sensitivity low-score regions as desired.  However, unlike selective matching losses, overestimating low scores (predicting $\hat{s}$ in high score regions) is under-penalized, allowing undesirable high predictions of low scores.

The middle graph uses a similar weighting regime of a square loss, but w.r.t.\ the predicted score $\hat{s}$.  The loss is no longer convex, and may under-penalize underestimation of high scores.  The right graph shows re-weighting by a product of linear functions w.r.t.\ both observed and predicted scores.  Again, losses are no longer convex, and under-estimation may be under-penalized.  Unlike the behavior shown for re-weighting of standard losses, selective matching losses can be enhanced by proper re-weighting.

\begin{figure}[t]
    \centering
    \makebox[\textwidth][c]{
    \includegraphics[width=1\textwidth,height=0.25\textwidth]{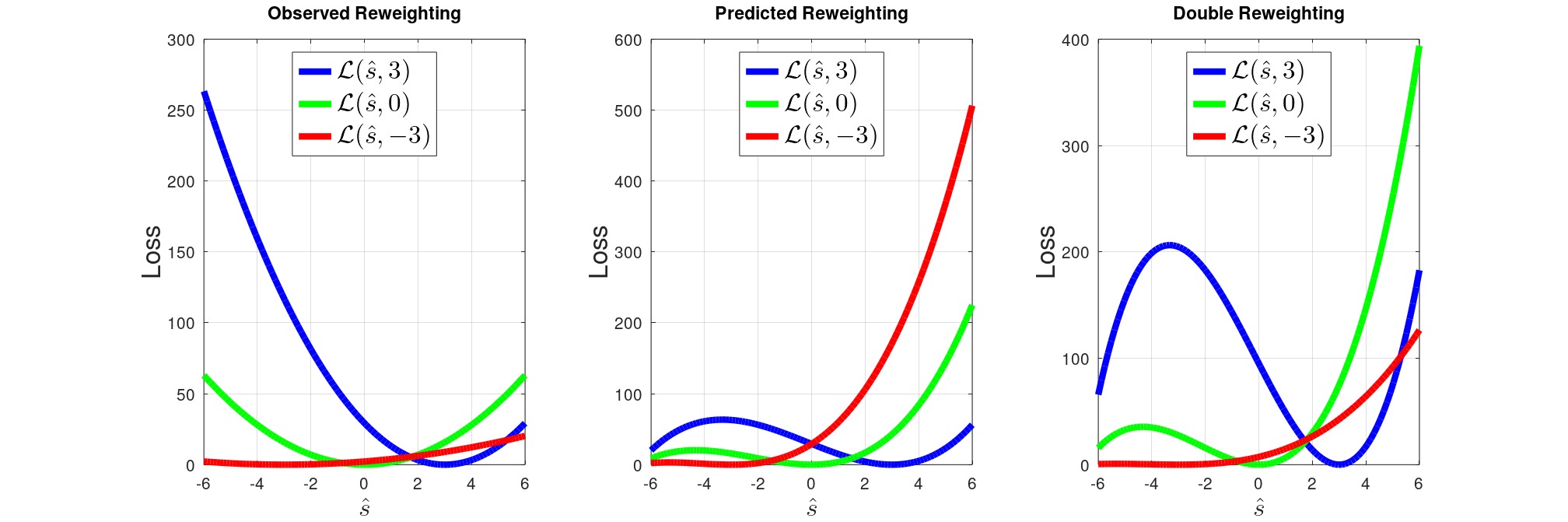}}
    \caption{\small{Re-weighted square losses vs.\ $\hat{s}$ for observed scores $s\in \{-3, 0, 3\}$ with loss re-weighting proportional to observed score (left), predicted score (middle), and both (right). (Loss scaled by $\kappa \cdot [z-\min_k\{s_k\} + \delta]$ for $\kappa = 0.5$ and $\delta=0.5$, where $z$ is the re-weighting score.)}}
    \label{fig:reweighting}
\end{figure}

\subsection{Cross Entropy Losses}
In this Subsection, we show how Cross Entropy (CE) losses defined directly by a probability distribution induced by scores in the score domain relate to selective matching losses.  With a standard Sigmoid probability, gradients of CE and selective matching losses are equal.  However, with composite Sigmoid and Softmax, with nonlinear log-score transforms, CE losses defined with the respective probability functions, are no longer selective, and may, in fact, be ill defined, non-convex.

\label{app:CE}
\subsubsection{Sigmoid scalar Loss}
Sensitivity of a Sigmoid link, as in the left three columns of Figure~\ref{fig:selective_bin_loss_regions}, can be replicated by CE loss that minimizes the KL divergence $D_{KL}[p(s)||p(\hat{s})]$ between an observed probability $p(s)$ and a predicted one $p(\hat{s})$.  With a Sigmoid link, these probabilities equal the link $h(\cdot)$.
\beqa
\nonumber
\mathcal{L}_{CE} \left ( \hat{s}, s \right ) &=& -\frac{1}{\alpha} \cdot \left \{ p(s) \log p (\hat{s}) + [1-p(s)] \log [1 - p(\hat{s})] \right \}
\\
\nonumber
&=&
\frac{1}{\alpha} \cdot \left \{
p(s) \log \left [1 + e^{-\alpha(\hat{s} - \beta)} \right ] + [1-p(s)] \log \left [1 + e^{\alpha(\hat{s} - \beta)} \right ] \right \} \\
\label{eq:loss_CE_scalar}
&=& 
\frac{1}{\alpha} \cdot \left \{
\log \left [ 1 + e^{\alpha(\hat{s} - \beta)} \right ] - p(s) 
\left [\alpha \cdot \left (\hat{s} - \beta \right ) \right ]
\right \}
\eeqa
The gradient of the CE loss in \eqref{eq:loss_CE_scalar} is
\beq
\label{eq:grad_CE_scalar}
g_{CE} \left ( \hat{s}, s \right )= p \left ( \hat{s} \right ) - p \left ( s \right ) = 
h\left ( \hat{s} \right ) - h(s) = g_m \left ( \hat{s}, s \right ),
\eeq
and equals that of the selective matching loss.

\subsubsection{Composite Sigmoid scalar loss}
A CE loss, which is meaningful only for injective $p(z)$, is given by
\beq
 \label{eq:CE_loss_sig_func}
 \mathcal{L}_{CE} \left (\hat{s}, s \right ) =
\log \left [ 1 + e^{Q(\hat{s})} \right ] -  p(s) \cdot Q(\hat{s})
\eeq
Differentiating \eqref{eq:CE_loss_sig_func} w.r.t.\ $\hat{s}$ gives
\beq
 \label{eq:CE_loss_sig_grad}
 g_{CE} \left ( \hat{s}, s \right ) =
 \frac{\partial \mathcal{L}_{CE} \left (\hat{s}, s \right )}{\partial \hat{s}} =
 q(\hat{s}) \cdot p(\hat{s}) - q(\hat{s}) \cdot p(s) =
 q(\hat{s}) \cdot \left [ p(\hat{s}) - p(s) \right ] \neq h(\hat{s}) - h(s).
\eeq
Unless $Q(\cdot)$ is linear, because selectiveness with a nonlinear $Q(\cdot)$ is mostly controlled by $q(z)$, a CE loss is no longer selective.

\subsubsection{Composite Softmax multi-class loss}
A multi-class CE loss is not selective, is not guaranteed to be convex, and is given by 
\beq
 \label{eq:CE_loss_soft_func}
 \mathcal{L}_{CE} \left (\hat{\mathbf{s}}, \mathbf{s} \right ) =
\log \left \{ \sum_{k=1}^K e^{Q(\hat{s}_k)} \right \} -  
\sum_{k=1}^K p_k(\mathbf{s}) \cdot Q(\hat{s}_k),
\eeq
with gradient given by
\beq
g_{CE,k} (\hat{\mathbf{s}}, \mathbf{s}) = q(\hat{s}_k) \cdot \left [
p_k(\hat{\mathbf{s}}) - p_k(\mathbf{s})
\right ] \neq g_{m,k}(\hat{\rvs}, \rvs).
\eeq
Scaling the score difference only by the scaling of the predicted score no longer gives selective flexibility.  The gradient is not equal that of the selective loss in \eqref{eq:multi_matching_loss_sm_grad} if $Q(\cdot)$ is nonlinear.

\subsection{Standard Softmax}
\label{app:multi_basic}
We demonstrate the limitations of the standard Softmax.  First we generalize the standard Softmax, and show that with a matching loss it gives the same loss gradients as a CE loss.  Then, we show that even with all these generalizations, when applied with a multidimensional matching loss, the standard Softmax cannot have focus regions beyond high scores.  Furthermore, even for high scores it does not give region sensitivity, and it cannot distinguish between two high scores.

A standard $\gamma$-regularized, shifted and scaled Softmax, with a $\rho_k$ scale bias to favor some classes over others, is defined as
\beq
\label{eq:softmax_linear}
p_k(\mathbf{z}) = \frac{\rho_k^{1/\gamma} \cdot e^{\alpha(z_k-\beta)/\gamma}}
{\sum_{j=1}^K \rho_j^{1/\gamma} \cdot e^{\alpha(z_j-\beta)/\gamma}}.
\eeq
Its log-partition function and its $k$-th derivative component are given by
\beq
\label{eq:softmax_linear_Hh}
H(\mathbf{z}) = \gamma \cdot \log \sum_{k=1}^K \rho_k^{1/\gamma} \cdot
e^{\alpha(z_k-\beta)/\gamma}, ~~~~
h_k(\mathbf{z}) = 
\alpha \cdot p_k (\mathbf{z}).
\eeq
A multi-class matching loss is defined as
\beqa
\nonumber
\mathcal{L}_m (\hat{\mathbf{s}}, \mathbf{s}) &=& 
\gamma \cdot \log \left \{\sum_{k=1}^K \rho_k^{1/\gamma} \cdot
e^{\alpha(\hat{s}_k-\beta)/\gamma} \right \}
-
\gamma \cdot\log \left \{\sum_{k=1}^K \rho_k^{1/\gamma} \cdot
e^{\alpha(s_k-\beta)/\gamma} \right \}  \\
& & 
-
\alpha \cdot \sum_{k=1}^K \left (\hat{s}_k - s_k \right ) \cdot p_k(\mathbf{s}).
\label{eq:softmax_matching_loss}
\eeqa
Its gradient's $k$-th component is
\beq
\label{eq:softmax_matching_grad}
 g_{m,k} (\hat{\mathbf{s}}, \mathbf{s}) = \frac{\partial \mathcal{L}_m (\hat{\mathbf{s}}, \mathbf{s})}
 {\partial \hat{s}_k} =
 \alpha \cdot \left (p_k(\hat{\mathbf{s}}) - p_k (\mathbf{s}) \right )
 = g_{CE, k}  (\hat{\mathbf{s}}, \mathbf{s}).
\eeq
The matching loss gradient equals that of a CE loss
\beq
\mathcal{L}_{CE}(\hat{\mathbf{s}}, \mathbf{s}) =
H(\hat{\mathbf{s}}) - \alpha \cdot \sum_{k=1}^K (\hat{s}_k - \beta) \cdot p_k(\mathbf{s})
\label{eq:softmax_ce_loss}
\eeq
which reduces to standard CE with $\alpha=1$, $\beta=0$, $\gamma=1$ and $\rho_k = 1, \forall k$.

Unlike the scalar case, neither losses in \eref{eq:softmax_matching_loss} and \eref{eq:softmax_ce_loss} give much selective flexibility even if not reduced to the standard Softmax.  The standard Softmax can be used for classification, but is only capable of good distinction between score regions, not within a region. Hence, {\bf any losses that rely on the standard Softmax are not a good choice to distinguish among high scores, low scores, or high or low norms of the score}. This is because gradients in \eqref{eq:softmax_matching_grad} are (constant scaled) difference in probabilities, which remain small for scores in the same region. Thus applications that require good distinction within some score region, such as distillation of soft scores or LLM alignment, may benefit from using other types of multi-class losses.  Moreover, the standard Softmax is {\bf shift invariant}, and thus even its distinction between regions cannot be modified by shifting.  The Sigmoid is a reduced two-dimensional Softmax with a score replaced by $1$.  This asymmetry gives it shift variance that can produce three sensitivity types.  With $K>2$, reducing Softmax is not sufficient to gain as much flexibility. 

\begin{figure}[t]
 \centering
 \makebox[\textwidth][c]{
 \includegraphics[width=1.25\textwidth]{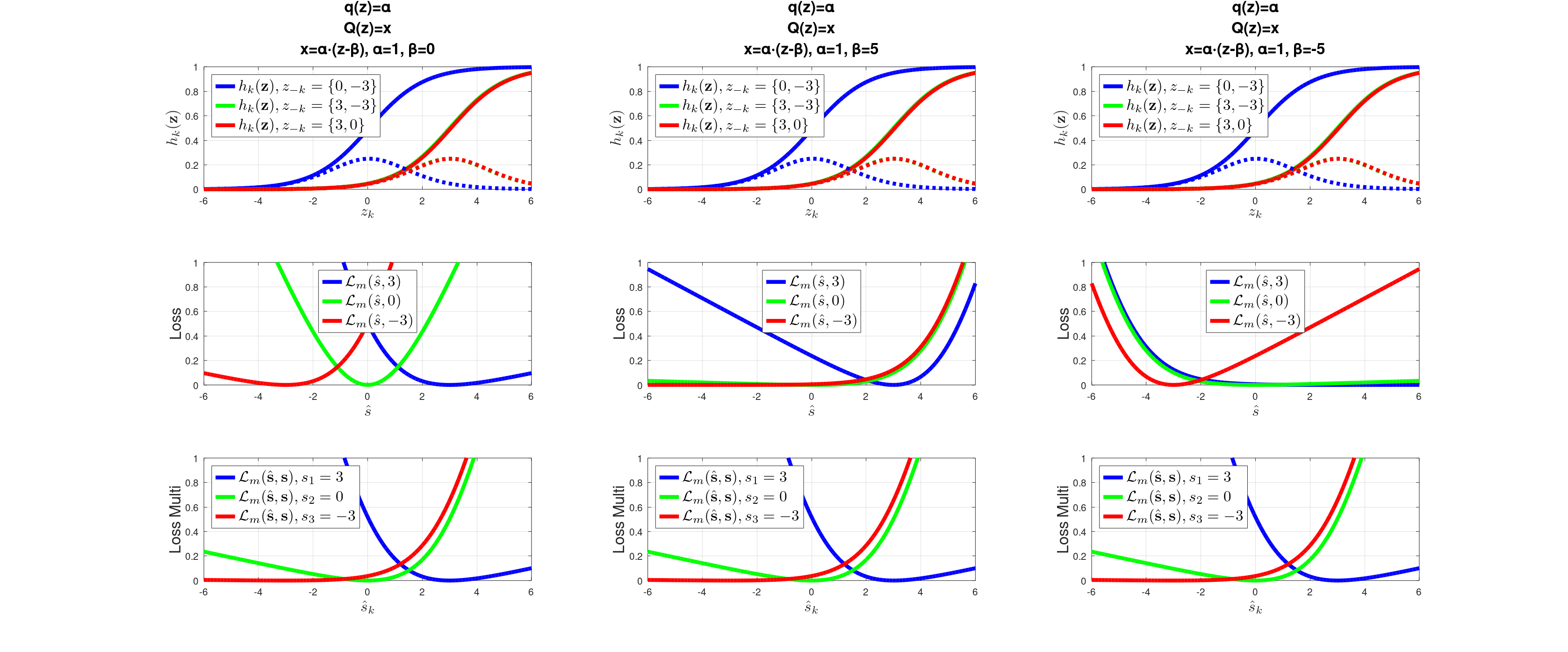}}
 \caption{\small{Links (top), scalar losses (middle), and multi-class composite Softmax loss projections (bottom) vs.\ $\hat{s}_k$ for standard Softmax for $\{s_1,s_2,s_3\}=\{3, 0, -3\}$ (blue, green, red), with no shift, right shift and left shift. Similarly to Figure~\ref{fig:multi_class_regions}, $\hat{s}_j=s_j$ for all $j\neq k$ for multi-class loss projections.}}
 \label{fig:softmax_invalid}
\end{figure}

Limitations of standard Softmax are demonstrated in Figure~\ref{fig:softmax_invalid}.  With $q(z)\propto 1$, there is no region sensitivity, relying solely on ranking sensitivity of $p_k(\rvz)$.  The top curves show the links $h_k(\rvz) = p_k(\rvz)$.  The links for the middle score $s_2=0$ are overlapped by the links for the low score $s_3=-3$, because in both cases, the probability of both classes is suppressed by the winning class with the highest score and probability.  Scalar losses (middle graphs) can be designed to give low-norm, high score, and low score sensitivities with the proper shifts.  However, the multi-class loss is shift-invariant.  Therefore, regardless of the shift in $Q(\cdot)$, it gives the same curves, with sharpest losses when underestimating the highest score, or when overestimating the middle or low scores.  This is a good behavior for classification of the winning class over other classes.  However, it does not provide other sensitivity profiles.  Furthermore, within the high region, the loss is almost flat when overestimating the observed score.  Hence, it does not give region sensitivity to distinguish between different high scores.  This is because the link for the high score region (blue curve on top graphs) becomes flat for high scores, representing high scores that have close probabilities, giving small gradients of $p_1(\hat{\rvs}) - p_1(\rvs)$.

\setcounter{equation}{0}
\setcounter{figure}{0}
\section{Underspecification}
\label{app:under_spec}
A model optimized with a selective matching loss is driven to prioritize optimizing the high sensitivity score regions over discounted lower sensitivities.  The loss resolves underspecification, misspecification, and limited data, preferring more accurate predictions of scores in the high sensitivity regions than of scores in the low ones.  BUST and BLUST, defined in Section~\ref{sec:scalar}, quantify prediction changes with biased underspecification.

Figure~\ref{fig:underspecification} shows a two-dimensional underspecification example, and demonstrates the optima achieved by matching losses.  Two feature values $\{x_1,x_2\}$ define an example with label $s$.  For the top three cases, low label values $s\in\{-10,\ldots,-1\}$ match $x_2$, but equal labels are obtained for each for two different values of $x_1 \in \{1,3\}$.  High label values $x\in\{6,\ldots,10\}$ match the value of $x_1$, but equal labels are obtained for each for two different values of $x_2\in \{1,3\}$.  The model is trained with each of the matching losses until convergence to an optimal linear weight vector $\rvw = (w_1, w_2)$.  The predicted label is $\hat{s} = \rvw \cdot \rvx$, with `$\cdot$' denoting a dot product.  Generally, feature $x_1$ governs high label values, and feature $x_2$ governs low label values.  In each case, the other feature has no influence on the label.  The linear model used is clearly misspecified (or, in fact, underspecified).

Each of the losses used has a different high sensitivity region (as the first three losses in Figure~\ref{fig:selective_bin_loss_regions}).  The first row demonstrates high score sensitivity.  The loss matches high scores (that fall on the identity line), at the expense of low scores that deviate from the line.  The model ignores feature $x_2$ for high scores, and merges pairs of equal high scores with different $x_2$ values.  It still distinguishes between low scores but without accurately predicting these scores, and with pairs of different scores predicted for each, guided by the high sensitivity controlling feature $x_1$.  Sharp loss minima appear on the identity line meeting the high score points.  These minima open up at higher scores due to the decreasing sensitivity of the Sigmoid to the right.  With an exponential link, this does not happen, as the loss keeps getting sharper with increased scores.  Behavior, which is opposite to the high score sensitivity, is observed for low score sensitivity, where overlapping pairs of low scores align on the identity line. The high scores shift away, and remain in pairs.  An unshifted Sigmoid attempts to align scores in the center with the identity, where scores on both side deviate from the identity line.  In this case, pairs are merged where the loss dominates.  This tie is broken by the number of examples on each side. For the third row, most examples have low scores, thus low scores' prediction merges pairs of equal low score labels.  For the bottom row, there are more high score examples, giving the opposite behavior. 

Good distinction between high scores as illustrated in the top row of Figure~\ref{fig:underspecification} is very useful in ranking in information retrieval applications \citep{jarvelin2002cumulated}.  Prediction performance in such applications is judged based on discounted cumulative gain metrics, that mark up accuracy on high relevance items and discount accuracy on low relevance items.  High score sensitivity selective losses that discount low scores can improve such metrics.

\begin{figure}[t]
 \centering
 \makebox[\textwidth][c]{
 \includegraphics[width=1.3\textwidth]{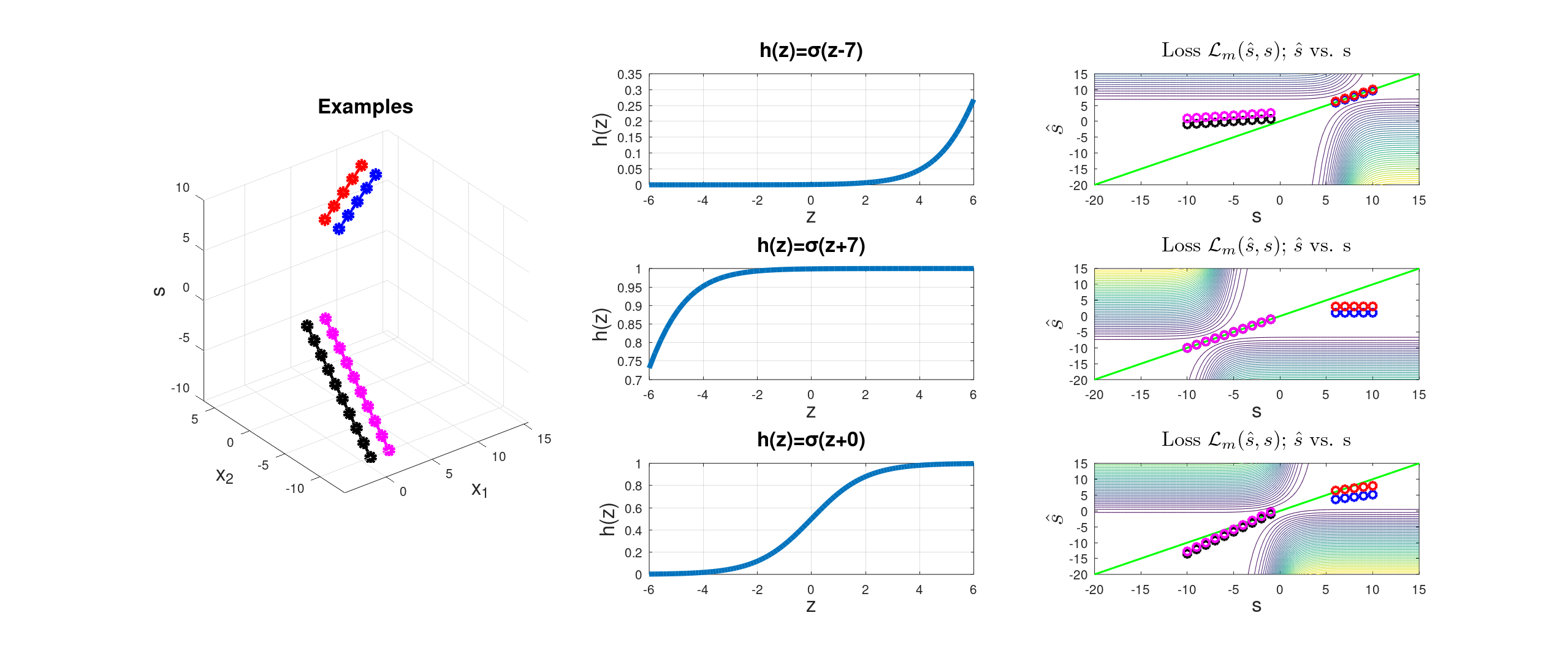}}
 \centering
 \makebox[\textwidth][c]{
 \includegraphics[width=1.3\textwidth]{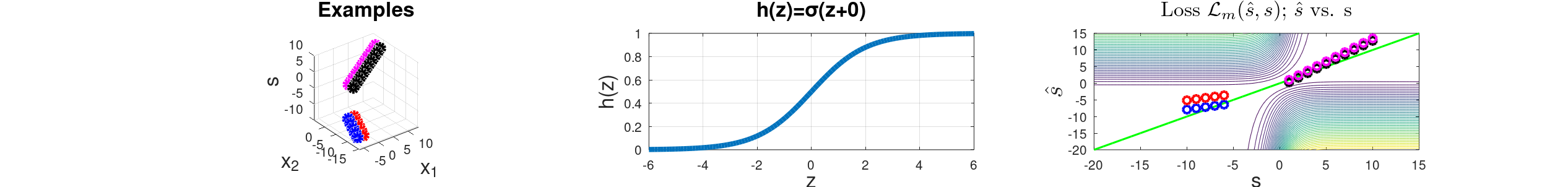} 
 }
 \caption{\small{Optimal predicted labels with matching losses optimized on labels observed for two-dimensional examples. Left: Observed labels as function of a two dimensional feature vector (top - for three different losses, bottom - for the bottom loss).  Middle: link functions used to define the loss (right shifted Sigmoid, left shifted Sigmoid, and centered Sigmoid (bottom two graphs)).  Right: Predicted labels $\hat{s}$ that minimize the scalar selective matching loss defined with the link in the center vs.\ observed labels $s$, and loss contours on the $\{s,\hat{s}\}$ plane.}}
 \label{fig:underspecification}
\end{figure}

\setcounter{equation}{0}
\setcounter{figure}{0}
\setcounter{table}{0}
\setcounter{corollary}{0}
\section{Extensions - Composite Convex Losses}
\label{app:scaled_bin}
Theorems~\ref{theorem:scalar_matching_loss} and~\ref{theorem:multiclass_matching_loss} show that convexity of the log-score transform $Q(z)$ is sufficient for matching losses (scalar and multi-class, respectively) to be convex.  This condition is sufficient but not necessary.  Functions $q(z)$ that are not monotonically non-decreasing over all $z\in\mathbb{S}$ may still have $h'(z) \geq 0$.  Using $\gamma$-regularized composite Sigmoid and Softmax definitions (following Equations~\eref{eq:standard_softmax_log_partition}-\eref{eq:sensitivity_multi_class}), decreasing $\gamma$, shifting away from a uniform distribution, can also ensure $h'(z) \geq 0$ for $z \in \mathbb{S}$ for the scalar case (\eqref{eq:scalar_sensitivity}) and for the multi-class case (\eqref{eq:sensitivity_multi_class}).  This is demonstrated below for the scalar case. First, corollaries of the proof of Theorem~\ref{theorem:scalar_matching_loss} show more general conditions for matching loss convexity.  High norm sensitivity is achieved by symmetric $Q(z)$, giving non injective $p(z)$ and $p_k(\rvz)$.  We next show that increasing, not necessarily convex or symmetric, functions $Q(z)$ that can give injective probabilities can also be used for composite Sigmoid and Softmax.  Finally, we demonstrate function choices that give non-convex losses if used for composite Sigmoid or Softmax. 

\subsection{Non-Convex Log-Score-Transforms $\mathbf{Q(z)}$}
Consider a regularized primitive Softplus
\beq
 \label{eq:softplus_composite}
 H(z) = \gamma \cdot \log \left ( 1 + e^{Q(z)/\gamma} \right ).
\eeq
Corollary~\ref{cor:scalar_match_loss} follows from the proof of Theorem~\ref{theorem:scalar_matching_loss}.
\begin{corollary}
\label{cor:scalar_match_loss}
Let $H(z)$ be defined in \eref{eq:softplus_composite} and be twice continuously differentiable.  If $q(z)=dQ(z)/dz$ satisfies
\beq
\label{eq:cor_match_loss}
q'(z) + \frac{1}{\gamma} \cdot \left [1 - p(z) \right ] \cdot q^2(z) \geq 0
\eeq
for all $z\in \mathbb{S}$ for some regularization parameter $\gamma > 0$. Then, the scaling $q(z)$ defines a convex scalar matching loss in \eref{eq:matching_loss_def} over $s,\hat{s}\in \mathbb{S}$, with regularization strength $\gamma$, and vice versa.
\end{corollary}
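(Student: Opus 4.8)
The plan is to obtain this as a direct consequence of the convexity characterization already used in the proof of Theorem~\ref{theorem:scalar_matching_loss}: a matching loss of the form \eref{eq:matching_loss_def} is convex in $\hat{s}$ over $\mathbb{S}$ if and only if its link $h(z)=H'(z)$ is non-decreasing, equivalently if and only if $h'(z)=H''(z)\geq 0$ throughout $\mathbb{S}$. This equivalence is exactly the engine of that proof, since the loss gradient \eref{eq:matching_loss_sig_grad} is non-decreasing in $\hat{s}$ precisely when $h$ is. The only genuinely new ingredient here is the $\gamma$-regularized primitive $H(z)$ of \eref{eq:softplus_composite}, together with its associated composite Sigmoid $p(z)=\sigma[Q(z)/\gamma]$.

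First I would compute the link for $H(z)=\gamma\log(1+e^{Q(z)/\gamma})$. By the chain rule the explicit factor of $\gamma$ cancels against the $1/\gamma$ inside, yielding $h(z)=q(z)\,p(z)$ with $p(z)=(1+e^{-Q(z)/\gamma})^{-1}$, the $\gamma$-regularized analogue of \eref{eq:composite_sigmoid}. Next I would differentiate once more. Writing $h(z)=q(z)p(z)$ and applying the product rule, the only nontrivial piece is $p'(z)$; using the Sigmoid identity $\sigma'=\sigma(1-\sigma)$ and the chain rule gives $p'(z)=\tfrac{1}{\gamma}\,p(z)[1-p(z)]\,q(z)$. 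Substituting and collecting terms produces
\[
h'(z)=p(z)\left\{q'(z)+\frac{1}{\gamma}\,[1-p(z)]\,q^2(z)\right\},
\]
which is the scalar specialization of the $\gamma$-regularized sensitivity already recorded in \eref{eq:sensitivity_multi_class}.

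The decisive observation — and essentially the entire content of the corollary — is that $p(z)\in(0,1)$ is \emph{strictly} positive for every $z$, because $Q(z)$ is finite (indeed $H$, hence $Q$, is assumed twice continuously differentiable) and $p(z)$ is a Sigmoid value. Consequently the sign of $h'(z)$ is governed entirely by the braced factor, so $h'(z)\geq 0$ for all $z\in\mathbb{S}$ if and only if the displayed inequality \eref{eq:cor_match_loss} holds for all $z\in\mathbb{S}$. Combining this with the convexity characterization above closes both directions simultaneously: the condition forces $h$ non-decreasing and hence the loss convex, while conversely a convex loss forces $h'(z)\geq 0$, and dividing through by the strictly positive $p(z)$ recovers \eref{eq:cor_match_loss}. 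I do not expect a real obstacle; the computation of $h'(z)$ is routine, and the only point requiring care is justifying the division by $p(z)$, which is precisely what upgrades the merely \emph{sufficient} condition of Theorem~\ref{theorem:scalar_matching_loss} to the sharp ``if and only if'' asserted here.
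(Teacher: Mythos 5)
Your proof is correct and follows essentially the same route as the paper's: compute the $\gamma$-regularized link $h(z)=q(z)p(z)$ and its derivative $h'(z)=p(z)\left\{q'(z)+\frac{1}{\gamma}[1-p(z)]q^2(z)\right\}$ (the scalar form of \eref{eq:sensitivity_multi_class}), and identify convexity of the loss with $h'\geq 0$, the sign being governed by the braced factor. Your explicit observation that $p(z)$ is \emph{strictly} positive, which licenses the division needed for the ``vice versa'' direction, is a small but welcome sharpening of the paper's argument, which only notes $p(z)\geq 0$.
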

\begin{proof}{}
Following the same derivation as in \eqref{eq:scalar_sensitivity} but with the regularization factor $\gamma$ gives the second derivative of $H(z)$, which is also the second derivative for $\mathcal{L}_m(\hat{s}, s)$ w.r.t.\ $\hat{s}$.  The first factor of this derivative $p(z)\geq 0$.  The remaining factor is on the left hand side of the condition in \eref{eq:cor_match_loss}.  Thus \eref{eq:cor_match_loss} implies the second derivative condition for convexity, and convexity implies \eref{eq:cor_match_loss}.
\end{proof}

Corollary~\ref{cor:scalar_match_loss} can be re-expressed in terms of the score-transform function $f(z)$.
\begin{corollary}
\label{cor:scalar_match_loss_f}
If $f(z)$ satisfies
\beq
\label{eq:cor_match_loss_f}
f''(z) + 
\left [
\frac{1}{\gamma} - 1 - \frac{1}{\gamma} \cdot 
\frac{\left [ f(z)\right ]^{1/\gamma}}{1 + \left [ f(z)\right ]^{1/\gamma}}
\right ] \cdot
\frac{\left [f'(z) \right ]^2}{f(z)}
\geq 0
\eeq
for all $z\in \mathbb{S}$ for some regularization parameter $\gamma > 0$. Then, the score-transform function $f(z)$ defines a convex scalar matching loss in \eref{eq:matching_loss_def} over $s,\hat{s}\in \mathbb{S}$, with regularization strength $\gamma$, and vice versa.
\end{corollary}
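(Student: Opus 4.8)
The plan is to reduce this corollary to Corollary~\ref{cor:scalar_match_loss} by a change of variables from the scaling function $q(z)$ to the score-transform $f(z)$, exploiting the defining relation $q(z) = f'(z)/f(z)$ of \eref{eq:app_amplification}. Both corollaries assert the same convexity conclusion with the same regularization strength $\gamma$, so it suffices to show that condition \eref{eq:cor_match_loss_f} is algebraically equivalent to condition \eref{eq:cor_match_loss}.

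First I would translate every ingredient of \eref{eq:cor_match_loss} into the language of $f$. Differentiating $q(z) = f'(z)/f(z)$ gives $q'(z) = f''(z)/f(z) - [f'(z)]^2/[f(z)]^2$, while $q^2(z) = [f'(z)]^2/[f(z)]^2$. For the probability factor I would use the $\gamma$-regularized composite Sigmoid induced by the primitive \eref{eq:softplus_composite}, namely $p(z) = [f(z)]^{1/\gamma}/(1 + [f(z)]^{1/\gamma})$, so that $1 - p(z) = 1/(1 + [f(z)]^{1/\gamma})$.

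Next I would substitute these into $q'(z) + \frac{1}{\gamma}[1 - p(z)]\, q^2(z) \geq 0$ and collect the $[f'(z)]^2/[f(z)]^2$ terms. Their combined coefficient is $-1 + \frac{1}{\gamma}[1 - p(z)] = \frac{1}{\gamma} - 1 - \frac{1}{\gamma} p(z)$. Multiplying the whole inequality by $f(z)$, which is strictly positive because $f(z) = e^{Q(z)}$, preserves its direction and turns $f''(z)/f(z)$ into $f''(z)$ and the remaining terms into $[f'(z)]^2/f(z)$ scaled by that same coefficient. Inserting $p(z) = [f(z)]^{1/\gamma}/(1 + [f(z)]^{1/\gamma})$ into the coefficient reproduces \eref{eq:cor_match_loss_f} exactly.

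There is no substantive obstacle here; the argument is pure bookkeeping, and the only point requiring care is the \emph{vice versa} direction. Because multiplication by $f(z) > 0$ is an invertible, sign-preserving operation and each substitution above is an identity rather than an inequality, the chain of equivalences runs in both directions, so \eref{eq:cor_match_loss_f} holds if and only if \eref{eq:cor_match_loss} does, and the two-way statement of Corollary~\ref{cor:scalar_match_loss} is inherited. I would close by noting that the strict positivity $f(z) > 0$ is precisely what lets the single-factor split of the second derivative used in the proof of Corollary~\ref{cor:scalar_match_loss} carry over without change.
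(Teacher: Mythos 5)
Your proposal is correct and matches the paper's (implicit) argument: the paper states Corollary~\ref{cor:scalar_match_loss_f} as a direct re-expression of Corollary~\ref{cor:scalar_match_loss}, obtained exactly as you do by substituting $q=f'/f$, $q'=f''/f-(f')^2/f^2$, and $p(z)=[f(z)]^{1/\gamma}/(1+[f(z)]^{1/\gamma})$ into \eref{eq:cor_match_loss} and multiplying through by $f(z)=e^{Q(z)}>0$, which also preserves the two-way implication. No gap to report.
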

For $\gamma = 1$, condition~\eref{eq:cor_match_loss_f} reduces to
\beq
\label{eq:cor_match_loss_f1}
f''(z) - \frac{\left [f'(z) \right]^2}{1 + f(z)} \geq 0.
\eeq
The proof of Theorem~\ref{theorem:scalar_matching_loss} also implies a necessary condition on the derivative $f'(z)$ of the score-transform function.
\begin{corollary}
\label{cor:f_grad_condition}
If the score-transform function $f(z)$ (or its log score-transform version $Q(z)$) defines a selective matching loss with regularization strength $\gamma=1$ on $\mathbb{S}$ as in \eqref{eq:matching_loss_sig_func}, then the derivative $f'(z)$ of $f(z)$ must be non-decreasing.
\end{corollary}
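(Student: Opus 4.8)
The plan is to read the conclusion off directly from the convexity characterization already established for the special case $\gamma = 1$, rather than re-deriving anything from scratch. By hypothesis $f(z)$ defines a selective matching loss as in \eqref{eq:matching_loss_sig_func} with $\gamma = 1$, which means that loss is convex on $\mathbb{S}$. By Corollary~\ref{cor:scalar_match_loss_f}, and specifically its $\gamma = 1$ specialization \eqref{eq:cor_match_loss_f1}, this convexity is equivalent to the pointwise inequality
\[
f''(z) - \frac{[f'(z)]^2}{1 + f(z)} \geq 0 \qquad \text{for all } z \in \mathbb{S}.
\]
So the first step is simply to invoke this equivalence.

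The second step is the key (and essentially the only) observation: the subtracted term is manifestly non-negative. Its numerator $[f'(z)]^2$ is a square, and its denominator $1 + f(z) = 1 + e^{Q(z)}$ is strictly positive because $f(z) = e^{Q(z)} > 0$. Hence $[f'(z)]^2/(1 + f(z)) \geq 0$, and the displayed inequality forces $f''(z) \geq [f'(z)]^2/(1 + f(z)) \geq 0$ at every $z \in \mathbb{S}$. A non-negative second derivative means $f'(z)$ is non-decreasing, which is exactly the claim, so the proof closes immediately.

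As an independent cross-check I would re-derive the same fact from the scaling formulation. Differentiating $f = e^{Q}$ twice gives $f''(z) = f(z)\,[q'(z) + q^2(z)]$, where $q = Q'$. Comparing this with the convexity condition \eqref{eq:cor_match_loss} at $\gamma = 1$, namely $q'(z) + [1 - p(z)]\,q^2(z) \geq 0$, the difference between the bracket $q'(z) + q^2(z)$ and the convexity expression is exactly $p(z)\,q^2(z) \geq 0$ (since $p(z) = \sigma[Q(z)] \in [0,1]$). Thus $q'(z) + q^2(z) \geq 0$, and because $f(z) > 0$ this again yields $f''(z) \geq 0$. Either route confirms the corollary.

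There is no real obstacle here, as the substantive work was done in establishing Corollary~\ref{cor:scalar_match_loss_f}; the remaining argument is a one-line sign observation. The only point I would take care to phrase correctly is that the statement is a \emph{necessary} condition: convexity of the loss forces $f'$ to be non-decreasing, but the converse fails, because the full convexity requirement \eqref{eq:cor_match_loss_f1} is strictly stronger than $f'' \geq 0$ (it subtracts a positive quantity). I would state the conclusion as a necessary condition, consistent with the framing of the corollary, to avoid suggesting that convexity of $f$ alone suffices.
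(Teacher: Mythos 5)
Your proposal is correct and takes essentially the same approach as the paper: your ``cross-check'' route is verbatim the paper's proof, which differentiates $f'(z)=q(z)f(z)$ to get $f''(z)=f(z)[q'(z)+q^2(z)]\ge f(z)\{q'(z)+[1-p(z)]q^2(z)\}\ge 0$ using $f(z)>0$, $p(z)\in[0,1]$, and Corollary~\ref{cor:scalar_match_loss} at $\gamma=1$. Your primary route via \eqref{eq:cor_match_loss_f1} is just the same one-line sign observation expressed through the equivalent $f$-form condition of Corollary~\ref{cor:scalar_match_loss_f}, and your remark that the statement is only a necessary condition is consistent with the paper.
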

\begin{proof}{}
Differentiating $f'(z) = q(z)\cdot f(z) = q(z) \cdot e^{Q(z)}$ gives
\beq
\label{eq:f_second_derivative}
f''(z) = f(z) \cdot \left [q'(z) + q^2(z) \right ] \geq
f(z) \cdot \left \{ q'(z) + [1 - p(z)]q^2(z) \right \} \geq 0.
\eeq
The first inequality follows from $p(z)\in [0,1]$ and from the definition of $f(z)$ as an exponent, which implies $f(z)\geq 0$.  The second inequality follows from the existence of a matching loss, Corollary~\ref{cor:scalar_match_loss} with $\gamma=1$, and the defintion of $f(z)$.
\end{proof}

\subsection{Increasing Non-Convex Log-Score-Transforms $\mathbf{Q(z)}$}
Theorem~\ref{theorem:scalar_matching_loss} and its corollaries do not require $f(z)$ (or $Q(z)$) to be monotonically increasing.  Specifically, convex $Q(z)$ that attain minima in $\mathbb{S}$ (or more generally in $\mathbb{R}$) can define selective matching losses, as seen with functions such as $\cosh(\cdot)$, $\log \cosh(\cdot)$, Huber, and absolute value monomials.  However, $p(z)$ is no longer a monotonically increasing bijection, creating a challenge mapping it to a probability.

Increasing non-convex functions $Q(z)$ are specifically interesting when we want the composite mapping to $p(z)$ or $\rvp(\rvz)$ to be a monotonic injection (or even a bijection).  Specifically, for high norm sensitivity, scaling functions like $q(z)=\sinh(z)$ have symmetric $Q(z)$ ($Q(z)=\cosh(z)$ for $\sinh(\cdot)$).  The resulting composite Sigmoid or Softmax are no longer injective, and cannot be used to uniquely map scores to probabilities.  We next show that there exist increasing (non-convex) functions $Q(z)$ that can give convex selective losses.

\begin{corollary}
\label{cor:sinh}
Let $Q(z)=\sinh[\alpha(z-\beta)]$, $q(z) = \alpha \cosh[\alpha(z-\beta)]$, and $p(z)= 1/\left \{1 + \exp \left [-\sinh[\alpha(z-\beta)]/\gamma \right ] \right \}$.  Then, if $\gamma \leq 1/\sqrt{2}$, there exist convex selective matching losses.
\end{corollary}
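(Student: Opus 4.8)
The plan is to verify the hypotheses of Corollary~\ref{cor:scalar_match_loss}, which already characterizes convexity of the composite-Sigmoid matching loss through the single scalar inequality \eqref{eq:cor_match_loss}, namely $q'(z) + \frac{1}{\gamma}[1-p(z)]q^2(z) \ge 0$ for all $z \in \mathbb{S}$. Since $Q(z)=\sinh[\alpha(z-\beta)]$ is increasing but \emph{not} convex (its second derivative changes sign at $x=\alpha(z-\beta)=0$), Theorem~\ref{theorem:scalar_matching_loss} does not apply, and the entire content of the corollary is to show that the sigmoidal weight $1-p(z)$ compensates for the region where $q'(z)<0$ once $\gamma$ is small enough.

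First I would substitute the given data. With $x=\alpha(z-\beta)$ one has $q(z)=\alpha\cosh x$, $q'(z)=\alpha^2\sinh x$, and $q^2(z)=\alpha^2\cosh^2 x$, so the factor $\alpha^2>0$ cancels and \eqref{eq:cor_match_loss} reduces to
\[
\sinh x + \tfrac{1}{\gamma}\,[1-p(z)]\cosh^2 x \ \ge\ 0 .
\]
For $x\ge 0$ this is immediate, since $\sinh x\ge 0$ and $1-p(z)\ge 0$; the only real work is on $x<0$.

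On $x<0$ I would exploit that $p(z)=\sigma(\sinh x/\gamma)$ with $\sinh x<0$ forces $1-p(z)>\tfrac12$. Replacing $1-p(z)$ by the lower bound $\tfrac12$ yields the sufficient requirement $\tfrac{1}{2\gamma}\cosh^2 x\ge -\sinh x=\lvert\sinh x\rvert$. Both sides are nonnegative, so I would square, set $t=\sinh^2 x\ge 0$, and use $\cosh^2 x=1+\sinh^2 x$ to obtain the polynomial inequality
\[
t^2 + (2-4\gamma^2)\,t + 1 \ \ge\ 0 .
\]
When $\gamma\le 1/\sqrt{2}$ we have $2-4\gamma^2\ge 0$, so every coefficient is nonnegative and the inequality holds for all $t\ge 0$; this is exactly where the stated threshold originates. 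Feeding this back through Corollary~\ref{cor:scalar_match_loss} establishes $h'(z)\ge 0$, hence the existence of a convex selective matching loss.

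The main obstacle is conceptual rather than computational: because $q'$ is genuinely negative on the half-line $x<0$, convexity cannot come from $Q$ alone and must be bought from the product $[1-p(z)]q^2(z)$, so the crux is to bound $1-p(z)$ from below on precisely the region where $q'(z)<0$. The coarse bound $1-p(z)>\tfrac12$ is what makes the estimate uniform in $x$ and collapses the transcendental condition to a quadratic in $t$; the resulting $1/\sqrt{2}$ is sufficient but not sharp (tracking the discriminant rather than merely the sign of the coefficient of $t$ would relax it), and I would flag that the same $\gamma$-regularized mechanism, via \eqref{eq:sensitivity_multi_class}, is what will be needed for the multi-class analogue.
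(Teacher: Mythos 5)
Your proof is correct and follows essentially the same route as the paper's: both verify the condition of Corollary~\ref{cor:scalar_match_loss}, dispose of $x\ge 0$ immediately, bound $1-p(z)>\tfrac12$ on $x<0$, and reduce $\sinh x+\tfrac{1}{2\gamma}\cosh^2 x\ge 0$ to an elementary quadratic inequality that holds once $\gamma\le 1/\sqrt{2}$. The only difference is cosmetic: the paper substitutes $y=e^{-x}$, drops positive terms, and checks $y^2-4\gamma y+2\ge 0$, whereas you square and work with $t=\sinh^2 x$; your remark that the threshold is sufficient but not sharp is likewise consistent with the paper's argument.
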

\begin{proof}{}
Let $x \dfn \alpha(z-\beta)$.  For $x \geq 0$, the condition in \eqref{eq:cor_match_loss} is satisfied.  For $x < 0$, the left hand side of \eref{eq:cor_match_loss} is
\beq
 \label{eq:cor_sinh_proof}
 \alpha^2 \cdot 
 \left \{\sinh(x) +\frac{1}{\gamma} \cdot 
 \frac{1}{1 + e^{\frac{1}{\gamma} \sinh(x)}} \cdot \cosh^2(x)
 \right \} \geq
 \alpha^2 \cdot \left \{
 \sinh(x) + \frac{1}{2\gamma} \cosh^2(x)
 \right \}.
\eeq
The inequality is because the probability multiplier of $\cosh^2(x)$ is in $[0.5, 1)$ for $x < 0$.  Substituting $y = e^{-x}$ with some algebra shows that if the condition
\beq
 \label{eq:cor_sinh_proof1}
 4 \gamma (y^{-1} - y) + y^2 + y^{-2} + 2 \geq 0
\eeq
is satisfied, then the condition in \eqref{eq:cor_match_loss} is satisfied.  For $x<0$, negative powers of $y$ give exponents of negative values, which are smaller positive values that can be dropped.  Thus if $y^2 - 4\gamma y + 2 \geq 0$, which is true for all $y$ if $\gamma \leq 1/\sqrt{2}$, then the condition in \eref{eq:cor_sinh_proof1} is satisfied, condition \eref{eq:cor_match_loss} is satisfied, and by Corollary~\ref{cor:scalar_match_loss}, there exists a convex selective matching loss.
\end{proof}

The $\cosh(\cdot)$ scaling function $q(z)$ is non-monotonic and is symmetric around the origin, giving a monotonically increasing $Q(z)=\sinh(\cdot)$, whose Sigmoid does give a bijection onto $[0,1]$, giving a probability function. Because of the fast exponential decay, however, diminishing probabilities are obtained for negative scores with increasing norms. Hence, high norm sensitivity requires pinpointing shift and scaling to map the $\sinh(\cdot)$ shape onto $\mathbb{S}$.  Outside a small region, an almost convex exponential shape is obtained.  An example of high norm sensitivity with a $\cosh(\cdot)$ scaling function is shown in Appendix~\ref{app:link_design}.

Other, more moderate negative norm decay, increasing non-convex $\sinh(\cdot)$ shaped functions can also be used for $Q(z)$, giving monotonic injective mapping to probabilities (although it may be hard to design specific sensitivity types for such functions).  Using $q(z) = \alpha |x|^d,~d\geq 1$, $x=\alpha(z-\beta)$; though, does not satisfy the condition in \eref{eq:cor_match_loss}.  Applying a small constant vertical shift on $q(z)$ (making it more similar to the $\cosh(\cdot)$ function), however, is sufficient for satisfying \eqref{eq:cor_match_loss} under some conditions.
\begin{corollary}
\label{cor:poly}
Let $x\dfn\alpha(z-\beta)$, $Q(z)=\frac{\sign\{x\}}{d+1} \cdot \left | x\right |^{d+1} +cx$ and $q(z) = \alpha \left [|x|^d + c\right ]$, with $d\geq 1$.  Then, if $c \geq \max \left \{\sqrt{2\gamma d}, \gamma d - \frac{1}{2} \right \}$, there exist convex selective matching losses.
\end{corollary}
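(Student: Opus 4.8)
The plan is to apply Corollary~\ref{cor:scalar_match_loss}: it suffices to verify the pointwise convexity condition \eqref{eq:cor_match_loss}, namely $q'(z) + \frac{1}{\gamma}\,[1-p(z)]\,q^2(z) \ge 0$ for all $z \in \mathbb{S}$, for the given scaling $q(z) = \alpha\,[\,|x|^d + c\,]$ with $x \dfn \alpha(z-\beta)$. Differentiating gives $q'(z) = \alpha^2 d\,|x|^{d-1}\sign(x)$, so for $x \ge 0$ both summands of \eqref{eq:cor_match_loss} are nonnegative and the condition holds automatically (using $d \ge 1$, so $|x|^{d-1}$ is well defined). The entire difficulty is therefore confined to $x < 0$, where $q'(z) = -\alpha^2 d\,|x|^{d-1}$ is negative and must be dominated by the $p$-weighted term.

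Next I would control the probability factor exactly as in the proof of Corollary~\ref{cor:sinh}. For $x<0$ the log-score-transform $Q(z) = -\frac{1}{d+1}|x|^{d+1} + cx$ is strictly negative (both summands are negative since $c>0$), so $p(z) = \sigma[Q(z)/\gamma] < \tfrac12$ and hence $1-p(z) \ge \tfrac12$. Substituting this lower bound, writing $u \dfn |x| > 0$, and dividing by $\alpha^2$, the condition \eqref{eq:cor_match_loss} reduces to the purely scalar inequality $(u^d + c)^2 \ge 2\gamma d\, u^{d-1}$, which I must establish for all $u \ge 0$.

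To prove this inequality I would set $w \dfn u^d \ge 0$, so that $u^{d-1} = w^{\,1-1/d}$ with exponent $1-1/d \in [0,1)$, and split on $w$. For $w \le 1$ the elementary bound $w^{\,1-1/d} \le 1$ gives $2\gamma d\, u^{d-1} \le 2\gamma d$, while $(w+c)^2 \ge c^2$; hence $c \ge \sqrt{2\gamma d}$ closes this range, which is the first term of the stated maximum. For $w \ge 1$ the bound $w^{\,1-1/d} \le w$ reduces the claim to the quadratic $w^2 + 2(c-\gamma d)w + c^2 \ge 0$, which I would not attack through its discriminant but through the factorization $w^2 + 2(c-\gamma d)w + c^2 = w\,[\,w + 2(c-\gamma d)\,] + c^2$; since $w\ge1$, the bracket is at least $1 + 2c - 2\gamma d$, which is nonnegative precisely when $c \ge \gamma d - \tfrac12$, the second term of the maximum. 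Taking $c \ge \max\{\sqrt{2\gamma d},\,\gamma d - \tfrac12\}$ thus satisfies \eqref{eq:cor_match_loss} on all of $\mathbb{S}$, and Corollary~\ref{cor:scalar_match_loss} delivers a convex selective matching loss of the form \eqref{eq:matching_loss_sig_func}.

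The main obstacle is the $x<0$ regime, specifically isolating the two constants cleanly. The crux is the factorization trick in the $w\ge1$ case — writing the quadratic as $w\,[\,w+2(c-\gamma d)\,]+c^2$ rather than completing the square — which is exactly what converts the requirement into the simple linear condition $c \ge \gamma d - \tfrac12$; a discriminant argument would instead yield the incomparable bound $c \ge \gamma d/2$. A secondary point to address is the cusp of $q(z)$ at $x=0$ (present at least when $d=1$): there $q'$ may be discontinuous, but as in the proof of Theorem~\ref{theorem:scalar_matching_loss} one checks directly that $h(z)=q(z)\,p(z)$ is nondecreasing across the point $z$ with $x=0$, using continuity of $p(z)$ together with the one-sided limits of the nonnegative $q(z)$, so the convexity conclusion is unaffected by this singular point.
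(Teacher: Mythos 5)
Your proof is correct and takes essentially the same approach as the paper's: verify the pointwise condition of Corollary~\ref{cor:scalar_match_loss}, note the $x\ge 0$ case is immediate, bound $1-p(z)\ge\tfrac12$ for $x<0$, and split at $|x|=1$, with the small-$|x|$ range yielding $c\ge\sqrt{2\gamma d}$ and the large-$|x|$ range yielding $c\ge\gamma d-\tfrac12$. The only (cosmetic) difference is in the large-$|x|$ case, where the paper drops the positive $c^2$ term and evaluates the remaining increasing expression at $|x|=1$, whereas you substitute $w=|x|^d$, bound $w^{1-1/d}\le w$, and factor the quadratic; your explicit treatment of the cusp of $q$ at $x=0$ is a harmless extra care point.
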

\begin{proof}{}
For $x \geq 0$, the condition in \eqref{eq:cor_match_loss} is satisfied because $q'(z)$ is non-negative.  For $x < 0$, the left hand side of \eref{eq:cor_match_loss} is
\beq
\label{eq:poly_proof}
q'(z) + \frac{1}{\gamma} \cdot \left [1 - p(z) \right ] \cdot q^2(z) \geq
\alpha^2 \cdot |x|^{d-1} \cdot \left \{
\frac{1}{2\gamma}\cdot |x|^{d+1} + \frac{c}{\gamma} \cdot |x| +
\frac{c^2}{2\gamma |x|^{d-1}} - d
\right \}
\eeq
where $[1-p(z)] \geq \frac{1}{2}$ (because $Q(z: x=0)=0$, thus we must have $Q(z: x<0)<0$ because its derivative $q(z)>0$) gives the inequality.  For $0\geq x \geq -1$, $c \geq \sqrt{2\gamma d}$ makes the last two terms and thus the whole expression of the right hand side non-negative.  For $x < -1$, omitting the last positive term, and lower bounding the expression by its value at $|x|=1$ guarantees non-negativity as long as $c \geq \gamma d - \frac{1}{2}$, thus concluding the proof.
\end{proof}

\subsection{Invalid Score-Transforms}
While a non-decreasing $q(\cdot)$ is not necessary for a non-decreasing link function, many intuitive score-transform choices cannot produce monotonically non-decreasing links. For example, a monomial $f(z)=|x|^d$, $x=\alpha(z-\beta)$; for any choice of $\alpha > 0$, $\beta$, and $d>0$ gives a logarithmic $Q(\cdot)$, and an inversely decreasing scaling function $q(z)=\alpha d \sign(x)/|x|$, always giving a decreasing $h(z)$ with a non-continuity.  Somewhat counter intuitively, functions such as Softplus or Sigmoid also cannot be used for $f(z)$, as well as the exponent of a Sigmoid.  The latter gives $Q(z)=\sigma(z)$ and $q(z)=d\sigma(z)/dz = \alpha \sigma(z) (1-\sigma(z))$, the logistic density.  This and other standard densities (e.g., Normal, Laplace) which decay for $z\rightarrow \infty$ also cannot be used for $q(z)$, as they lead to decreasing link functions.  Functions that cannot be used for score-transform include:
\begin{itemize}
  \item $f(z)=|z|^d$,
  \item $f(z)=e^{\sigma(z)}$,
  \item $f(z) = e^{\tanh(z)}$,
  \item $f(z) = \log(1+\exp(z))$ (Softplus),
  \item $f(z) = \sigma(z)$ (Sigmoid), and
  \item $f(z) = \sinh(|z|)$.
\end{itemize}
For all these choices, $h(z)$ contains decreasing regions.

\newpage
\setcounter{equation}{0}
\setcounter{figure}{0}
\setcounter{table}{0}
\section{Link Design}
\label{app:link_design}
In this appendix, we show functional examples for the different selective losses described in the paper; for scalar losses in Section~\ref{sec:scalar} (Appendix~\ref{app:link_scalar}), the (scaled) scalar losses in Section~\ref{sec:scalar_amplify} (Appendix~\ref{app:scalar_scaling}), and for multidimensional losses in Section~\ref{sec:multi} (Appendix~\ref{app:multi_scaling}).  Properties and design choices are also discussed.  Finally, we show how some scaling functions that can be used to give sensitivity profiles in the scalar case fail to give the same profile in the multi-class case because the composite Softmax does not agree with the scaling function.

\subsection{Scalar Loss Links}
\label{app:link_scalar}
Table~\ref{tab:links} shows the four general link sensitivity profiles, their basic shape functions, their high sensitivity focus regions, and other functions with similar shapes, for scalar matching losses as described in Section~\ref{sec:scalar}.  Figure~\ref{fig:scalar_all_examples} shows links, link derivatives and losses with different link choices for the basic sensitivity profiles.  Beyond the main sensitivity region, a choice of a link function depends on factors, such as gradient bounding, distribution of the sensitivity over high and low sensitivity ranges, need of completely clipping specific regions, specific weight of losses in certain regions, and others.  For example, piecewise links are important when each region has specific behavior requirements.  Links like the SmeLU or Huber gradient shifted right can give no distinction in the low score, low sensitivity, region, with distinction in the high score high sensitivity region and between that region and the low score one, supplemented by capping of very high scores.  Shifted left, high sensitivity is in the low score region, with no distinction within the high score one. In both cases, loss curves for observed scores in the flat constant low sensitivity link region overlap.  If not shifted, these links differentiate between the middle, low norm, region and both external regions, but not within each of the two external regions.  Piecewise links can be designed to give different sensitivity profiles, including multiple contiguous or non-contiguous high sensitivity regions.  In Figure~\ref{fig:scalar_all_examples}, Sigmoid, hyperbolic tangent, SmeLU and Huber gradients, all give high score, low score, and low norm sensitivities with the proper shifts and scaling.  High norm sensitivity is obtained by $\sinh(\cdot)$, and other functions with a similar shape, including piecewise functions, that, as shown, can be used to design other sensitivity profiles, as well. 
\begin{figure}[htbp]
    \centering
    \makebox[\textwidth][c]{
    \includegraphics[width=1.25\textwidth, height=0.23\textheight]{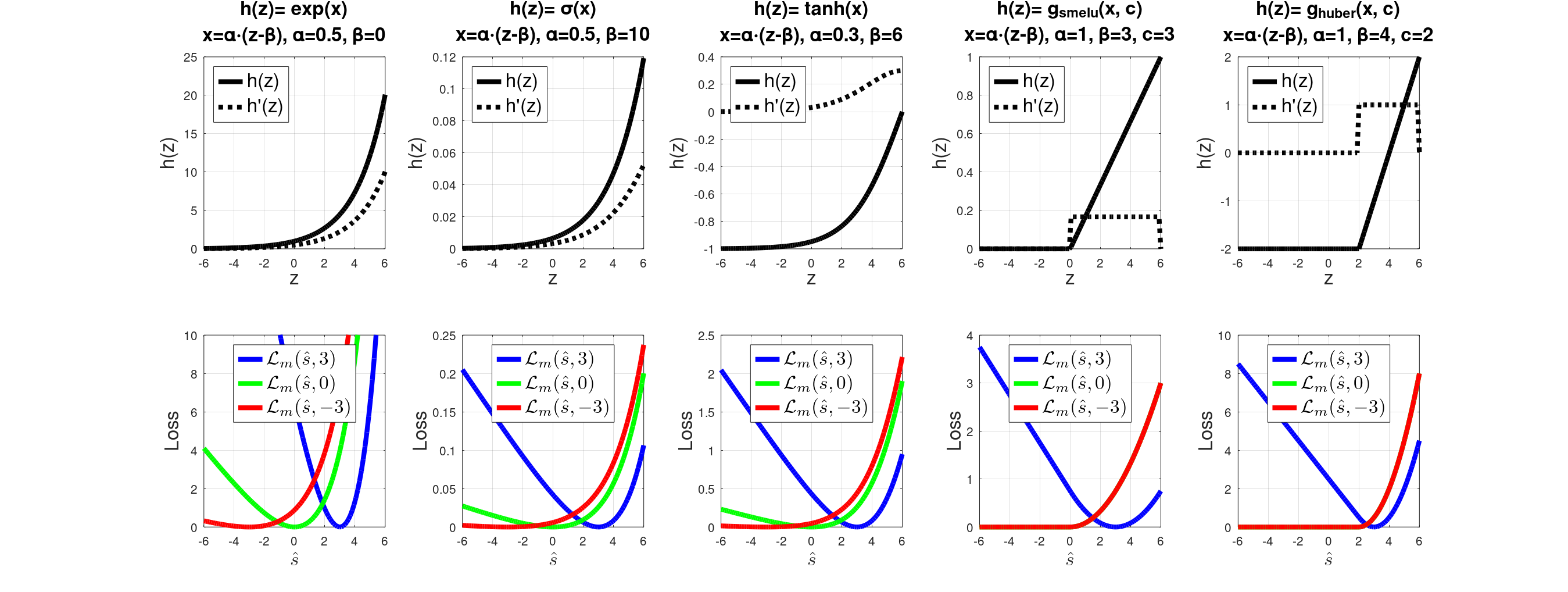}}
    \makebox[\textwidth][c]{
    \includegraphics[width=1.25\textwidth, height=0.23\textheight]{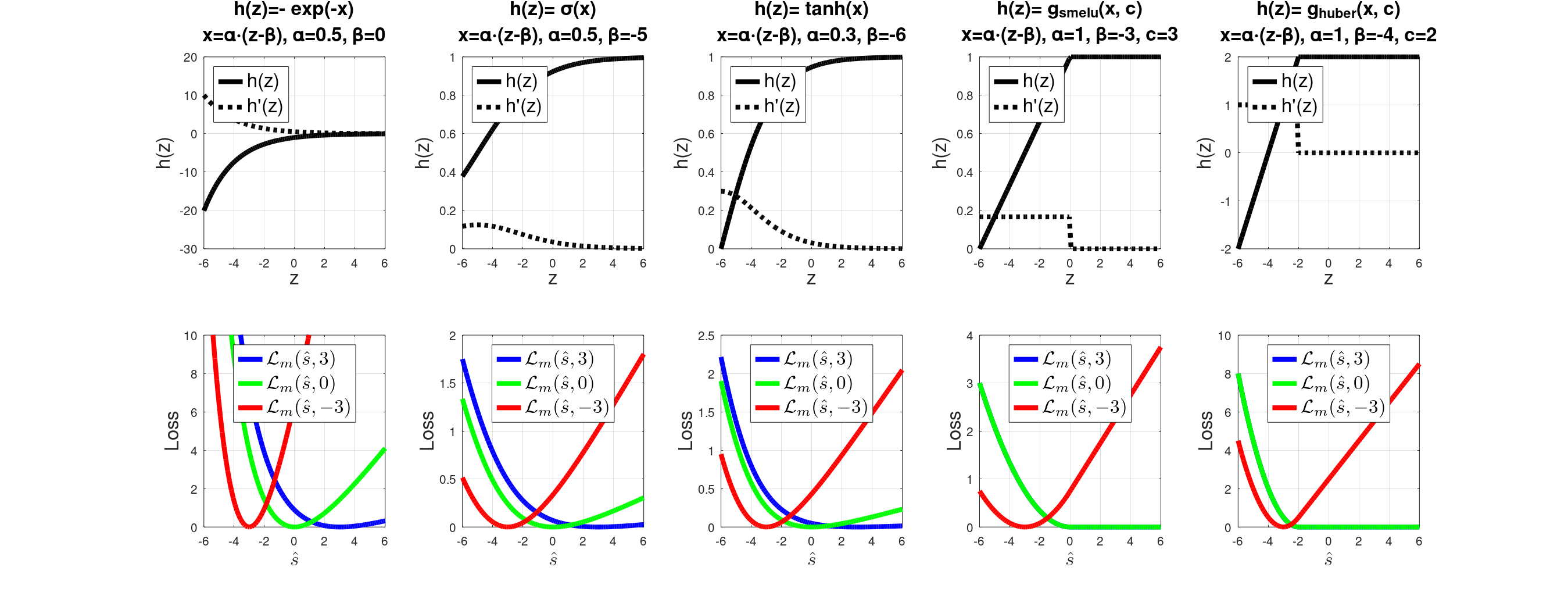}}
    \makebox[\textwidth][c]{
    \includegraphics[width=1.25\textwidth, height=0.23\textheight]{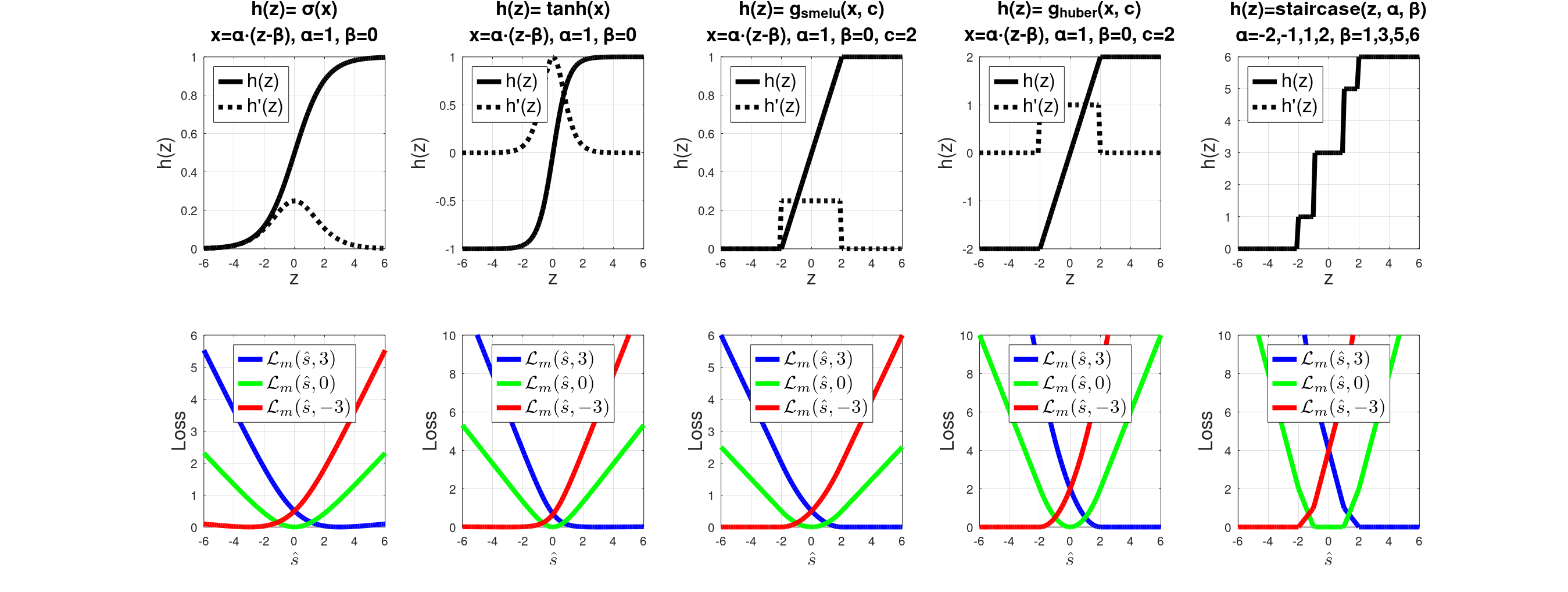}}
    \makebox[\textwidth][c]{
    \includegraphics[width=1.25\textwidth, height=0.23\textheight]{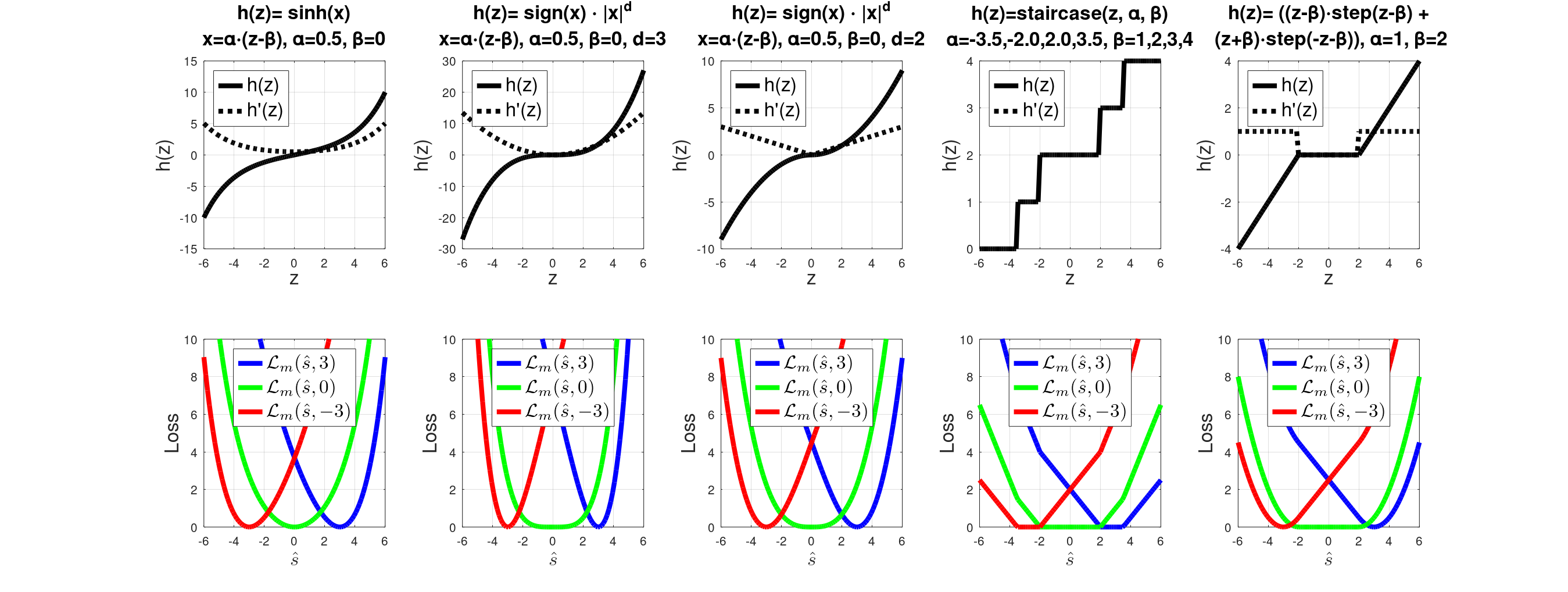}}
    \caption{\small{Links, sensitivities (link derivatives) and scalar selective loss examples. Each pair of rows shows examples of a different sensitivity type: Top: exponential shape high score sensitivity; Second pair: anti-exponential shape low score sensitivity; Third pair: Sigmoid shape low norm sensitivity; Bottom pair: $\sinh(\cdot)$ shape, high norm sensitivity.  For each pair: Top: link $h(z)$ and $h'(z)$ vs.\ $z$; Bottom: selective losses vs.\ $\hat{s}$, for $s\in\{-3,0,3\}$ ($-3$ red, $0$ green, $3$ blue).}}
    \label{fig:scalar_all_examples}
\end{figure}

\begin{table*}[htbp]
 \caption{Link shapes and sensitivity regions - scalar losses}
 \label{tab:links}
 \centering
 \begin{tabular} {|p{1.8cm}|c|c|p{6.8cm}|}
  \toprule
  {\bf Link shape} & {\bf Link function} & {\bf Sensitivity} & {\bf Similar shapes} \\
  \midrule
  Sigmoid & $h(z) = \sigma(z)$ & 
  low norms &
  \vspace{-0.25cm}
  \begin{itemize}
    \item $\sign(z)\cdot|z|^d;~0<d<1$
    \item SmeLU gradient, Huber gradient, sign, step 
    \item $\sin(z)$; $z \in [-\pi/2,\pi/2]$, $\text{arctan}(z)$
    \item $\text{arcsinh}(z)$, $\tanh(z)$
  \end{itemize}
  \nointerlineskip
  \\
  \midrule
  Hyperbolic sine & $h(z) = \sinh(z)$ &
  high norms &
  \vspace{-.25cm}
  \begin{itemize}
   \item $\sign(z)\cdot|z|^d;~d>1$;
   $\sign(z) \cdot \log (1 +|z|)$
   \item $\sigma^{-1}(z)=\log(z/(1-z))$; $z\in [0,1]$ 
   \item $\arcsin(z)$, $\text{arctanh}(z)$; $z\in(-1,1)$
   \item $\tan(z)$; $z\in(-\pi/2,\pi/2)$
  \end{itemize}
  \nointerlineskip
  \\
  \midrule
  Exponential & $h(z) = e^{z}$ &
  high scores &
  \vspace{-.25cm}
  \begin{itemize}
    \item right-shifted Sigmoid; left-shifted $\sinh(\cdot)$
    \item $-\log(1-z)$, $z\in[0,1]$
    \item polynomial $z^d$, $z>0$
  \end{itemize}
  \nointerlineskip
  \\
  \midrule
  Anti-exponential & $h(z) = -e^{-z}$ &
  low scores &
  \vspace{-.25cm}
  \begin{itemize}
   \item left-shifted Sigmoid; right-shifted $\sinh(\cdot)$ 
   \item $\log(z)$ for $z>0$.
  \end{itemize}
  \nointerlineskip
  \\
  \bottomrule
 \end{tabular}
\end{table*}

\subsection{Amplified Scalar Loss Links}
\label{app:scalar_scaling}
\begin{figure}[htbp]
    \centering
    \makebox[\textwidth][c]{
    \includegraphics[width=1.25\textwidth, height=0.23\textheight]{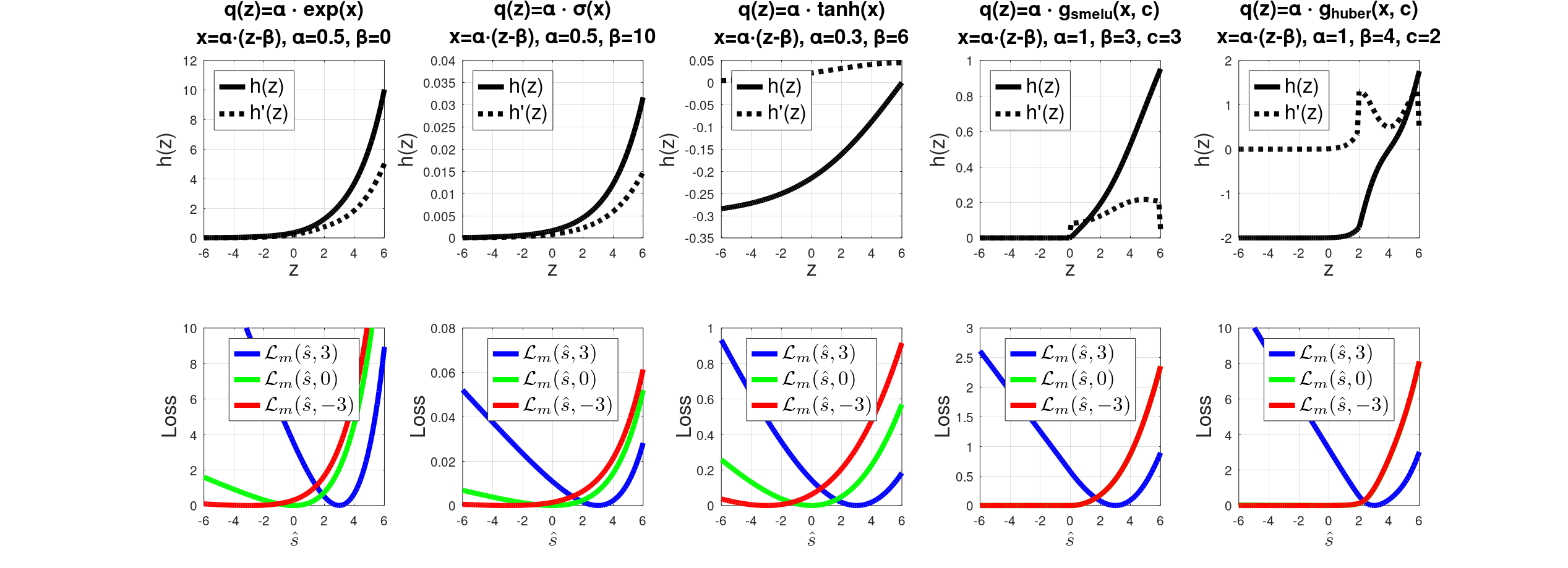}}
    \makebox[\textwidth][c]{
    \includegraphics[width=1.25\textwidth, height=0.23\textheight]{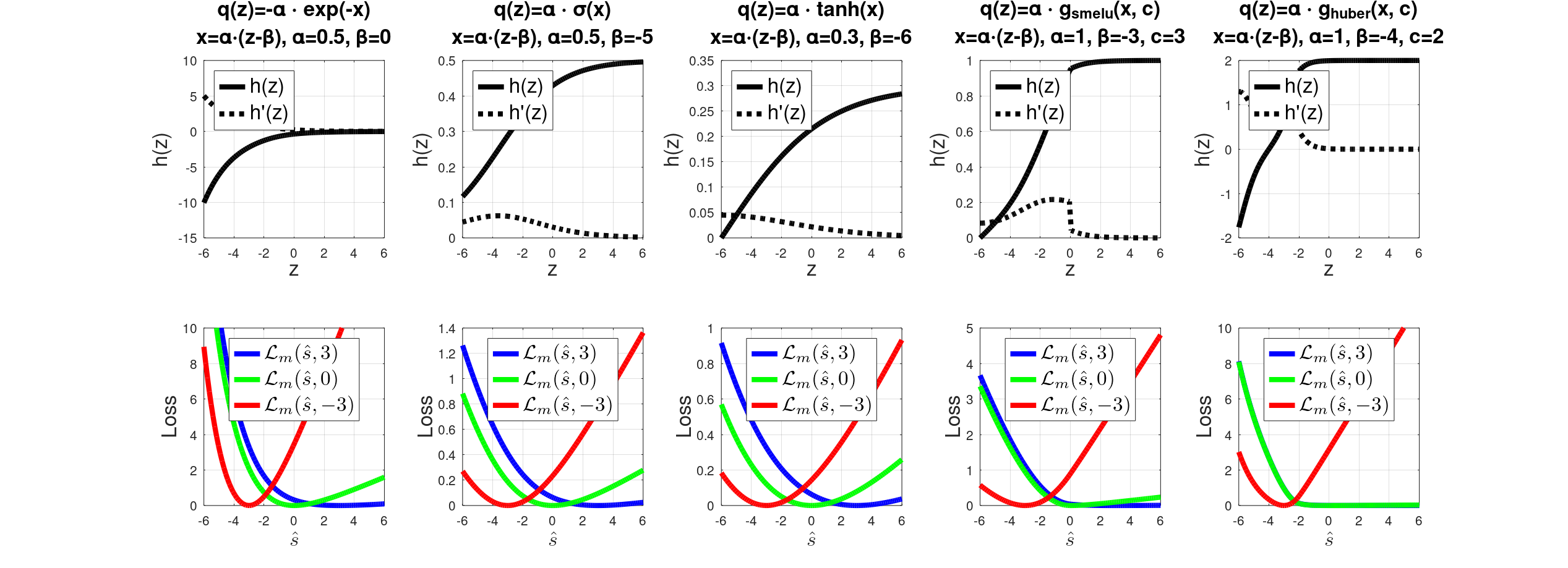}}
    \makebox[\textwidth][c]{
    \includegraphics[width=1.25\textwidth, height=0.23\textheight]{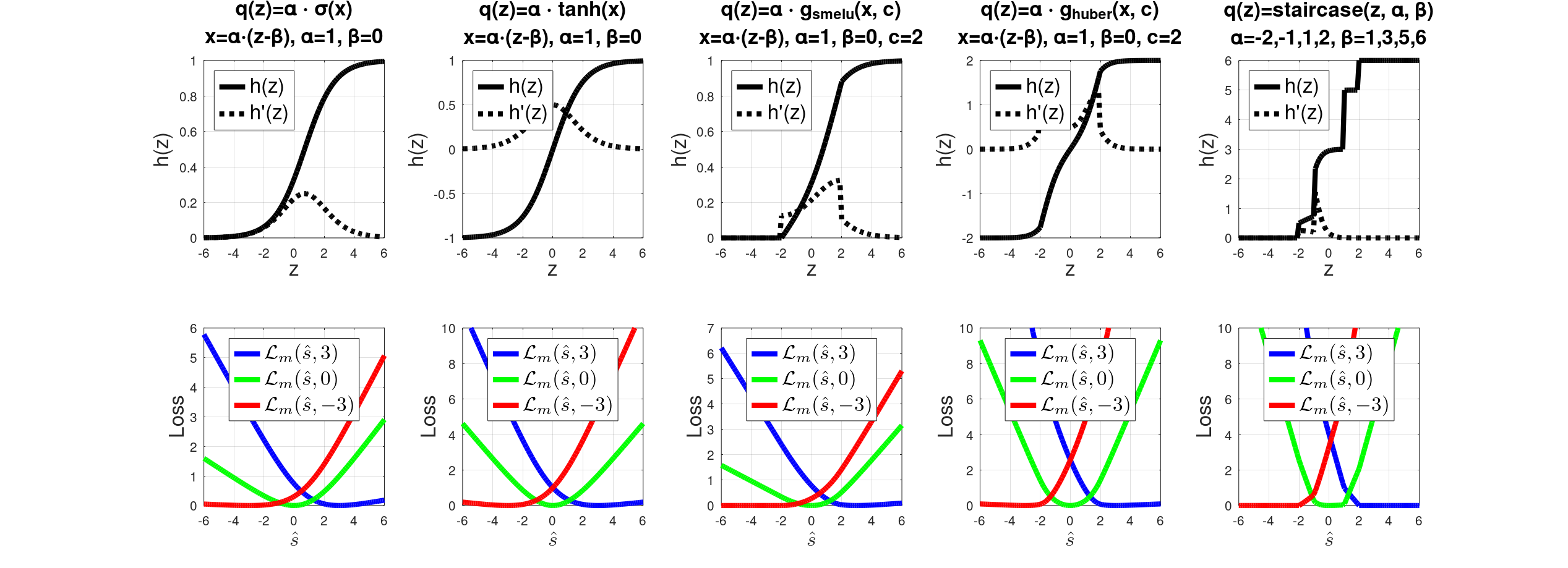}}
    \makebox[\textwidth][c]{
    \includegraphics[width=1.25\textwidth, height=0.23\textheight]{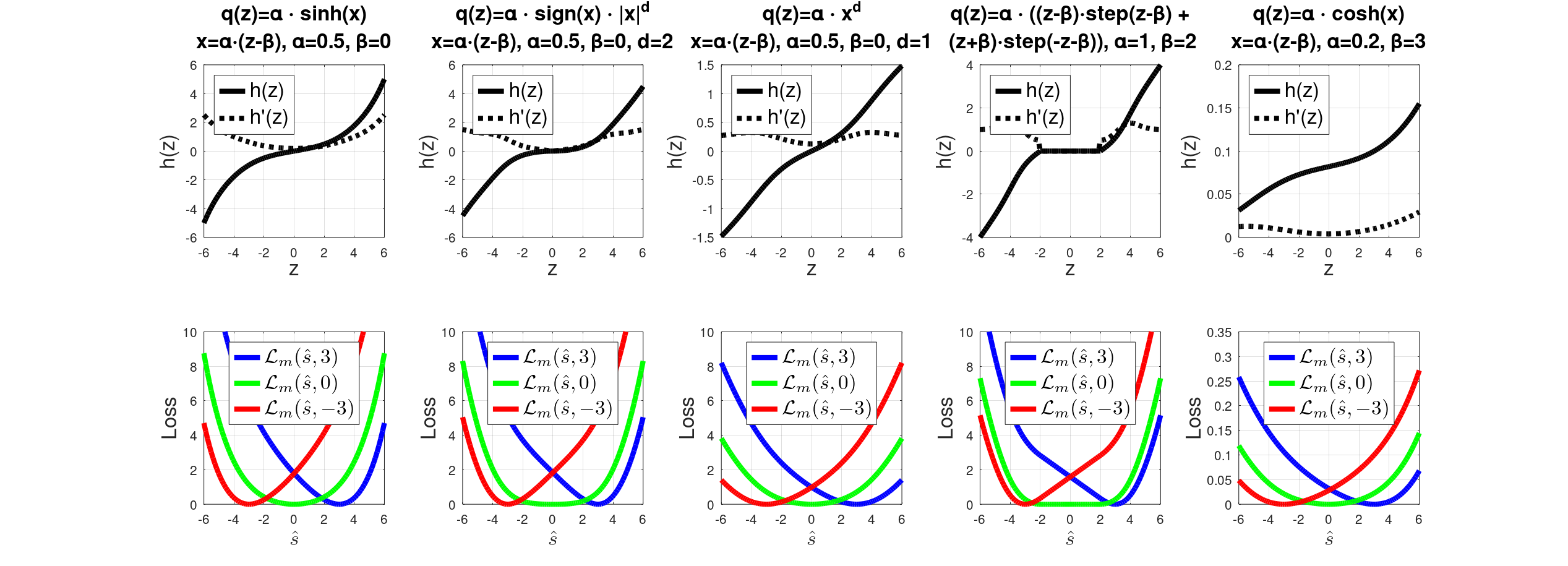}}
    \caption{\small{Links, link derivatives, and composite Sigmoid induced scalar selective loss (\eqref{eq:matching_loss_sig_func}) examples. Each pair of rows shows examples of a different sensitivity type: Top: exponential shape high score sensitivity; Second pair: anti-exponential shape low score sensitivity; Third pair: Sigmoid shape low norm sensitivity; Bottom pair: $\sinh(\cdot)$ shape, high norm sensitivity.  For each pair: Top: link $h(z)$ and $h'(z)$ vs.\ $z$; Bottom: selective losses vs.\ $\hat{s}$, for $s\in\{-3,0,3\}$ ($-3$ red, $0$ green, $3$ blue).}}
    \label{fig:scalar_scale_examples}
\end{figure}

Figure~\ref{fig:scalar_scale_examples} shows links, link derivatives, and losses with different link choices for scalar losses similarly to Figure~\ref{fig:scalar_all_examples}, except that instead of using the functional shape for a link, it is used for a scaling function. It produces a composite Sigmoid, \eqref{eq:composite_sigmoid}, on which the selective matching loss of \eref{eq:matching_loss_sig_func} is applied.  Generally, a functional form applied to a scaling function produces similar sensitivity profile to that functional form applied as a link.  However, the composite Sigmoid link factor perturbs link and link derivative curves, and breaks symmetries that are observed with the functional form as the link.  The perturbations are clearly observed on link and sensitivity curves of piecewise functions.  Loss symmetry breaks for low norm sensitivity with unshifted Sigmoid shapes, but remains for unshifted asymmetric functions, such as $\tanh(\cdot)$ and Huber gradient.  The use of compositions gives rise to additional functional forms that give desired sensitivity profiles.  Specifically, high norm sensitivity can be obtained with a linear $q(z) \propto z$ (middle graphs in bottom pair of rows for $\sinh(\cdot)$ shape scaling).  This results from the composite Sigmoid factor of the resulting link.  From Corollary~\ref{cor:sinh}, with $q(z) \propto \cosh(x)$, it is also possible to obtain high norm sensitivity (rightmost graphs of bottom pair of rows), a useful multi-class result, as discussed in the next subsection.

\subsection{Multi-Class Loss Links}
\label{app:multi_scaling}
Table~\ref{tab:scalings} shows the choices for multi-class losses with different sensitivity profiles.  The design of multi-class selective losses must ensure that the composite Softmax probability $p_k(\rvz)$ and the scaling function $q(z_k)$ agree and do not negate each other, simultaneously providing both required region and ranking sensitivity profiles.  As pointed out in Section~\ref{sec:multi}, scaling functions that give low-norm sensitivity in the scalar case no longer give such sensitivity for multi-class, because of different emphasis of the composite Softmax function $p_k(\rvz)$.

Figure~\ref{fig:multi_select_exp} gives examples of scaling, composite Softmax, links (and their derivatives), and selective losses for exponential shaped curves which give multi-class high-score sensitivity.  All curves for class $k$ are projections for which $\hat{s}_j = s_j$ for all $j\neq k$.  In most cases, the lower sensitivity curves for $s=-3$ and $s=0$ overlap (the red curve for $s=-3$ overlaps the green one for $s=0$).  High score sensitivity is achieved by variants of an exponential curve, which include shifting a Sigmoid shape right and a $\sinh(\cdot)$ shape left.  However, a shifted asymmetric scaling, such as $\tanh(\cdot)$, that gives high score sensitivity when shifted right in the scalar case, does not give high score sensitivity for multi-classes.  This is because the exponential segment of the curve gives a decreasing $Q(z)$, which leads to a decreasing composite Softmax.  The decreasing Softmax negates the high score sensitivity of the exponential scaling $q(z_k)$.

\begin{figure}[htbp]
    \centering
    \makebox[\textwidth][c]{
    \includegraphics[width=1.25\textwidth, height=0.4\textheight]{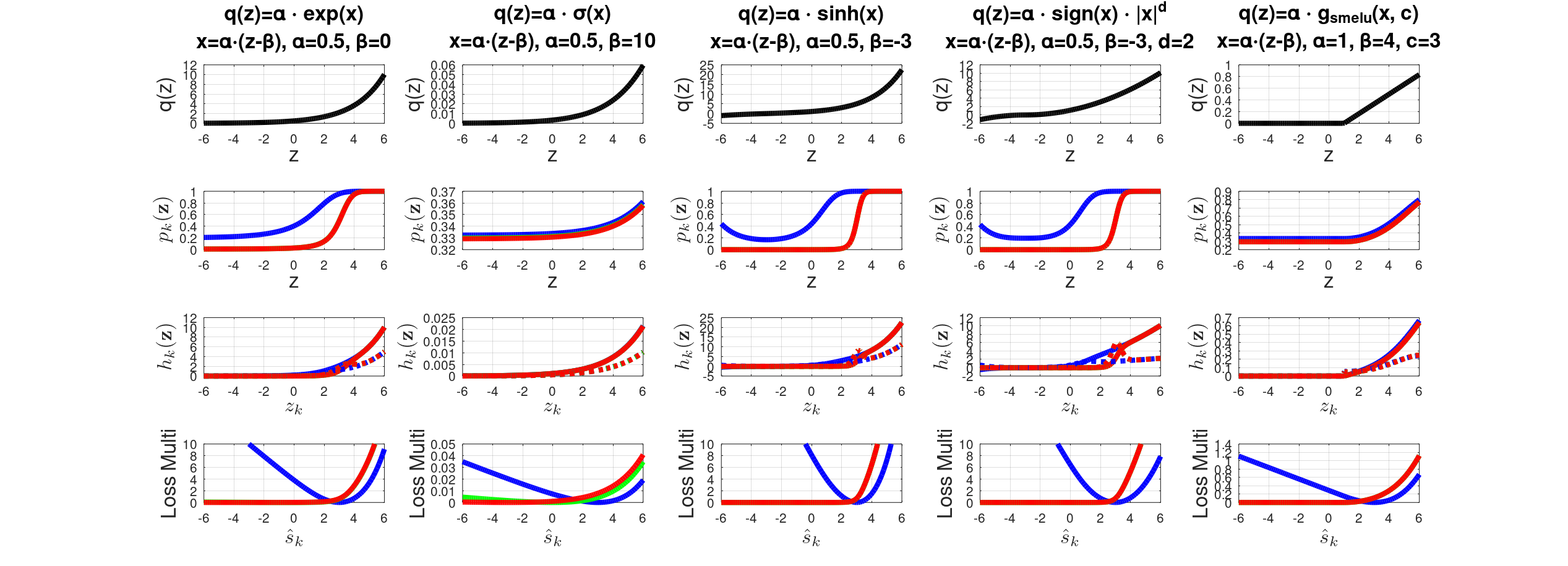}}
    \caption{\small{Scaling functions (top row), composite Softmax functions (second row), links and their derivatives (in dashed curves) (third row), and multi-class selective loss projections (bottom row) vs.\ scores for exponential shape, high score sensitivity scaling functions for observed scores $s_k \in \{-3, 0, 3\}$ ($-3$ red, $0$ green, $3$ blue).  All multi-class curves are projections to dimension $k$, where for every $j\neq k$, $\hat{s}_j=s_j$.  Lower sensitivity curves for $s=-3$ (red) and $s=0$ (green) mostly overlap.}}
    \label{fig:multi_select_exp}
\end{figure}

\begin{figure}[htbp]
    \centering
    \makebox[\textwidth][c]{
    \includegraphics[width=1.25\textwidth, height=0.4\textheight]{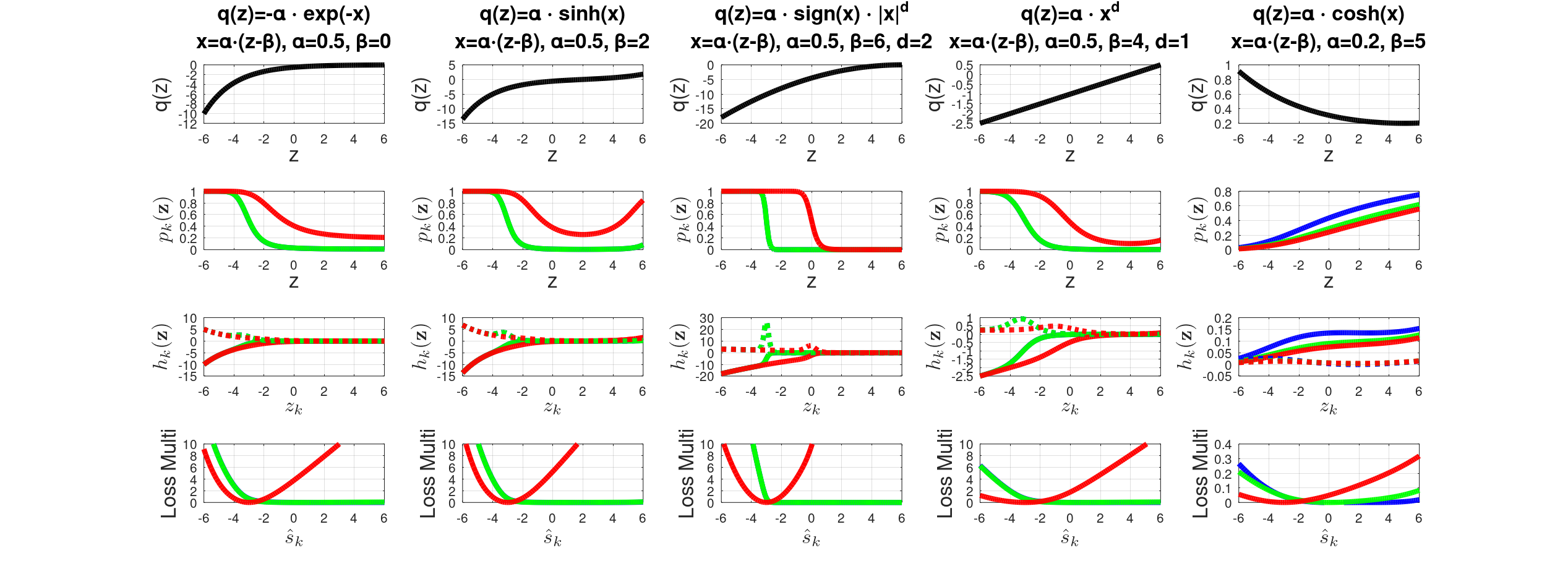}}
    \caption{\small{Scaling functions (top row), composite Softmax functions (second row), links and their derivatives (in dashed curves) (third row), and multi-class selective loss projections (bottom row) vs.\ scores for anti-exponential shape, low score sensitivity scaling functions for observed scores $s_k \in \{-3, 0, 3\}$ ($-3$ red, $0$ green, $3$ blue).  All multi-class curves are projections to dimension $k$, where for every $j\neq k$, $\hat{s}_j=s_j$.  Lower sensitivity curves for $s=3$ (blue) and $s=0$ (green) mostly overlap.}}
    \label{fig:multi_select_negexp}
\end{figure}

\begin{figure}[htbp]
    \centering
    \makebox[\textwidth][c]{
    \includegraphics[width=1.25\textwidth, height=0.4\textheight]{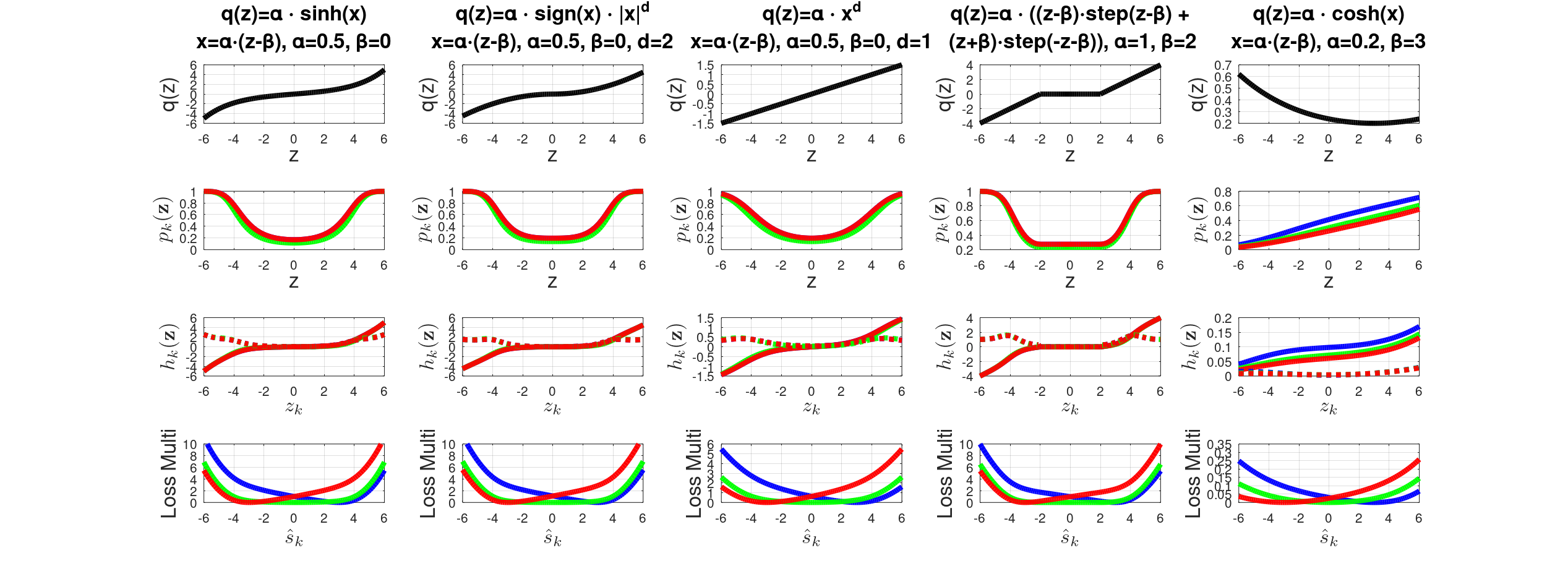}}
    \caption{\small{Scaling functions (top row), composite Softmax functions (second row), links and their derivatives (in dashed curves) (third row), and multi-class selective loss projections (bottom row) vs.\ scores for hyperbolic sine shape, high norm sensitivity scaling functions for observed scores $s_k \in \{-3, 0, 3\}$ ($-3$ red, $0$ green, $3$ blue).  All multi-class curves are projections to dimension $k$, where for every $j\neq k$, $\hat{s}_j=s_j$.  Higher sensitivity curves for $s=3$ (blue) and $s=-3$ (red) (and sometimes all curves) overlap.}}
    \label{fig:multi_select_sinh}
\end{figure}

Figure~\ref{fig:multi_select_negexp} gives graphs of the same functions for multi-class anti-exponential shaped links with low-score sensitivity.  Anti-exponential and right shifted $\sinh(\cdot)$ shapes can be used for $q(z)$ to give the desired behavior.  Left shifted Sigmoid or $\tanh(\cdot)$, however, do not produce the desired multi-class sensitivity because they give increasing $p_k(\rvz)$, which negates the low-score sensitivity of the scaling function.  As illustrated in the figure, linear and $\cosh(\cdot)$ scaling functions can be shifted and scaled to also give low score sensitivity.  The latter also gives increasing composite Softmax values $p_k(\rvz)$, which can be used for monotone mapping of scores to probabilities.  Figure~\ref{fig:multi_select_sinh} shows scaling functions and their effects on composite Softmax, links and multi-class losses for high norm sensitivity.  The same scaling functions shown in Figure~\ref{fig:scalar_scale_examples} for scalar high norm sensitivity are shown to have the same behavior in the multi-class class case.  Similarly to low score sensitivity, linear and $\cosh(\cdot)$ scaling functions can be used to give high norm sensitivity, where the latter can also be used for bijective monotone increasing mapping of scores to probabilities. 

Figure~\ref{fig:select_invalid} summarizes selections of scaling functions $q(z)$ that give the desired sensitivities in the scalar case, but not for multi-class. A Sigmoid, whether shifted or not, gives high score multi-class sensitivity.  For the scalar case, it gives low norm sensitivity when unshifted, and low score sensitivity when shifted left.  A composite Softmax with a hyperbolic tangent, giving the \emph{SoftCosh\/} function, emphasizes Softmax probabilities in reverse to the scaling function.  Scalar low norm sensitivity turns into multi-class high norm sensitivity, while high score scalar sensitivity turns into low score multi-class sensitivity and vice versa.

\begin{figure}[t]
 \centering
 \makebox[\textwidth][c]{
 \includegraphics[width=1.25\textwidth]{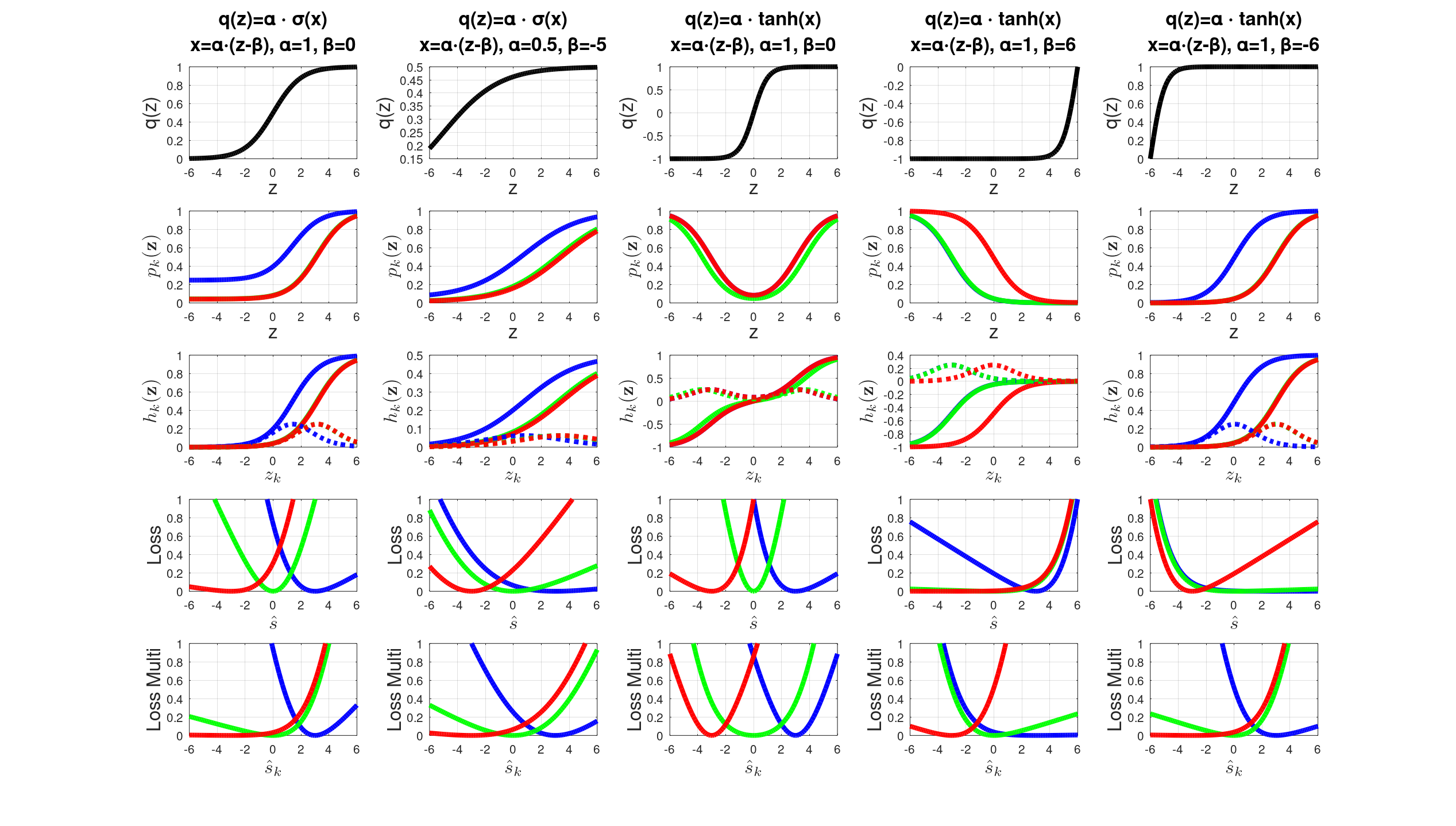}}
 \caption{\small{Scaling functions (top row), composite Softmax functions (second row), links and their derivatives (in dashed curves) (third row), scalar selective losses (fourth row) and multi-class selective loss projections (bottom row) vs.\ scores for scaling functions that give different scalar and multi-class sensitivities for observed scores $s_k \in \{-3, 0, 3\}$ ($-3$ red, $0$ green, $3$ blue).  All multi-class curves are projections to dimension $k$, where for every $j\neq k$, $\hat{s}_j=s_j$.}}
 \label{fig:select_invalid}
\end{figure}

\begin{table*}[htbp]
 \caption{Multi-class selective losses, properties and conditions}
 \label{tab:mc_requirements}
 \centering
 \begin{tabular} {|p{2.1cm}|p{3.2cm}|p{3.3cm}|p{3.3cm}|}
 \toprule
 {\bf Loss Property} & {\bf Basic Conditions} & {\bf Satisfying} & {\bf Failing} \\
 \midrule
 sensitivity
 \begin{itemize}
  \item {\bf low norm}
 \end{itemize} 
 &
 at {\bf low norm}
 \begin{itemize}
  \item $h(z_k)$ - high slope
 \end{itemize}
 \nointerlineskip
 &
 {\bf decomposed loss}
 \begin{itemize}
  \item $h(z_k)=\sigma(z_k)$ or
  \item Sigmoid shape
 \end{itemize} 
 \nointerlineskip
 &
 \vspace{-0.25cm}
 \begin{itemize}
     \item standard Softmax 
     \item square loss
     \item composite Softmax
 \end{itemize}
 \nointerlineskip \\
 \midrule
 sensitivity
 \begin{itemize}
  \item {\bf high norm}
 \end{itemize} 
 &
 at {\bf high norm}
 \begin{itemize}
  \item $q(z)$ - high slope 
  \item {\bf and} $p_k(\rvz)$ - max
 \end{itemize} 
 \nointerlineskip &
 composite Softmax
 \begin{itemize}
     \item $q(z)=\sinh(z)$
     \item $\sinh(z)$ shape $q(z)$
 \end{itemize} 
 \nointerlineskip &
 \vspace{-0.25cm}
 \begin{itemize}
     \item standard Softmax
     \item square loss
     \item decomposed
     $h(z_k)=\sinh(z_k)$
 \end{itemize} 
 \nointerlineskip \\
 \midrule
 sensitivity
 \begin{itemize}
  \item {\bf high score}
 \end{itemize}  
 & 
 at {\bf high score}
 \begin{itemize}
  \item $q(z)$ - high slope 
  \item {\bf and} $p_k(\rvz)$ - max
 \end{itemize} 
 \nointerlineskip &
 composite Softmax
 \begin{itemize}
     \item $q(z)=e^z$
     \item exp shape $q(z)~~~~~$
     $q(z) = \sigma(z-\beta)$
     $q(z) = \sinh(z+\beta)$
 \end{itemize}
 \nointerlineskip &
 \vspace{-0.25cm}
 \begin{itemize}
     \item standard Softmax
     \item square loss
     \item decomposed
     $h(z_k)=e^{z_k}$
     \item $q(z)=\tanh(z-\beta)$
     similar shapes
 \end{itemize} 
 \nointerlineskip \\
 \midrule
 sensitivity
 \begin{itemize}
  \item {\bf low score}
 \end{itemize}  
 & 
 at {\bf low scores}
 \begin{itemize}
  \item $q(z)$ - high slope 
  \item {\bf and} $p_k(\rvz)$ - max
 \end{itemize} 
 \nointerlineskip &
 composite Softmax
 \begin{itemize}
     \item $q(z)=-e^{-z}$
     \item anti-exp shape $q(z)$
     $q(z)=\sinh(z-\beta)$
 \end{itemize}
 \nointerlineskip &
 \vspace{-0.25cm}
 \begin{itemize}
     \item standard Softmax
     \item square loss
     \item decomposed
     $h(z_k)=-e^{-{z_k}}$
     \item $q(z)=\sigma(z+\beta)$
     \item $q(z)=\tanh(z+\beta)$
     similar shapes
 \end{itemize} 
 \nointerlineskip \\
 \midrule
  sensitivity
  \begin{itemize}
      \item {\bf ranking}
  \end{itemize}
  \nointerlineskip
  &
  composite Softmax 
  \begin{itemize}
      \item aligned $q(z)$, $p_k(\rvz)$ 
  \end{itemize}
  \nointerlineskip
  & 
  losses with
  \begin{itemize}
      \item composite Softmax 
  \end{itemize} 
  \nointerlineskip
  & 
  \vspace{-0.25cm}
  \begin{itemize}
      \item decomposed losses
      \item other losses
  \end{itemize}
  \nointerlineskip
  \\
 \midrule
 clipping
 \begin{itemize}
     \item low scores
 \end{itemize}
 \nointerlineskip
 &
 \vspace{-0.25cm}
 $q(z)=0$ for
     \begin{itemize}
         \item low scores
     \end{itemize}
 \nointerlineskip &
 \vspace{-0.25cm}
 \begin{itemize}
     \item $Q(z) = \text{ReLU}(z)$ 
     \item $Q(z) = \text{SmeLU(z)}$ 
 \end{itemize}
 \nointerlineskip &
 \vspace{-0.25cm}
 \begin{itemize}
     \item other scalings
 \end{itemize}
 \nointerlineskip
 \\
 \midrule
 Softmax prob.
 \begin{itemize}
     \item monotone increasing
 \end{itemize}
 \nointerlineskip
 & 
 monotone $p_k(\rvz)$
 
 &
 \vspace{-0.25cm}
 \begin{itemize}
     \item exp shape $q(z)$
     \item $\sigma(z)$ shape $q(z)$
     \item $q(z) = \cosh(z)$
 \end{itemize} 
 \nointerlineskip & 
 \vspace{-0.25cm}
 \begin{itemize}
     \item asymmetric $q(z)$
     \newline
     except some shifts
 \end{itemize}
 \nointerlineskip
 \\
 \midrule
 Loss symmetry 
 \begin{itemize}
     \item low norms
 \end{itemize}
 \nointerlineskip
 &
 \vspace{-0.25cm}
 \begin{itemize}
   \item decomposed $h_k(z)$
   \begin{itemize}
       \item asym. at $z=0$
   \end{itemize}
   \item asymmetric $q(z)$
 \end{itemize}
 \nointerlineskip
  &
  \vspace{-0.25cm}
 \begin{itemize}
     \item $q(z) = \tanh(z)$
     \item $q(z) = \sinh(z)$
     \item similar shapes
 \end{itemize}
 \nointerlineskip
 & 
 \vspace{-0.25cm}
 \begin{itemize}
     \item $q(z)=\sigma(z)$
     \item other link shapes
 \end{itemize}
 \nointerlineskip
 \\
  \bottomrule
 \end{tabular}
\end{table*}

Table~\ref{tab:mc_requirements} extends Table~\ref{tab:scalings}.  It lists sensitivity requirements, as well as other properties of selective multi-class losses.  It summarizes examples in this section, and outlines other properties of selective losses, the conditions that are required for these properties, and which conditions lead to failure to achieve these properties.  We review the entries in Table~\ref{tab:mc_requirements}: Standard Softmax based and square losses fail to give the four sensitivity profiles.  While standard Softmax distinguishes between high scores and other scores, it fails to distinguish between scores in the high score region.  Composite Softmax based selective loss cannot have low norm sensitivity because the composition factor of the link negates the scaling function.  Decomposed scalar losses \eref{eq:diag_matching_loss} are thus necessary for low norm sensitivity.  However, decomposed losses even with the correct scaling function fail to give constellation shift invariant ranking sensitivities for the other types of region sensitivity.  As shown in Figure~\ref{fig:select_invalid}, in addition, certain shifts of Sigmoid and $\tanh(\cdot)$ shaped functions are negated by the composite Softmax function, and fail to give some of the sensitivity profiles.  Note that we assume $\beta > 0$ in Table~\ref{tab:mc_requirements} to illustrate some of the shifts.  When the scaling function $q(z)$ and the composite Softmax probabilities $p_k(\rvz)$ are aligned, ranking sensitivity as demonstrated in Figures~\ref{fig:multi_class_regions} and~\ref{fig:relative_sensitivity}, which is constellation shift invariant, is achieved.  Piecewise log score-transforms, as ReLU and SmeLU, allow clipping low score low sensitivity regions, treating all scores in those regions as equal.  (Similar variants with other $0$ regions can give the same behavior in other regions.)  Other than some scaling functions that give high score sensitivity, only a limited set of functions $q(z)$ give monotonic increasing mappings from scores to composite Softmax probabilities, and usually have non-convex log-score-transforms $Q(z)$, and require conditions as those in Corollary~\ref{cor:sinh} to hold for existence of a convex selective loss.  Asymmetric $q(z)$ give non injective composite Softmax functions.  Low norm loss symmetry requires an asymmetric scaling function. Finally, with $\gamma=1$, the log-score-transform is vertical shift invariant, while $q(z)$ exponentially scales up higher scores with vertical upwards shifts. 

\setcounter{equation}{0}
\setcounter{figure}{0}
\setcounter{table}{0}
\section{Stochasticity}
\label{app:stochastic}
Proposition~\ref{prop:proper} demonstrates that selective losses give proper scoring in the sense that the link of a predicted score equals the expectation of the links of the observed scores.  For a deterministic example score, a selective loss is minimized at $\hat{s}=s$.  However, if observed scores are stochastic, and the model's role is to predict an expectation of the score, selective losses give score predictions biased towards the higher sensitivity score region.  The following examples illustrate how for such settings, low sensitivity regions are discounted in favor of high sensitivity ones.

Figure~\ref{fig:match_stochastic} demonstrates prediction of expected scores for a high score sensitivity exponential link $h(z)=e^{\alpha z}$.  On the left, the model predicts probabilities as scores.  A population of scores with a single large score and many small ones is observed.  The low score is $0.05$, and the large one is greater and grows with the $x$-axis.  As it increases, the number of examples with the small score increases (from $1$ to $18$), to keep the mean constant at $0.1$.  An exponential link gives an estimate that increases with the larger score discounting the increasing number of low score examples.  The higher is the scaling factor (or temperature) $\alpha$, the larger is the deviation from the mean.  If the small score is unimportant, this behavior is desired as it identifies important large scores, even in increasing presence of nuisance low ones.  On the right, the mean is kept fixed between an increasing score and a decreasing one.  The $x$-axis denotes the deviation from the mean to both sides.  As it increases, the predicted score becomes more dominated by the greater score.  Higher temperatures $\alpha$ increase this bias.  This bias remains equal with different means and different shifts $\beta$ in the link function.  While shifts do not change the expected minimum, they do control the steepness of the loss (not seen in the figure).
\begin{figure}[t]
    \centering
    {\includegraphics[width=0.495\textwidth]{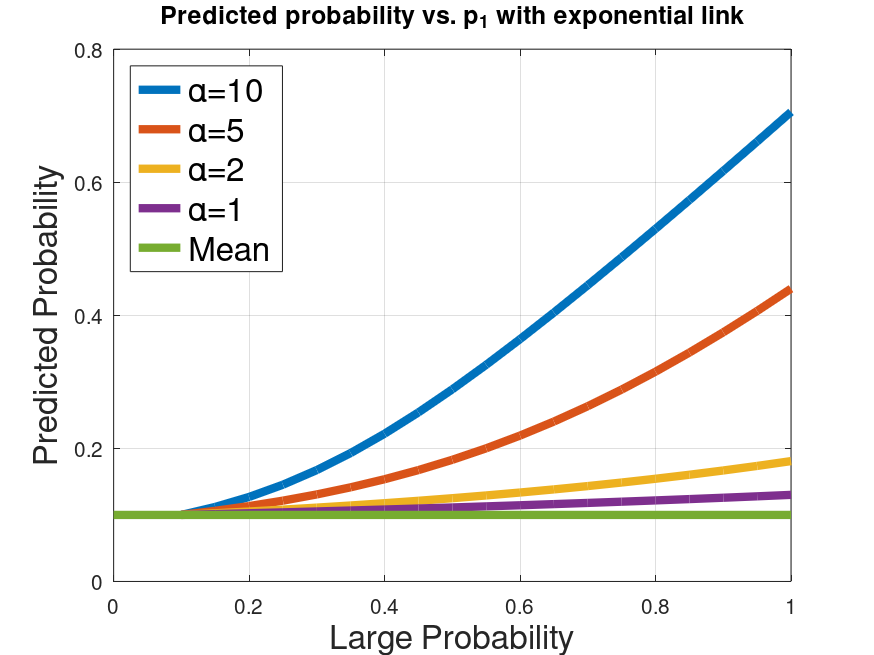}}
    {\includegraphics[width=0.495\textwidth]{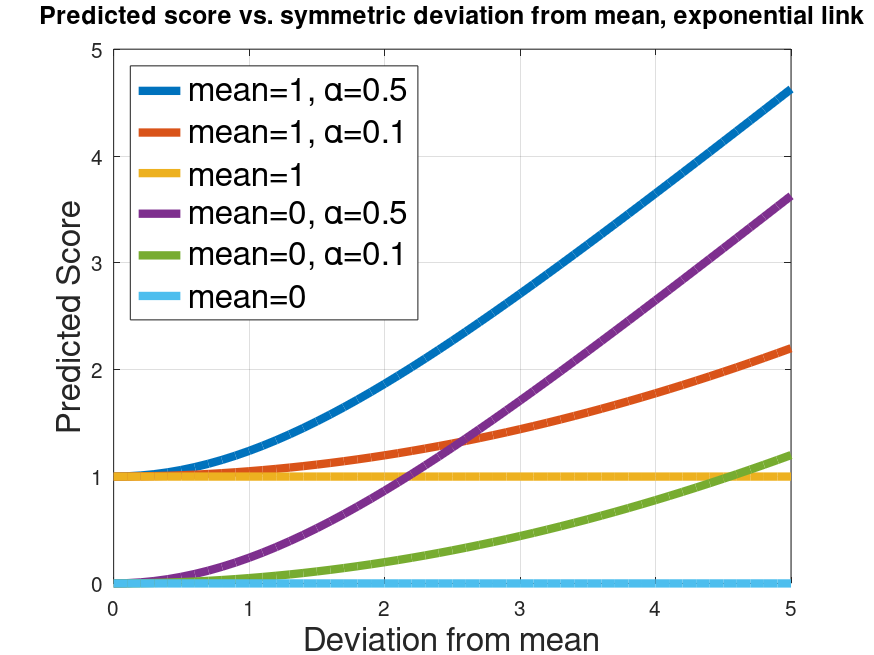}}
    \caption{\small{Predicted expected score with an exponential link $h(z)=e^{\alpha z}$ selective scalar matching loss for two score values with the same expected mean score.  Left: Optimal prediction with the matching loss for one large probability score and $k$ small scores as function of the large probability score with a fixed mean.  Right: Optimal prediction for one large score and one small score with increasing deviation from the mean as function of the deviation.  In both graphs, different temperatures $\alpha$ of the exponential link are shown.  Two different mean scores are shown on the right.}}
    \label{fig:match_stochastic}
\end{figure}

\end{document}